\newtheorem{theorem}{Theorem}
\newtheorem{lemma}{Lemma}
\newtheorem{corollary}{Corollary}
\newcommand{\brows}[1]{%
	\begin{bmatrix}
		\begin{array}{@{\protect\rotvert\;}c@{\;\protect\rotvert}}
			#1
		\end{array}
	\end{bmatrix}
}
\newcommand{\rotvert}{\rotatebox[origin=c]{90}{$\vert$}}
\newcommand{\rowsvdots}{\multicolumn{1}{@{}c@{}}{\vdots}}
\title{Batch Normalization Is Blind to the First and Second Derivatives of the Loss}
\author{%
	Zhanpeng Zhou$^{a*}$,
	Wen Shen$^{a*}$,
	Huixin Chen$^{a}$\thanks{Equal contribution.} ,
	Ling Tang$^{a}$,
	Quanshi Zhang$^{a}$\thanks{Quanshi Zhang is the corresponding author. \texttt{zqs1022@sjtu.edu.cn}. He is with the Department of Computer Science and Engineering,
		the John Hopcroft Center and the MoE Key Lab of Artificial Intelligence, AI Institute, at the Shanghai Jiao Tong University, China.
	}
	\vspace{5pt}\\
	$^{a}$ Shanghai Jiao Tong University
}
\begin{document}

	\maketitle

	\begin{abstract}
		In this paper, we prove the effects of the BN operation on the back-propagation of the first and second derivatives of the loss. When we do the Taylor series expansion of the loss function, we prove that the BN operation will block the influence of the first-order term and most influence of the second-order term of the loss. We also find that such a problem is caused by the standardization phase of the BN operation. Experimental results have verified our theoretical conclusions, and we have found that the BN operation significantly affects feature representations in specific tasks, where losses of different samples share similar analytic formulas\footnote[1]{We will release all the codes and datasets when this paper is accepted.}.
		
		
	\end{abstract}

	\section{Introduction}\label{sec:intro}
	
	Batch normalization (BN) \citep{ioffe2015batch} plays a crucial role in deep learning and has achieved great success. However, in recent years, people gradually found some shortcomings of the BN operation in some specific applications, including the incompatibility with the dropout operation \cite{XiangLi2019UnderstandingTD}, hurting the classification accuracy in adversarial training
	\cite{xie2020adversarial,galloway2019batch}, decreasing the quality of images generated by generative models \cite{TimSalimans2016ImprovedTF}, causing gradient explosion when a stacked network was deep enough \cite{GregYang2019AMF}.

	In this paper, we analyze the BN operation from a new perspective, \emph{i.e.}, proving whether the BN operation is capable of faithfully reflecting all signals of the loss function. In other words, we hope to clarify which types of features are more likely to be learned by DNNs with the BN operation, and which types of features are less likely to be learned. Specifically, when we do the Taylor series expansion of the loss function \emph{w.r.t.} the output of the BN operation,
	{\bf we prove that the BN operation will block the back-propagation of the first and second derivatives of the loss function in the Taylor series expansion.} Therefore, the BN operation makes network layers before the BN layer blind to the first and second derivatives of the loss function, which hurts the trustworthiness of the BN operation. According to our proofs, this blindness problem is caused by the standardization phase (subtracting the mean value and dividing the standard deviation) in the BN operation.

	Specifically, given a mini-batch of $n$ samples, the BN operation normalizes features in an intermediate layer of all samples in two steps, \emph{i.e.}, the standardization phase and the affine transformation phase (learning a new mean and a new standard deviation). Let $\textbf{y}^{(i)}\in\mathbb{R}^D$ denote the output feature of the standardization phase on the $i$-th sample, the loss score computed based on $\textbf{y}^{(i)}$ is given as $\textrm{Loss}(\textbf{y}^{(i)})$\footnote[2]{In this paper, $\textrm{Loss}(\textbf{y})$ is a simplification of $\textrm{Loss}(h(\textbf{y}))$, where $h(\cdot):\mathbb{R}^D\rightarrow \mathbb{R}$ denotes the function of network layers between the BN operation and the loss function. We omit $h(\cdot)$ to simplify the description.}. A second-order Taylor series expansion of $\textrm{Loss}(\textbf{y}^{(i)})$ can be written as $\textrm{Loss}(\textbf{y}^{(i)})  =\textrm{Loss}(\tilde{\textbf{y}})+ (\textbf{y}^{(i)}-\tilde{\textbf{y}})^{\top}\textbf{g} +\frac{1}{2!}(\textbf{y}^{(i)}-\tilde{\textbf{y}})^{\top}\textbf{H}(\textbf{y}^{(i)}-\tilde{\textbf{y}})+R_2(\textbf{y}^{(i)}-\tilde{\textbf{y}})$, where $\textbf{g} \in \mathbb{R}^{D}$ and $\textbf{H} \in \mathbb{R}^{D\times D}$ denote the gradient and the Hessian matrix of the loss at the point $\tilde{\textbf{y}}\in\mathbb{R}^D$, respectively. $\tilde{\textbf{y}}$ is an arbitrary vector close to $\textbf{y}^{(i)}$.
	
	Then, let us consider the BN operation in the following two cases. The simplest case is that all samples in the mini-batch share the same analytic formula of the loss function, \textit{e.g., the loss for invertible generative models}~\citep{dinh2014nice,dinh2016density,kingma2018glow}, \textit{or the loss computed when samples in the same mini-batch have the same label.}
	In this case, we prove that the influence of the gradient ${\textbf{g}}$ and most influence of the Hessian matrix ${\bf H}$ cannot pass through the BN operation in back-propagation, if losses of all samples in the mini-batch are represented using the same Taylor series expansion at the same point $\tilde{\textbf{y}}$. In this way, all elements in ${\bf g}$ and many elements in ${\bf H}$ cannot affect layers under the BN operation.

	\newcommand\bbar[1]{\mkern1.8mu\overline{\mkern-1.8mu{#1}\mkern-1.8mu}\mkern1.8mu}	
	Besides, a more general case is that loss functions of different samples have different analytic formulas, owing to the diversity of labels, \emph{i.e.}, $\textrm{Loss}(y^{(i)},\textit{label}^{(i)})$. Thus, different samples correspond to different gradients $\textbf{g}^{(i)}$ and Hessian matrices $\textbf{H}^{(i)}$ at the point $\tilde{\textbf{y}}$. In this case, we can still compute the average gradient and the average Hessian matrix over all samples at the point $\tilde{\textbf{y}}$, \emph{i.e.}, $\bbar{\textbf{g}} = \frac{1}{n}\sum_{i=1}^{n}\textbf{g}^{(i)}$ and $\bbar{\textbf{H}} = \frac{1}{n}\sum_{i=1}^{n}\textbf{H}^{(i)}$. Just like the previous case, we prove that all elements in $\bbar{\textbf{g}}$ and many elements in $\bbar{\textbf{H}}$ cannot affect the learning of layers under the BN operation.

	In this paper, we have discovered the following three conclusions, as shown in Theorems~\ref{th:1}, \ref{th:2}, and \ref{th:3}.
	\\
	{\bf 1.} \emph{The gradient $\textbf{g}$ at the point $\tilde{\textbf{y}}$ cannot affect the learning of parameters before the BN operation.}
	\\
	{\bf 2.} \emph{Diagonal elements in the Hessian matrix $\textbf{H}$ at the point $\tilde{\textbf{y}}$ cannot affect network parameters before the BN operation.}
	\\
	{\bf 3.} \emph{For off-diagonal elements in $\textbf{H}$ at the point $\tilde{\textbf{y}}$, their impacts on the learning of network parameters before the BN operation are significantly reduced.}

	More crucially, our conclusions can be extended to neural networks whose second derivatives are all zero (\emph{e.g.}, a ReLU network has a zero Hessian matrix). For such neural networks, we can use the finite difference method~\cite{2009Scientific} to estimate an equivalent Hessian matrix, which is yielded by the change of gating states in nonlinear layers (\emph{e.g.}, the ReLU layer, or the Softmax layer).

	We attribute the above blindness problem to the standardization phase in the BN operation, while the affine transformation phase in the BN operation can learn some knowledge from the gradient ${\textbf{g}}$ to mitigate the blindness problem. In spite of that, experiments have shown that such a blindness problem affects the learning of specific neural networks. Moreover, we find that the blindness problem can be simply fixed, if we replace the BN operation with the layer normalization operation~\cite{zaremba2014recurrent}. More crucially, we believe that people have the right to know the uncommon yet non-ignorable risk in the learning process with BN operations, although such a risk does not always significantly damage the learning process. Nevertheless, the impact of the blindness problem on feature representations has been shown in experiments.

	\section{Related work}
	
	
	Normalization methods were usually designed to normalize features or parameters of deep neural networks (DNNs), so as to accelerate the training process or improve the generalization of a DNN. The BN operation~\citep{ioffe2015batch} normalized features of the same feature dimension over all samples in a mini-batch. Given a single input sample, the layer normalization (LN) \cite{zaremba2014recurrent} normalized the feature in a layer to a constant norm.
	The group normalization~\cite{wu2018group} further extended the LN by grouping feature dimensions and normalizing each group of feature dimensions separately.
	The spectral normalization ~\cite{miyato2018spectral} forced features in the discriminator in the generative adversarial network (GAN) to a unit spectral norm.
	The weight normalization (WN)~\cite{salimans2016weight} normalized the weight vector of each filter in the DNN.

	Normalization methods usually influence the representation capacity of a DNN from different perspectives.
	Santurkar \emph{et al.}~\cite{santurkar2018does} showed that the BN operation smoothed the optimization landscape.
	Luo \emph{et al.}~\cite{luo2018towards} proved that in specific applications, the BN operation was equivalent to the WN operation with a specific regularization term, which improved the generalization of a DNN.
	Morcos \emph{et al.}~\cite{morcos2018importance} found that the BN operation usually discouraged the reliance on very few feature dimensions in experiments.
	Li \emph{et al.}~\cite{XiangLi2019UnderstandingTD} showed that the BN operation was not compatible with the dropout operation. Xie \emph{et al.}~\cite{xie2020adversarial} and Galloway \emph{et al.}~\cite{galloway2019batch} discovered that in adversarial training, the BN operation usually reduced classification accuracy on both clean images and adversarial examples. The BN operation in the GAN usually decreased the visual quality of the generated images in experiments~\cite{TimSalimans2016ImprovedTF}. Yang \emph{et al.}~\cite{GregYang2019AMF} proved that a large number of BN layers in a stacked network would probably cause gradient explosion, although the BN operation in a shallow network could still smooth the loss landscape~\citep{santurkar2018does}.
	Lin \emph{et al.}~\cite{lin2021spectral} proved that the spectral normalization controled the variance of network parameters in a way that closely paralleled with the LeCun initialization~\cite{lecun2012efficient}. Xu \emph{et al.}~\cite{xu2019understanding} showed that the affine transformation in the layer normalization increased the risk of over-fitting by ablation experiments. Wan \emph{et al.}~\cite{wan2020spherical} found that gradients of the loss \emph{w.r.t.} network parameters before the BN operation were orthogonal on these parameters. This finding could also be proved by our discovery that the average gradient $\bbar{\textbf{g}}$ over all samples in the mini-batch could hardly affect the learning of network parameters before the BN operation (please see supplementary materials for the proof). The weight decay sometimes conflicted with the BN operation~\citep{van2017l2, li2020understanding}.
	When the BN operation or the WN operation was employed, backward gradients through a hidden layer was scale-invariant \emph{w.r.t.} network parameters~\cite{ba2016layer}.
	

	Unlike previous research, this study focuses on a different issue, \emph{i.e.}, exploring the BN's effects on pushing the DNN to learn specific types of knowledge. In fact, explaining knowledge representations of a DNN is an emerging direction in recent years. The information-bottleneck theory~\citep{tishby2015deep,shwartz2017opening,wolchover2017new,amjad2019learning} indicated that a DNN tended to expose task-relevant features and discarded task-irrelevant features, so as to learn discriminative features for classification. Deng \emph{et al.}~\cite{deng2021discovering} proved that a DNN usually encoded simple interactions between very few input variables and complex interactions between almost all input variables, but it was difficult to encode interactions between intermediate number of input variables.
	Unlike previous analysis of knowledge representations, we prove that the BN operation will block the influence of the first-order loss term and a considerable ratio of the influence of the second-order loss term.
	

	\section{Blindness of the BN operation}
	\label{algorithm}

	In this section, we aim to prove that the BN operation will block the back-propagation of the first and the second derivatives of the loss function. Given $n$ samples in a mini-batch, let $\textbf{X}=[\textbf{x}^{(1)}, \textbf{x}^{(2)}, \ldots, \textbf{x}^{(n)}]\in\mathbb{R}^{D\times n}$ denote features of these samples in an intermediate layer before a BN operation, where the $i$-th column $\textbf{x}^{(i)}\in\mathbb{R}^D$ corresponds to the $i$-th sample. Then, the BN operation $\textbf{Z} = \textit{BN}(\textbf{X})$ can be written as the following two phases.
	\begin{align}
		\label{eq:bn1}
		\textbf{Z} &=  \textit{diag}(\boldsymbol{\gamma})  \textbf{Y} + \boldsymbol{\beta}\textbf{1}_n^{\top}\qquad \qquad \qquad \qquad\ \ \  \text{(affine transformation phase)}\\
		\label{eq:bn2}
		\textbf{Y} &=  \textit{diag}(\boldsymbol{\sigma}\circ \boldsymbol{\sigma} +\varepsilon \textbf{1}_D)^{-\frac{1}{2}} ( \textbf{X} - \boldsymbol{\mu}\textbf{1}_n^{\top} ) 	\qquad \text{(standardization phase)}
	\end{align}
	where $\textbf{Z}=[\textbf{z}^{(1)},\textbf{z}^{(2)},\ldots,\textbf{z}^{(n)}]\in\mathbb{R}^{D \times n}$ denotes the output features of the BN operation; $\textbf{Y}=[\textbf{y}^{(1)},\textbf{y}^{(2)},\ldots,\textbf{y}^{(n)}]\in\mathbb{R}^{D \times n}$ denotes the standardized features; $\boldsymbol{\gamma},\boldsymbol{\beta}\in\mathbb{R}^D$ are used to scale and shift the standardized features;
	$\boldsymbol{\mu} = \frac{1}{n}\textbf{X}\textbf{1}_n\in\mathbb{R}^D$; $\boldsymbol{\sigma}=[\sqrt{\boldsymbol{\Sigma}_{1,1}},\ldots,\sqrt{\boldsymbol{\Sigma}_{D,D}}]^{\top}\in\mathbb{R}^D$ represents a vector of the standard deviations corresponding to diagonal elements in the covariance matrix $\boldsymbol{\Sigma} = \frac{1}{n} (\textbf{X} - \boldsymbol{\mu}\textbf{1}_n^{\top}) (\textbf{X} - \boldsymbol{\mu}\textbf{1}_n^{\top})^{\top}\in\mathbb{R}^{D\times D}$; $\textbf{1}_n\in\mathbb{R}^n$ is an all-ones vector; $\circ$ denotes the element-wise product; $\varepsilon$ is a tiny positive constant to avoid dividing zero. We ignore the $\varepsilon$ term to simplify further proofs. $\textit{diag}(\cdot)$ transforms a vector to a diagonal matrix. In this way, the training loss on the $i$-th sample can be represented as a function of the standardized feature $\textbf{y}^{(i)}$. We use the Taylor series expansion at a fixed point $\tilde{\textbf{y}}\in\mathbb{R}^D$ (which is an arbitrary vector close to $\textbf{y}^{(i)}$) to decompose the loss function \emph{w.r.t.} $\textbf{y}^{(i)}$ into terms of multiple orders, as follows.
	\begin{equation}\label{eq:TaylorExp}
		\textrm{Loss}(\textbf{y}^{(i)};\tilde{\textbf{y}})=\textrm{Loss}(\tilde{\textbf{y}})+ (\textbf{y}^{(i)}-\tilde{\textbf{y}})^{\top}\textbf{g} +\frac{1}{2!}(\textbf{y}^{(i)}-\tilde{\textbf{y}})^{\top}\textbf{H}(\textbf{y}^{(i)}-\tilde{\textbf{y}})+R_2(\textbf{y}^{(i)}-\tilde{\textbf{y}})
	\end{equation}
	where $\textbf{g} \in \mathbb{R}^{D}$ and $\textbf{H} \in \mathbb{R}^{D \times D}$ denote the gradient and the Hessian matrix of $\textrm{Loss}(\textbf{y}^{(i)})$, respectively, at the fixed point $\tilde{\textbf{y}}$; $R_2(\textbf{y}^{(i)}-\tilde{\textbf{y}})$ denotes terms of greater-than-two orders.
	
	In this paper, we consider the following two cases to discuss the effects of the BN operation on the first and second derivatives of the loss function over all samples in a mini-batch.
	
	$\bullet$ \textbf{Case 1: All samples in a mini-batch share the same analytic formula of the loss function.}
	In fact, many applications belong to this case. (1) \textbf{Example 1}: in the training of some invertible generative models~\citep{dinh2014nice,dinh2016density,kingma2018glow}, all training samples share the same analytic formula of the loss function, $\textrm{Loss}(\textbf{y}^{(i)})=-\log p(\textit{input}^{(i)})$, which measures the log-likelihood of sample $\textit{input}^{(i)}$ estimated by the model. (2) \textbf{Example 2}: in distributed learning~\cite{dean2012large}, if samples of different categories are stored in different data centers, then samples in the same data center may have the same label. In this specific application, cross-entropy losses $\textit{crossEntropy}(\textbf{y}^{(i)}, \textit{label}^*)$ of samples in the same data center can be all represented in the same analytic formula. (3) \textbf{Example 3}: a type of adversarial training \cite{zheng2020efficient} requires people to generate multiple adversarial examples in different steps of the multi-step attack, given the same input sample. Thus, just like in Example 2, if we put this set of adversarial samples in the same mini-batch to train the DNN, then all adversarial examples in the mini-batch have the same label, thereby sharing the same analytic formula of the loss function.
	
	In order to demonstrate the blindness of the BN operation, we prove that for any arbitrary loss function (including the cross-entropy loss, the MSE loss, the logistic loss, and etc.), the overall loss of all samples in a mini-batch $\textrm{Loss}^{\text{batch}}=\sum_{i=1}^{n}\textrm{Loss}(\textbf{y}^{(i)};\tilde{\textbf{y}})$ can be re-written as the sum of four compositional terms, as follows.
	\begin{equation}\label{eq:Taylor_simple}
		\textrm{Loss}^{\text{batch}}(\textbf{g},\textbf{H}) \xlongequal{\textbf{decomposed into}}
		\textrm{Loss}^{\text{constant}} + \textrm{Loss}^{\text{grad}}(\textbf{g}) +\textrm{Loss}^{\text{Hessian}}(\textbf{H}) + \sum\nolimits_i R_2(\textbf{y}^{(i)}-\tilde{\textbf{y}})
	\end{equation}
	where $\textrm{Loss}^{\text{constant}}=n\textrm{Loss}(\tilde{\textbf{y}})$ is a constant \emph{w.r.t.} input features $\textbf{X}$ of the BN operation; $\textrm{Loss}^{\text{grad}}(\textbf{g})=\sum_{i=1}^{n}(\textbf{y}^{(i)} - \tilde{\textbf{y}})^{\top}\textbf{g}$ and $\textrm{Loss}^{\text{Hessian}}(\textbf{H})=\sum_{i=1}^{n}\frac{1}{2!}(\textbf{y}^{(i)}-\tilde{\textbf{y}})^{\top}\textbf{H}(\textbf{y}^{(i)}-\tilde{\textbf{y}})$ denote the first-order and second-order terms of the loss function in the Taylor series expansion, respectively.

	\begin{theorem}\label{th:1}
		\emph{(proven in supplementary materials)}.
		{\rm$\textrm{Loss}^{\text{grad}}(\textbf{g})$} and {\rm$\textrm{Loss}^{\text{constant}}$} have no gradients on input features {\rm$\textbf{X}$} of the BN operation, i.e., {\rm$\frac{\partial \textrm{Loss}^{\text{grad}}(\textbf{g})}{\partial \textbf{X}} = \textbf{0}$} and {\rm$\frac{\partial \textrm{Loss}^{\text{constant}}}{\partial \textbf{X}} = \textbf{0}$}.
	\end{theorem}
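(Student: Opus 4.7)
The plan is to dispatch the two claims separately by direct algebraic calculation, leaning on the defining property of the standardization step: the batch sum $\sum_{i=1}^{n}\textbf{y}^{(i)} = \textbf{Y}\textbf{1}_n$ is identically zero as a function of $\textbf{X}$, no matter what $\textbf{X}$ is.

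For $\textrm{Loss}^{\text{constant}} = n\,\textrm{Loss}(\tilde{\textbf{y}})$, I would simply observe that $\tilde{\textbf{y}}$ is the Taylor expansion point, chosen independently of $\textbf{X}$. The whole quantity is then a constant in $\textbf{X}$ and $\partial\textrm{Loss}^{\text{constant}}/\partial\textbf{X} = \textbf{0}$ is immediate.

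For $\textrm{Loss}^{\text{grad}}(\textbf{g})$, I would first split off the linear-in-$\textbf{y}^{(i)}$ part:
\begin{align*}
\textrm{Loss}^{\text{grad}}(\textbf{g}) = \sum_{i=1}^{n}(\textbf{y}^{(i)} - \tilde{\textbf{y}})^{\top}\textbf{g} = \textbf{g}^{\top}\Big(\sum_{i=1}^{n}\textbf{y}^{(i)}\Big) - n\,\textbf{g}^{\top}\tilde{\textbf{y}} = \textbf{g}^{\top}\,\textbf{Y}\textbf{1}_n - n\,\textbf{g}^{\top}\tilde{\textbf{y}}.
\end{align*}
Because $\textbf{g}$ and $\tilde{\textbf{y}}$ are fixed, the only $\textbf{X}$-dependence sits in $\textbf{Y}\textbf{1}_n$. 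Substituting the standardization formula from Eq.~(\ref{eq:bn2}) and using $\boldsymbol{\mu} = \frac{1}{n}\textbf{X}\textbf{1}_n$ gives
\begin{align*}
\textbf{Y}\textbf{1}_n &= \textit{diag}(\boldsymbol{\sigma}\circ\boldsymbol{\sigma})^{-1/2}\bigl(\textbf{X}\textbf{1}_n - \boldsymbol{\mu}\,\textbf{1}_n^{\top}\textbf{1}_n\bigr) \\
&= \textit{diag}(\boldsymbol{\sigma}\circ\boldsymbol{\sigma})^{-1/2}\bigl(n\boldsymbol{\mu} - n\boldsymbol{\mu}\bigr) = \textbf{0},
\end{align*}
so in fact $\textrm{Loss}^{\text{grad}}(\textbf{g}) \equiv -n\,\textbf{g}^{\top}\tilde{\textbf{y}}$ as a function on the input batch, and differentiation w.r.t. $\textbf{X}$ yields $\textbf{0}$.

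There is essentially no technical obstacle here: the statement collapses to the elementary observation that centering a batch kills any linear functional of it. The only subtlety worth flagging is that one might worry about chain-rule contributions through $\boldsymbol{\mu}(\textbf{X})$ and $\boldsymbol{\sigma}(\textbf{X})$; but since $\textbf{Y}\textbf{1}_n = \textbf{0}$ holds pointwise in $\textbf{X}$ (not merely at some specific $\textbf{X}$), those contributions never arise. This is precisely the blindness phenomenon advertised in the introduction: the centering step of BN erases the first-order Taylor information before any gradient can flow back to $\textbf{X}$.
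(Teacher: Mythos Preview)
Your proof is correct, but it takes a slightly different and more direct route than the paper. The paper computes the Jacobian $\textbf{J}_d = \partial \textbf{y}_d^{\top}/\partial \textbf{x}_d$ explicitly, then shows $\partial \textrm{Loss}^{\text{grad}}(\textbf{g})/\partial \textbf{y}_d = \textbf{g}_d \textbf{1}_n$ and verifies by brute expansion that $\textbf{J}_d \textbf{1}_n = \textbf{0}$. You instead observe that $\textbf{Y}\textbf{1}_n \equiv \textbf{0}$ holds identically in $\textbf{X}$, so $\textrm{Loss}^{\text{grad}}(\textbf{g})$ is literally a constant function of $\textbf{X}$ and no chain-rule computation is ever needed. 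Your argument is cleaner and more conceptual for this theorem in isolation; the paper's Jacobian approach has the advantage that the same $\textbf{J}_d$ formula is reused verbatim in the proofs of Theorems~\ref{th:2} and~\ref{th:3}, where the loss terms are \emph{not} constant in $\textbf{X}$ and one genuinely needs to see how $\textbf{J}_d$ annihilates both $\textbf{1}_n$ and $\textbf{y}_d$.
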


	\begin{lemma}\label{le:1}
		\emph{(proven in supplementary materials)}.
		Let us set $\varepsilon=0$ (the tiny positive constant $\varepsilon$ is only used to avoid dividing zero, so we can ignore $\varepsilon$ in all following paragraphs to simplify the proof). Then,
		{\rm$\textrm{Loss}^{\text{Hessian}}(\textbf{H})$} can be decomposed into two terms, $i.e.$,
		{\rm\begin{equation*}
				\textrm{Loss}^{\text{Hessian}}(\textbf{H})= \textrm{Loss}^{\text{diag}}(\textbf{H}^{\text{diag}}) + \textrm{Loss}^{\text{off}}(\textbf{H}^{\text{off}}),
		\end{equation*}}
		where {\rm\small$\textrm{Loss}^{\text{diag}}(\textbf{H}^{\text{diag}}) = \sum_{i=1}^{n}\frac{1}{2!} (\textbf{y}^{(i)}-\tilde{\textbf{y}})^{\top} \textbf{H}^{\text{diag}}(\textbf{y}^{(i)}-\tilde{\textbf{y}})$}, and {\rm\small$\textrm{Loss}^{\text{off}} (\textbf{H}^{\text{off}})= \sum_{i=1}^{n}\frac{1}{2!} (\textbf{y}^{(i)}-\tilde{\textbf{y}})^{\top} \textbf{H}^{\text{off}}(\textbf{y}^{(i)}-\tilde{\textbf{y}})$; $\textbf{H}^{\text{diag}}$} and {\rm\small$\textbf{H}^{\text{off}} = \textbf{H} - \textbf{H}^{\text{diag}}$} denote the matrix only containing diagonal elements and the matrix only containing off-diagonal elements in {\rm$\textbf{H}$}, respectively.
	\end{lemma}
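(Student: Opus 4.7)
The plan is to observe that this lemma is, at its heart, a purely linear-algebraic decomposition of the quadratic form defining $\textrm{Loss}^{\text{Hessian}}(\textbf{H})$. The only content to verify is that splitting $\textbf{H}$ into its diagonal and off-diagonal parts commutes with taking the expression $\sum_i (\textbf{y}^{(i)}-\tilde{\textbf{y}})^{\top}\textbf{H}(\textbf{y}^{(i)}-\tilde{\textbf{y}})$. No property of the BN map itself is used.

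First, I would appeal to the definitions: $\textbf{H}^{\text{diag}}$ is the $D \times D$ matrix whose diagonal entries coincide with those of $\textbf{H}$ and whose off-diagonal entries are zero, while $\textbf{H}^{\text{off}}:=\textbf{H}-\textbf{H}^{\text{diag}}$ contains exactly the off-diagonal entries of $\textbf{H}$. By construction $\textbf{H}=\textbf{H}^{\text{diag}}+\textbf{H}^{\text{off}}$ as matrices. Next, I would use bilinearity of the map $(\textbf{A},\textbf{v})\mapsto \textbf{v}^{\top}\textbf{A}\textbf{v}$ in its first argument: for any $\textbf{v}\in\mathbb{R}^D$,
\[
\textbf{v}^{\top}(\textbf{H}^{\text{diag}}+\textbf{H}^{\text{off}})\textbf{v}=\textbf{v}^{\top}\textbf{H}^{\text{diag}}\textbf{v}+\textbf{v}^{\top}\textbf{H}^{\text{off}}\textbf{v}.
\]
Applying this with $\textbf{v}=\textbf{y}^{(i)}-\tilde{\textbf{y}}$, multiplying by $\tfrac{1}{2!}$, and summing over $i=1,\ldots,n$ immediately yields $\textrm{Loss}^{\text{Hessian}}(\textbf{H})=\textrm{Loss}^{\text{diag}}(\textbf{H}^{\text{diag}})+\textrm{Loss}^{\text{off}}(\textbf{H}^{\text{off}})$, matching the displayed definitions verbatim.

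There is no genuine obstacle in this lemma; its role is organizational rather than technical. It isolates the diagonal and off-diagonal contributions of the Hessian so that the subsequent Theorems~\ref{th:2} and~\ref{th:3} can analyze the back-propagation of $\textbf{H}^{\text{diag}}$ and $\textbf{H}^{\text{off}}$ through the BN operation separately. The clause ``set $\varepsilon=0$'' is not needed for the linearity step itself; it is adopted here as a global convention so that the later derivative computations involving $\boldsymbol{\sigma}$ and $\partial\textbf{Y}/\partial\textbf{X}$ (which will genuinely require expanding the standardization map in Eq.~\eqref{eq:bn2}) can be carried out without tracking the $\varepsilon$ regularizer.
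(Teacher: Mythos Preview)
Your proposal is correct and matches the paper's own proof essentially line for line: the paper also writes $\textbf{H}=\textbf{H}^{\text{diag}}+\textbf{H}^{\text{off}}$, expands the quadratic form by linearity in the matrix argument, and splits the sum over $i$. Your additional remark that the $\varepsilon=0$ clause is irrelevant to this particular step and is only a convention for the later derivative computations is accurate and, if anything, clarifies the role of that hypothesis better than the paper does.
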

	
	\begin{theorem}\label{th:2}
		\emph{(proven in supplementary materials)}.
		{\rm$\textrm{Loss}^{\text{diag}}(\textbf{H}^{\text{diag}}) $} has no gradients on input features {\rm$\textbf{X}$} of the BN operation, i.e., {\rm$\frac{\partial\textrm{Loss}^{\text{diag}}(\textbf{H}^{\text{diag}}) }{\partial \textbf{X}} = \textbf{0}$}.
	\end{theorem}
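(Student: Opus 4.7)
The plan is to exploit the two defining identities of the standardization phase: for every feature dimension $d$, the standardized outputs satisfy $\sum_{i=1}^{n} y_d^{(i)} = 0$ (zero sample mean) and $\sum_{i=1}^{n} (y_d^{(i)})^2 = n$ (unit sample second moment, since the sample mean is zero and the sample variance equals $1$ by construction of the BN standardization in Eq.~\eqref{eq:bn2}, ignoring $\varepsilon$). These two constraints hold identically as functions of $\textbf{X}$, so any quantity built only from $\sum_i y_d^{(i)}$ and $\sum_i (y_d^{(i)})^2$ is a constant of $\textbf{X}$.

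First, I would write $\textbf{H}^{\text{diag}} = \textit{diag}(H_{11},\dots,H_{DD})$ and expand the quadratic form coordinatewise:
\begin{equation*}
\textrm{Loss}^{\text{diag}}(\textbf{H}^{\text{diag}}) = \sum_{i=1}^{n}\frac{1}{2}\sum_{d=1}^{D} H_{dd}\,(y_d^{(i)}-\tilde{y}_d)^2 = \sum_{d=1}^{D}\frac{H_{dd}}{2}\sum_{i=1}^{n}(y_d^{(i)}-\tilde{y}_d)^2.
\end{equation*}
Next, I would expand the inner sum as
\begin{equation*}
\sum_{i=1}^{n}(y_d^{(i)}-\tilde{y}_d)^2 = \sum_{i=1}^{n}(y_d^{(i)})^2 \;-\; 2\tilde{y}_d\sum_{i=1}^{n}y_d^{(i)} \;+\; n\,\tilde{y}_d^{\,2},
\end{equation*}
and then substitute the two BN identities to obtain $n - 0 + n\tilde{y}_d^{\,2} = n(1+\tilde{y}_d^{\,2})$, giving $\textrm{Loss}^{\text{diag}}(\textbf{H}^{\text{diag}}) = \frac{n}{2}\sum_{d=1}^{D} H_{dd}(1+\tilde{y}_d^{\,2})$. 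Since the right-hand side depends only on the constants $H_{dd}$ and $\tilde{y}_d$, and not on $\textbf{X}$, differentiation immediately yields $\partial \textrm{Loss}^{\text{diag}}/\partial \textbf{X} = \textbf{0}$.

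To make the argument fully rigorous, I would briefly verify the two BN identities directly from Eq.~\eqref{eq:bn2}: the $d$-th row of $\textbf{Y}$ equals $(\textbf{X}_{d,:} - \mu_d \textbf{1}_n^{\top})/\sigma_d$ with $\mu_d = \frac{1}{n}\sum_i X_{d,i}$ and $\sigma_d^2 = \frac{1}{n}\sum_i (X_{d,i}-\mu_d)^2$, so that $\sum_i y_d^{(i)} = 0$ and $\sum_i (y_d^{(i)})^2 = \sigma_d^{-2}\sum_i (X_{d,i}-\mu_d)^2 = n$ hold as identities of $\textbf{X}$ on the domain where $\sigma_d>0$. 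I do not expect a real obstacle here; the only subtlety is the now-standard caveat that these identities, and hence the conclusion, rely on dropping the $\varepsilon$ regularizer (as the statement explicitly permits) and on $\sigma_d>0$, which is the generic case during training.
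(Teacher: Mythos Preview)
Your proof is correct and, in fact, more direct than the paper's. You observe that the two BN identities $\sum_i y_d^{(i)}=0$ and $\sum_i (y_d^{(i)})^2=n$ hold identically in $\textbf{X}$ (with $\varepsilon=0$, $\sigma_d>0$), so $\textrm{Loss}^{\text{diag}}(\textbf{H}^{\text{diag}})$ simplifies to the constant $\tfrac{n}{2}\sum_d H_{dd}(1+\tilde y_d^2)$ before any differentiation is done; the vanishing gradient is then immediate. The paper instead proceeds by the chain rule: it first computes $\partial\,\textrm{Loss}^{\text{diag}}/\partial \textbf{y}_d = H_{dd}(\textbf{y}_d-\tilde y_d\textbf{1}_n)$, then multiplies by the explicit Jacobian $\textbf{J}_d=\partial \textbf{y}_d^{\top}/\partial \textbf{x}_d$ of the standardization map and checks term by term that the product vanishes, using $\textbf{J}_d\textbf{1}_n=\textbf{0}$ and $\textbf{J}_d\textbf{y}_d=\textbf{0}$. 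Your route is shorter and exposes the underlying reason (the loss term is literally constant on the BN manifold), whereas the paper's route has the advantage of reusing the same Jacobian machinery that is needed anyway for Theorems~\ref{th:1} and~\ref{th:3}, and of making explicit which null directions of $\textbf{J}_d$ are responsible for the cancellation.
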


	\begin{lemma}\label{le:2}
		\emph{(proven in supplementary materials)}.
		Let {\rm\small$\textbf{x}_d=[\textbf{X}_{d,1},\textbf{X}_{d,2},\ldots,\textbf{X}_{d,n}]^{\top}\in\mathbb{R}^n$} denote the $d$-th feature dimension of all the $n$ samples in a mini-batch. Then, {\rm\small$\textrm{Loss}^{\text{off}}(\textbf{H}^{\text{off}})$} can be decomposed into the loss term depending on {\rm$\textbf{x}_d$} (i.e.,
		{\rm\small$L_d(\textbf{H}_{d,:}^{\text{off}})= \textbf{H}^{\text{off}}_{d,:}(\textbf{Y} -\tilde{\textbf{y}} \textbf{1}_n ^{\top})  \textbf{y}_d$}) and the loss term independent with {\rm$\textbf{x}_d$}, thereby {\rm\small$\frac{\partial \textrm{Loss}^{\text{off}}(\textbf{H}^{\text{off}})}{\partial \textbf{x}_d} = \frac{\partial L_d(\textbf{H}_{d,:}^{\text{off}})}{\partial \textbf{x}_d}$}, where {\rm\small$\textbf{H}_{d,:}^{\text{off}}=[\textbf{H}_{d,1}^{\text{off}},\textbf{H}_{d,2}^{\text{off}},\ldots,\textbf{H}_{d,D}^{\text{off}}]\in\mathbb{R}^D$} denotes the $d$-th row of {\rm\small$\textbf{H}^{\text{off}}$}, {\rm\small$\textbf{y}_d=[\textbf{Y}_{d,1},\textbf{Y}_{d,2},\ldots,\textbf{Y}_{d,n}]^{\top}\in\mathbb{R}^n$}.
		Furthermore, {\rm\small$L_d(\textbf{H}_{d,:}^{\text{off}})$} can be decomposed as
		{\rm\begin{equation*}
				L_d(\textbf{H}_{d,:}^{\text{off}})=L_d^{\text{linear}}(\textbf{H}_{d,:}^{\text{off}})+L_d^{\text{non}}(\textbf{H}_{d,:}^{\text{off}}),
		\end{equation*}}
		where {\rm\small$L_d^{\text{linear}}(\textbf{H}_{d,:}^{\text{off}})=\textbf{H}_{d,:}^{\text{off}} \textbf{Y}^{\text{linear}}\textbf{y}_d$} and {\rm\small$L_d^{\text{non}}(\textbf{H}_{d,:}^{\text{off}})= \textbf{H}_{d,:}^{\text{off}} (\textbf{Y}^{\text{non}}-\tilde{\textbf{y}} \textbf{1}_n ^{\top})\textbf{y}_d$}. {\rm\small$\textbf{Y}^{\text{linear}}= [ \textbf{o}_d^{\top} \textbf{y}_1,  \textbf{o}_d^{\top} \textbf{y}_2, \cdots,  \textbf{o}_d^{\top} \textbf{y}_D ]^{\top} \textbf{o}_d^{\top}$} and {\rm\small$\textbf{Y}^{\text{non}}=\textbf{Y}-\textbf{Y}^{\text{linear}}$}, where {\rm\small $\textbf{o}_d=\frac{\textbf{y}_d}{\|\textbf{y}_d\|}$} denotes the unit vector in the direction of {\rm $\textbf{y}_d$}. Then, {\rm\small$L_d^{\text{linear}}(\textbf{H}_{d,:}^{\text{off}})= \lVert \textbf{y}_d \rVert  \cdot  \textbf{H}_{d,:}^{\text{off}} [ \textbf{o}_d^{\top} \textbf{y}_1,  \textbf{o}_d^{\top} \textbf{y}_2, \cdots,  \textbf{o}_d^{\top} \textbf{y}_D ]^{\top} $}.
		Therefore, {\rm\small$\frac{\partial \textrm{Loss}^{\text{off}}(\textbf{H}^{\text{off}})}{\partial \textbf{x}_d} = \frac{\partial L_d(\textbf{H}_{d,:}^{\text{off}})}{\partial \textbf{x}_d} = \frac{\partial L_d^{\text{linear}}(\textbf{H}_{d,:}^{\text{off}})}{\partial \textbf{x}_d} + \frac{\partial L_d^{\text{non}}(\textbf{H}_{d,:}^{\text{off}})}{\partial \textbf{x}_d}$}.
	\end{lemma}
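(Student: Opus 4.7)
The plan is to expand $\textrm{Loss}^{\text{off}}$ entry by entry, exploit the fact that under BN the $d'$-th standardized row $\textbf{y}_{d'}$ depends only on the input row $\textbf{x}_{d'}$, and then finish via an orthogonal decomposition of $\textbf{Y}$ along the direction $\textbf{o}_d$ in $\mathbb{R}^n$.

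First I would write
\[
\textrm{Loss}^{\text{off}}(\textbf{H}^{\text{off}}) \;=\; \tfrac{1}{2}\sum_{i=1}^{n}\sum_{d',d''=1}^{D}\textbf{H}^{\text{off}}_{d',d''}\bigl(\textbf{Y}_{d',i}-\tilde{y}_{d'}\bigr)\bigl(\textbf{Y}_{d'',i}-\tilde{y}_{d''}\bigr),
\]
and isolate the summands that carry the index $d$. Because the standardization phase of BN couples entries only within the same feature dimension, the sub-sum with $d'\neq d$ and $d''\neq d$ is independent of $\textbf{x}_d$, so its $\partial/\partial \textbf{x}_d$-gradient is zero. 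Using $\textbf{H}^{\text{off}}_{d,d}=0$ and the symmetry of $\textbf{H}^{\text{off}}$, the surviving terms collapse into $\sum_i\sum_{d''\neq d}\textbf{H}^{\text{off}}_{d,d''}(y_d^{(i)}-\tilde{y}_d)(y_{d''}^{(i)}-\tilde{y}_{d''})$, which is the term that contributes all of $\partial \textrm{Loss}^{\text{off}}/\partial \textbf{x}_d$.

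The main obstacle I anticipate is reducing this expression to the compact form $L_d=\textbf{H}^{\text{off}}_{d,:}(\textbf{Y}-\tilde{\textbf{y}}\textbf{1}_n^{\top})\textbf{y}_d$ stated in the lemma, which replaces the symmetric factor $(y_d^{(i)}-\tilde{y}_d)$ by $y_d^{(i)}$. The discrepancy between the two expressions is $-\tilde{y}_d\sum_{d''}\textbf{H}^{\text{off}}_{d,d''}\sum_i(y_{d''}^{(i)}-\tilde{y}_{d''})$, and the key observation is that the mean-subtraction step of BN forces $\sum_i y_{d''}^{(i)}=0$ for every $d''$, so this discrepancy reduces to the constant $n\tilde{y}_d\sum_{d''}\textbf{H}^{\text{off}}_{d,d''}\tilde{y}_{d''}$ whose gradient in $\textbf{x}_d$ vanishes. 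This gives $\partial \textrm{Loss}^{\text{off}}/\partial \textbf{x}_d=\partial L_d/\partial \textbf{x}_d$, and shows that $\textrm{Loss}^{\text{off}}$ splits into an $\textbf{x}_d$-independent piece plus $L_d$ up to a constant.

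For the second decomposition I would perform an orthogonal projection in $\mathbb{R}^n$ along the unit direction $\textbf{o}_d=\textbf{y}_d/\|\textbf{y}_d\|$: each row writes as $\textbf{y}_{d'}=(\textbf{o}_d^{\top}\textbf{y}_{d'})\textbf{o}_d + \textbf{y}_{d'}^{\perp}$. Stacking the projected parts is exactly the rank-one matrix $\textbf{Y}^{\text{linear}}=[\textbf{o}_d^{\top}\textbf{y}_1,\ldots,\textbf{o}_d^{\top}\textbf{y}_D]^{\top}\textbf{o}_d^{\top}$, and the residual $\textbf{Y}^{\text{non}}=\textbf{Y}-\textbf{Y}^{\text{linear}}$ collects the orthogonal remainders. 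Substituting $\textbf{Y}=\textbf{Y}^{\text{linear}}+\textbf{Y}^{\text{non}}$ into $L_d$ produces the two claimed pieces $L_d^{\text{linear}}=\textbf{H}^{\text{off}}_{d,:}\textbf{Y}^{\text{linear}}\textbf{y}_d$ and $L_d^{\text{non}}=\textbf{H}^{\text{off}}_{d,:}(\textbf{Y}^{\text{non}}-\tilde{\textbf{y}}\textbf{1}_n^{\top})\textbf{y}_d$. The elementary identity $\textbf{o}_d^{\top}\textbf{y}_d=\|\textbf{y}_d\|$ then reduces $\textbf{Y}^{\text{linear}}\textbf{y}_d$ to $\|\textbf{y}_d\|\,[\textbf{o}_d^{\top}\textbf{y}_1,\ldots,\textbf{o}_d^{\top}\textbf{y}_D]^{\top}$, giving the closed-form expression for $L_d^{\text{linear}}$, and linearity of $\partial/\partial \textbf{x}_d$ delivers the claimed split of the gradient.
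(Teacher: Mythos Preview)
Your proposal is correct and follows essentially the same route as the paper: expand the quadratic form entrywise, isolate the summands carrying the index $d$ via symmetry and $\textbf{H}^{\text{off}}_{d,d}=0$, peel off the $\tilde{y}_d$ factor to reach the compact $L_d$, and then split $\textbf{Y}$ orthogonally along $\textbf{o}_d$. The only cosmetic difference is that where you invoke $\sum_i y_{d''}^{(i)}=0$ to show the discrepancy term is an outright constant, the paper simply observes that this term involves only $\textbf{y}_j$ with $j\neq d$ and is therefore independent of $\textbf{x}_d$; either argument suffices.
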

	
	\begin{theorem}\label{th:3}
		\emph{(proven in supplementary materials)}.
		{\rm\small$\forall d$, $L_d^{\text{linear}}(\textbf{H}_{d,:}^{\text{off}})$} has no gradients on the $d$-th feature dimension over all $n$ samples {\rm$\textbf{x}_d\in\mathbb{R}^n$}, i.e.,
		{\rm\small\begin{equation*}
				\forall d,\ \frac{\partial L_d^{\text{linear}}(\textbf{H}_{d,:}^{\text{off}})}{\partial \textbf{x}_d} = \textbf{0},
		\end{equation*}}
		thus {\rm\small$\frac{\partial ^2 L_d^{\text{linear}}(\textbf{H}_{d,:}^{\text{off}})}{\partial \textbf{x}_d \partial \textbf{H}_{d,:}^{\text{off}}} = \frac{\partial ^2 }{\partial \textbf{x}_d \partial \textbf{H}_{d,:}^{\text{off}}} (A  [ \textbf{o}_d^{\top} \textbf{y}_1,  \textbf{o}_d^{\top} \textbf{y}_2, \cdots,  \textbf{o}_d^{\top} \textbf{y}_D ]^{\top}   ) = \textbf{0}$}, s.t. {\rm\small$A=\lVert \textbf{y}_d \rVert  \cdot  \textbf{H}_{d,:}^{\text{off}} $.} On the other hand, {\rm\small$\frac{\partial^2 L_d ^{\text{non}}(\textbf{H}_{d,:}^{\text{off}})}{\partial \textbf{x}_{d} \partial \textbf{H}_{d, :}^{\text{off}}} =  \frac{1}{\boldsymbol{\sigma}_d} \cdot [(\textbf{y}_{1}- \| \textbf{y}_{1} \| \cos(\textbf{y}_1, \textbf{y}_d)\textbf{o}_d ), \ldots,(\textbf{y}_{D} - \|\textbf{y}_{D} \| \cos(\textbf{y}_D, \textbf{y}_d) \textbf{o}_d  )]  \ne \textbf{0}$}.
	\end{theorem}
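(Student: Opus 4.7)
The plan is two direct chain-rule computations, both pivoting on the Jacobian of the standardization phase. I would first derive $J := \partial \textbf{y}_d/\partial \textbf{x}_d$ from the definitions of $\mu_d$ and $\sigma_d$; a routine calculation yields $J = (1/\sigma_d)\bigl(I_n - n^{-1}\textbf{1}_n\textbf{1}_n^\top - n^{-1}\textbf{y}_d\textbf{y}_d^\top\bigr)$. Its two projection terms reflect the standardization invariants $\textbf{1}_n^\top\textbf{y}_d=0$ and $\|\textbf{y}_d\|^2=n$, which, upon differentiation in $\textbf{x}_d$, yield the crucial annihilation identities $\textbf{1}_n^\top J=\textbf{0}$ and $\textbf{y}_d^\top J=\textbf{0}$. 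Everything downstream runs on these two identities.

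For the vanishing statement on $L_d^{\text{linear}}$, I would factor $L_d^{\text{linear}} = A\,\textbf{b}$ with $A=\|\textbf{y}_d\|\,\textbf{H}_{d,:}^{\text{off}}$ and $\textbf{b} = [\textbf{o}_d^\top\textbf{y}_1,\ldots,\textbf{o}_d^\top\textbf{y}_D]^\top$. Since $\|\textbf{y}_d\|=\sqrt{n}$ is identically constant, $A$ depends on $\textbf{x}_d$ only trivially, so the mixed derivative $\partial^2(A\textbf{b})/\partial\textbf{x}_d\,\partial\textbf{H}_{d,:}^{\text{off}}$ reduces to differentiating $\partial A/\partial\textbf{H}_{d,:}^{\text{off}} \cdot \textbf{b} = \sqrt{n}\,\textbf{b}$ with respect to $\textbf{x}_d$. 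Expanding each $\textbf{b}_k=\textbf{o}_d^\top\textbf{y}_k$ through $\partial\textbf{o}_d/\partial\textbf{x}_d=n^{-1/2}J$, every surviving term features the factor $\textbf{o}_d^\top J$, which vanishes by the second annihilation identity; this delivers the desired $\textbf{0}$ and, \emph{a fortiori}, the vanishing mixed second derivative.

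For the explicit formula for $L_d^{\text{non}}=\textbf{H}_{d,:}^{\text{off}}(\textbf{Y}^{\text{non}}-\tilde{\textbf{y}}\textbf{1}_n^\top)\textbf{y}_d$, I would differentiate first in $\textbf{H}_{d,:}^{\text{off}}$, producing a $D$-vector whose $k$-th component is $(\textbf{Y}^{\text{non}}_{k,:}-\tilde{\textbf{y}}_k\textbf{1}_n^\top)\textbf{y}_d$. Differentiating next in $\textbf{x}_d$ through $J$, the $\tilde{\textbf{y}}_k\textbf{1}_n^\top$ piece is killed by $\textbf{1}_n^\top J=\textbf{0}$. Substituting the explicit row $\textbf{Y}^{\text{non}}_{k,:}=\textbf{y}_k^\top-(\textbf{o}_d^\top\textbf{y}_k)\textbf{o}_d^\top$ and carrying $J$ through, the component along $\textbf{o}_d$ is annihilated by $\textbf{y}_d^\top J=\textbf{0}$, and one is left with precisely $(1/\sigma_d)\bigl(\textbf{y}_k-\|\textbf{y}_k\|\cos(\textbf{y}_k,\textbf{y}_d)\textbf{o}_d\bigr)$ per column, matching the announced expression.

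The main obstacle is bookkeeping: both $\textbf{Y}^{\text{linear}}$ and $\textbf{Y}^{\text{non}}$ depend on $\textbf{x}_d$ through the unit direction $\textbf{o}_d$, so a naive product-rule expansion generates many cross-terms involving $\partial\textbf{Y}^{\text{linear}}/\partial\textbf{x}_d$ and $\partial\textbf{Y}^{\text{non}}/\partial\textbf{x}_d$. These must be carefully grouped using the orthogonal decomposition $\textbf{y}_k = \|\textbf{y}_k\|\cos(\textbf{y}_k,\textbf{y}_d)\textbf{o}_d + (\textbf{y}_k-\|\textbf{y}_k\|\cos(\textbf{y}_k,\textbf{y}_d)\textbf{o}_d)$, so that every contribution along $\textbf{o}_d$ is annihilated by one of the two identities $\textbf{o}_d^\top J=\textbf{0}$ or $\textbf{1}_n^\top J=\textbf{0}$, while only the orthogonal components survive and assemble into the final formula for the $L_d^{\text{non}}$ derivative.
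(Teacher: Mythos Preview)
Your plan hinges on differentiating through the dependence of $\textbf{o}_d$ (and hence of $\textbf{Y}^{\text{linear}}$, $\textbf{Y}^{\text{non}}$) on $\textbf{x}_d$, and that is exactly where it breaks. Take your own factorization $L_d^{\text{linear}}=\sqrt{n}\,\textbf{H}_{d,:}^{\text{off}}\textbf{b}$ with $\textbf{b}_k=\textbf{o}_d^\top\textbf{y}_k$. For $k\neq d$ the vector $\textbf{y}_k$ does not depend on $\textbf{x}_d$, so
\[
\frac{\partial \textbf{b}_k}{\partial \textbf{x}_d}
=\Bigl(\frac{\partial \textbf{o}_d}{\partial \textbf{x}_d}\Bigr)\textbf{y}_k
=\frac{1}{\sqrt{n}}\,\textbf{J}_d\,\textbf{y}_k ,
\]
which carries no factor $\textbf{o}_d^\top \textbf{J}_d$; and $\textbf{J}_d\textbf{y}_k=\frac{1}{\sigma_d}\bigl(\textbf{y}_k-(\textbf{o}_d^\top\textbf{y}_k)\,\textbf{o}_d\bigr)\neq\textbf{0}$ in general, so the claimed annihilation does not occur. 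Worse, under your full-dependence reading the decomposition collapses: since $\textbf{Y}^{\text{non}}\textbf{y}_d=\textbf{0}$ and $\textbf{1}_n^\top\textbf{y}_d=0$, one has $L_d^{\text{non}}\equiv 0$ identically, hence $L_d^{\text{linear}}=L_d$ and $\partial L_d^{\text{linear}}/\partial\textbf{x}_d$ would equal the full (nonzero) $\partial L_d/\partial\textbf{x}_d$. Your outline would thus reverse the two conclusions of the theorem.

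What the paper actually does is \emph{freeze} $\textbf{Y}^{\text{linear}}$ and $\textbf{Y}^{\text{non}}$ at their current values when taking $\partial/\partial\textbf{y}_d$; this is legitimate because the row vector $\textbf{H}_{d,:}^{\text{off}}(\textbf{Y}-\tilde{\textbf{y}}\textbf{1}_n^\top)$ is genuinely $\textbf{y}_d$-independent (since $\textbf{H}_{d,d}^{\text{off}}=0$), and the splitting into $\textbf{Y}^{\text{linear}}$ and $\textbf{Y}^{\text{non}}$ is just the orthogonal decomposition of that fixed row vector along and perpendicular to $\textbf{o}_d^\top$. With that convention, $\partial L_d^{\text{linear}}/\partial\textbf{y}_d=(\textbf{H}_{d,:}^{\text{off}}\textbf{Y}^{\text{linear}})^\top$ is a scalar multiple of $\textbf{o}_d$, and your identity $\textbf{J}_d\textbf{y}_d=\textbf{0}$ kills it; similarly $\partial L_d^{\text{non}}/\partial\textbf{y}_d=(\textbf{H}_{d,:}^{\text{off}}(\textbf{Y}^{\text{non}}-\tilde{\textbf{y}}\textbf{1}_n^\top))^\top$, and applying $\textbf{J}_d$ (using $\textbf{J}_d\textbf{1}_n=\textbf{0}$ and $\textbf{J}_d\textbf{o}_d=\textbf{0}$) yields exactly $\frac{1}{\sigma_d}(\textbf{Y}^{\text{non}})^\top(\textbf{H}_{d,:}^{\text{off}})^\top$. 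So your Jacobian formula and annihilation identities are correct and are indeed the engine of the proof; the gap is that the cross-terms you flag in your last paragraph should not be there at all---$\textbf{Y}^{\text{linear}}$ and $\textbf{Y}^{\text{non}}$ are to be held fixed, and only the trailing $\textbf{y}_d$ is differentiated.
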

	Lemma~\ref{le:2} shows that the gradient of \(\textrm{Loss}^{\text{off}}(\textbf{H}^{\text{off}})\) \emph{w.r.t.} $\textbf{x}_d$ can be decomposed into two terms, \emph{i.e.}, {$\forall d, \frac{\partial \textrm{Loss}^{\text{off}}(\textbf{H}^{\text{off}})}{\partial \textbf{x}_d} = \frac{\partial L_d^{\text{linear}}(\textbf{H}_{d,:}^{\text{off}})}{\partial \textbf{x}_d} + \frac{\partial L_d^{\text{non}}(\textbf{H}_{d,:}^{\text{off}})}{\partial \textbf{x}_d}$}, where {$\frac{\partial L_d^{\text{linear}}(\textbf{H}_{d,:}^{\text{off}})}{\partial \textbf{x}_d} $} reflects the interaction utility yielded by feature components in all dimensions that are linear-correlated with $\textbf{y}_d$, \emph{i.e.}, $[ \textbf{o}_d^{\top} \textbf{y}_1,  \textbf{o}_d^{\top} \textbf{y}_2, \cdots,  \textbf{o}_d^{\top} \textbf{y}_D ]^{\top}$, subject to $\textbf{o}_d=\frac{\textbf{y}_d}{\|\textbf{y}_d\|}$. {$\frac{\partial L_d^{\text{non}}(\textbf{H}_{d,:}^{\text{off}})}{\partial \textbf{x}_d}$} reflects the interaction utility between $\textbf{y}_d$ and feature components that are not linearly-correlated with $\textbf{y}_d$, \emph{i.e.}, removing linearly-correlated components, $\forall j, \textbf{y}_{j}- \|\textbf{y}_j\| \cos(\textbf{y}_j, \textbf{y}_d) \textbf{o}_d$.
	Empirically, linearly-correlated feature components usually represent similar concepts, thereby having stronger interaction utilities than non-correlated feature components. We have conducted experiments to verify such an empirical claim in Section~\ref{sec:exp}. According to Theorem~\ref{th:3}, the interaction utility between linearly-correlated feature components cannot pass through the BN operation, \emph{i.e.}, {$\frac{\partial L_d^{\text{linear}}(\textbf{H}_{d,:}^{\text{off}})}{\partial \textbf{x}_d} = \textbf{0}$}. Therefore, for each dimension $d$, we can consider that a considerable ratio of the influence of {$\frac{\partial \textrm{Loss}^{\text{off}}(\textbf{H}^{\text{off}})}{\partial \textbf{x}_d}$} cannot pass through the BN operation.
	\begin{corollary}\label{co:1}
		\emph{(proven in supplementary materials)}.
		Based on Theorems~\ref{th:1}, \ref{th:2}, and \ref{th:3}, we can prove that in the training phase of a neural network with BN operations, {\rm\small$\frac{\partial ^2\textrm{Loss}^{\text{batch}}(\textbf{g},\textbf{H})}{\partial \textbf{X} \partial \textbf{g}}=\textbf{0},\frac{\partial ^2\textrm{Loss}^{\text{batch}}(\textbf{g},\textbf{H})}{\partial \textbf{X} \partial \textbf{H}^{\text{diag}}}=\textbf{0}$}, and {\rm\small$\forall d,\frac{\partial ^2\textrm{Loss}^{\text{batch}}(\textbf{g},\textbf{H})}{\partial \textbf{x}_d \partial \textbf{H}_{d,:}^{\text{off}}}=\frac{\partial ^2 L_d^{\text{linear}}(\textbf{H}_{d,:}^{\text{off}})}{\partial \textbf{x}_d \partial \textbf{H}_{d,:}^{\text{off}}}+\frac{\partial ^2 L_d^{\text{non}}(\textbf{H}_{d,:}^{\text{off}})}{\partial \textbf{x}_d \partial \textbf{H}_{d,:}^{\text{off}}}$}; where {\rm\small$\frac{\partial ^2 L_d^{\text{linear}}(\textbf{H}_{d,:}^{\text{off}})}{\partial \textbf{x}_d \partial \textbf{H}_{d,:}^{\text{off}}} = \textbf{0}$;}. In contrast, in the testing phase, {\rm\small$\frac{\partial ^2\textrm{Loss}^{\text{batch}}(\textbf{g},\textbf{H})}{\partial \textbf{X} \partial \textbf{g}}\ne\textbf{0},\frac{\partial ^2\textrm{Loss}^{\text{batch}}(\textbf{g},\textbf{H})}{\partial \textbf{X} \partial \textbf{H}^{\text{diag}}}\ne\textbf{0}$}, and {\rm\small$\frac{\partial ^2 L_d^{\text{linear}}(\textbf{H}_{d,:}^{\text{off}})}{\partial \textbf{x}_d \partial \textbf{H}_{d,:}^{\text{off}}} \ne \textbf{0}$}.
	\end{corollary}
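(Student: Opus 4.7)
The plan is to combine the decomposition in equation~(\ref{eq:Taylor_simple}) with Lemma~\ref{le:1} so that each of $\textbf{g}$, $\textbf{H}^{\text{diag}}$, and $\textbf{H}^{\text{off}}$ appears in exactly one compositional term of $\textrm{Loss}^{\text{batch}}$, and then to read off each mixed partial by invoking the corresponding earlier theorem. Writing
\begin{equation*}
\textrm{Loss}^{\text{batch}} = \textrm{Loss}^{\text{constant}} + \textrm{Loss}^{\text{grad}}(\textbf{g}) + \textrm{Loss}^{\text{diag}}(\textbf{H}^{\text{diag}}) + \textrm{Loss}^{\text{off}}(\textbf{H}^{\text{off}}) + \sum\nolimits_{i} R_2(\textbf{y}^{(i)} - \tilde{\textbf{y}}),
\end{equation*}
one sees that $\textrm{Loss}^{\text{constant}}$ and the residuals depend on none of the three variables, and each of the three Taylor terms is linear in its own parameter and free of the other two. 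Hence {\small$\frac{\partial^2 \textrm{Loss}^{\text{batch}}}{\partial \textbf{X}\,\partial \textbf{g}} = \frac{\partial^2 \textrm{Loss}^{\text{grad}}(\textbf{g})}{\partial \textbf{X}\,\partial \textbf{g}}$}, with analogous reductions for the diagonal and off-diagonal cases.

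Next I would commute the order of differentiation (the Taylor terms are smooth in both arguments, so mixed partials agree) and apply the first-derivative identities proven earlier. Theorem~\ref{th:1} gives {\small$\partial \textrm{Loss}^{\text{grad}}(\textbf{g})/\partial \textbf{X} = \textbf{0}$} as an identity in $\textbf{g}$, so differentiating once more in $\textbf{g}$ yields $\textbf{0}$; the same argument with Theorem~\ref{th:2} handles the $\textbf{H}^{\text{diag}}$ term. For the off-diagonal term, Lemma~\ref{le:2} splits $\partial \textrm{Loss}^{\text{off}}/\partial \textbf{x}_d$ into its linear and nonlinear pieces, Theorem~\ref{th:3} kills the linear piece and supplies the explicit form of the mixed partial of the nonlinear piece, and a second differentiation in $\textbf{H}^{\text{off}}_{d,:}$ produces the claimed split.

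For the testing-phase statements, the key observation is that BN in inference mode uses running statistics $\tilde{\boldsymbol{\mu}}, \tilde{\boldsymbol{\sigma}}$ that do not depend on the current batch $\textbf{X}$. Then $\textbf{y}^{(i)} = \textit{diag}(\tilde{\boldsymbol{\sigma}})^{-1}(\textbf{x}^{(i)} - \tilde{\boldsymbol{\mu}})$ is affine in $\textbf{x}^{(i)}$ with constant coefficients, so the cross-sample cancellations that drive Theorems~\ref{th:1}--\ref{th:3} (which crucially use that $\boldsymbol{\mu}$ and $\boldsymbol{\sigma}$ are themselves functions of $\textbf{X}$) no longer occur. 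A direct computation yields {\small$\partial \textrm{Loss}^{\text{grad}}(\textbf{g})/\partial \textbf{x}^{(i)} = \textit{diag}(\tilde{\boldsymbol{\sigma}})^{-1}\textbf{g}$}, which is generically nonzero, and analogous direct differentiations supply the nonvanishing of the mixed partials in the $\textbf{H}^{\text{diag}}$ and linear off-diagonal cases.

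The main subtlety I anticipate is largely bookkeeping: the corollary repackages the first-derivative content of Theorems~\ref{th:1}--\ref{th:3} as statements about mixed second partials, exploiting linearity in $\textbf{g}$, $\textbf{H}^{\text{diag}}$, or $\textbf{H}^{\text{off}}$. The only step that calls for some care is making the training-versus-testing contrast precise, which reduces to checking that, with the running statistics fixed, the relevant gradients are constant invertible linear images of $\textbf{g}$, $\textbf{H}^{\text{diag}}$, or the projections appearing in Theorem~\ref{th:3}, none of which vanish generically.
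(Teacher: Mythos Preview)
Your proposal is correct and follows essentially the same approach as the paper: decompose $\textrm{Loss}^{\text{batch}}$ so that $\textbf{g}$, $\textbf{H}^{\text{diag}}$, and $\textbf{H}^{\text{off}}$ each appear in a single linear term, reduce the mixed partials accordingly, invoke Theorems~\ref{th:1}--\ref{th:3} for the training phase, and for the testing phase observe that the fixed running statistics make the Jacobian $\textbf{J}_d$ a constant diagonal map so the cancellations driving those theorems disappear. The only cosmetic difference is that the paper phrases the testing-phase Jacobian as reducing to the identity (absorbing the $1/\tilde{\boldsymbol{\sigma}}_d$ scale), whereas you keep the $\textit{diag}(\tilde{\boldsymbol{\sigma}})^{-1}$ factor explicit; either way the nonvanishing conclusions follow.
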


	\textbf{Findings:} {\small$\frac{\partial ^2\textrm{Loss}^{\text{batch}}(\textbf{g},\textbf{H})}{\partial \textbf{X} \partial \textbf{g}^{\top}}=\textbf{0},\frac{\partial ^2\textrm{Loss}^{\text{batch}}(\textbf{g},\textbf{H})}{\partial \textbf{X} \partial \textbf{H}^{\text{diag}}}=\textbf{0}$}, and {\small$\forall d, \frac{\partial ^2 L_d^{\text{linear}}(\textbf{H}_{d,:}^{\text{off}})}{\partial \textbf{x}_d \partial \textbf{H}_{d,:}^{\text{off}}} = \textbf{0}$} in Corollary~\ref{co:1} show that the BN operation will block the back-propagation of the following three types of influence, \emph{i.e.}, (1) the influence of the first derivatives in $\textbf{g}$, (2) the influence of the diagonal elements in the Hessian matrix $\textbf{H}$, and (3) a considerable ratio of the influence of off-diagonal elements in the Hessian matrix $\textbf{H}$. More crucially, the BN's blocking of the above influence exists in the training phase, but it does not exist in the testing phase, which may bring in potential risks of neural networks with BN operations.

	\textbf{The reason for blindness.} According to Equations~(\ref{eq:bn1}) and (\ref{eq:bn2}), the BN operation contains two phases, the affine transformation phase and the standardization phase. We find that derivatives of $\boldsymbol{\mu}$ and $\boldsymbol{\sigma}$ in the standardization phase eliminate the influence of $\textrm{Loss}^{\text{grad}}(\textbf{g}),\textrm{Loss}^{\text{diag}}(\textbf{H}^{\text{diag}})$, and $L_d^{\text{linear}}(\textbf{H}_{d,:}^{\text{off}})$. Please see supplementary materials for the proof.
	
	Furthermore, although $\boldsymbol{\gamma}$ in the affine transformation phase can \textbf{alleviate} the blindness to $\textrm{Loss}^{\text{grad}}(\textbf{g})$ by encoding the gradient $\textbf{g}$ of the first-order term of the loss, $\textrm{Loss}^{\text{diag}}(\textbf{H}^{\text{diag}})$ and $L_d^{\text{linear}}(\textbf{H}_{d,:}^{\text{off}})$ still cannot influence parameters in all layers before the BN operations. Beyond this, the distinctive contribution of this study is to warn people for the potential defect of the BN operation in learning from specific loss information.

	\textbf{What if the neural network has no second derivations or has zero second derivatives.} This paper investigates the effects of the BN operation on the first and second derivatives of the loss function. However, some neural networks have no second derivatives or zero second derivatives, \emph{e.g.}, the loss is the output of a ReLU network that is only composed of linear layers, ReLU layers and MaxPooling layers, without other non-linear operations. Nevertheless, the instability of gating states in non-linear layers still changes the gradients of the loss. This allows the loss in Equation~(\ref{eq:TaylorExp}) to be approximated by an equivalent Hessian matrix $\textbf{H}$, which can be simply computed using the finite difference method~\cite{2009Scientific}. All our conclusions can be transferred to the equivalent Hessian matrix, which ensures the broad applicability of our analysis. Please see supplementary materials for proofs.

	\textbf{Discussions on whether we can apply the same Taylor series expansion to all samples.}
	Theoretically, if losses of all samples share the same analytic loss formula, then these losses can be represented using the same Taylor series expansion. However, people usually need another assumption, \emph{i.e.},
	the third or even higher-order terms of the loss is not large, in order to ensure the accuracy of the Taylor series expansion. To this end, we can set $\tilde{\textbf{y}}$ to the mean value $\mathbb{E}_i [\textbf{y}^{(i)}]$ to boost the fitness of the Taylor series expansion of losses on most samples. In particular, we have proven that the third and higher-order derivatives of the sigmoid function have small strengths when the classification is confident (please see supplementary materials for the proof).

	Even though $\textbf{y}^{(i)}$ is not close to $\tilde{\textbf{y}}$ in more general applications, even though the first- and second-order terms do not dominate the loss, this study is still of considerable value. It is because \textbf{the most crucial issue is whether the loss function can faithfully pass its ``\emph{all}'' effects through the BN operation.} Although the first two terms in the Taylor series expansion just take a small portion of all loss effects, people still have the right to know the truth of the BN's behavior. Therefore, the clarification of detailed potential defects of the BN's behavior has distinctive values to the safe and trustworthy AI.

	$\bullet$ \textbf{Case 2: Different samples in a mini-batch have different analytic formulas of loss functions}. For example, losses of different samples are computed with different ground-truth labels, thereby having different Taylor series expansions of losses.

	\begin{lemma}\label{le:3}
		\emph{(proven in supplementary materials)}
		The loss function for each $i$-th sample, $1\le i \le n$, can be written as {\rm $\textrm{Loss}(\textbf{y}^{(i)};\tilde{\textbf{y}}) = \textrm{Loss}(\tilde{\textbf{y}}) + \textrm{Loss}^{\text{grad}}(\bbar{\textbf{g}}) +\textrm{Loss}^{\text{diag}}(\bbar{\textbf{H}}^{\text{diag}}) + \textrm{Loss}^{\text{off}}(\bbar{\textbf{H}}^{\text{off}}) + \textrm{Loss}^{\text{grad}}(\textbf{g}^{\prime (i)}) +\textrm{Loss}^{\text{diag}}(\textbf{H}^{\prime (i) \text{diag}}) + \textrm{Loss}^{\text{off}}(\textbf{H}^{\prime (i) \text{off}}) + R_2(\textbf{y}^{(i)}-\tilde{\textbf{y}})$}, where {\rm$\bbar{\textbf{g}} = \frac{1}{n}\sum_{i=1}^{n}\textbf{g}^{(i)}$} and {\rm$\bbar{\textbf{H}} = \frac{1}{n}\sum_{i=1}^{n}\textbf{H}^{(i)}$} denote the average gradient and the average Hessian matrix, respectively, which are shared by all samples in the mini-batch. {\rm$\textbf{g}^{\prime (i)} = \textbf{g}^{(i)} - \bbar{\textbf{g}}$} and {\rm$\textbf{H}^{\prime (i)} = \textbf{H}^{(i)} - \bbar{\textbf{H}}$} denote the distinctive gradient and the distinctive Hessian matrix of the $i$-th sample, respectively.
	\end{lemma}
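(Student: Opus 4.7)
The plan is to prove Lemma~\ref{le:3} by a purely algebraic decomposition of the per-sample Taylor expansion, exploiting the linearity of the first- and second-order terms in the gradient and Hessian, together with the diagonal/off-diagonal Hessian split already established in Lemma~\ref{le:1}.

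First, because different samples in the mini-batch have different analytic loss formulas, I would write the second-order Taylor expansion of $\textrm{Loss}(\textbf{y}^{(i)};\tilde{\textbf{y}})$ about the common anchor $\tilde{\textbf{y}}$ using the sample-specific gradient $\textbf{g}^{(i)}$ and Hessian $\textbf{H}^{(i)}$, so that the expansion reads $\textrm{Loss}(\textbf{y}^{(i)};\tilde{\textbf{y}}) = \textrm{Loss}(\tilde{\textbf{y}}) + (\textbf{y}^{(i)}-\tilde{\textbf{y}})^{\top}\textbf{g}^{(i)} + \tfrac{1}{2}(\textbf{y}^{(i)}-\tilde{\textbf{y}})^{\top}\textbf{H}^{(i)}(\textbf{y}^{(i)}-\tilde{\textbf{y}}) + R_2(\textbf{y}^{(i)}-\tilde{\textbf{y}})$. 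This is simply Equation~(\ref{eq:TaylorExp}) written with its per-sample derivatives restored.

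Second, using the trivial identities $\textbf{g}^{(i)} = \bbar{\textbf{g}} + \textbf{g}^{\prime (i)}$ and $\textbf{H}^{(i)} = \bbar{\textbf{H}} + \textbf{H}^{\prime (i)}$, which follow directly from the definitions of $\bbar{\textbf{g}}$, $\bbar{\textbf{H}}$ and of the distinctive parts, I would distribute the first- and second-order Taylor terms across these sums. Linearity of $(\textbf{y}^{(i)}-\tilde{\textbf{y}})^{\top}(\cdot)$ and of the quadratic form $(\textbf{y}^{(i)}-\tilde{\textbf{y}})^{\top}(\cdot)(\textbf{y}^{(i)}-\tilde{\textbf{y}})$ in their matrix/vector arguments immediately produces four grouped terms: a shared linear term, a shared quadratic term, a distinctive linear term, and a distinctive quadratic term. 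I would then invoke Lemma~\ref{le:1} twice, applying its diagonal/off-diagonal decomposition once to $\bbar{\textbf{H}}$ (yielding $\bbar{\textbf{H}}^{\text{diag}}$ and $\bbar{\textbf{H}}^{\text{off}}$) and once to $\textbf{H}^{\prime (i)}$ (yielding $\textbf{H}^{\prime (i)\,\text{diag}}$ and $\textbf{H}^{\prime (i)\,\text{off}}$). Re-identifying each piece with the corresponding per-sample summand of $\textrm{Loss}^{\text{grad}}(\cdot)$, $\textrm{Loss}^{\text{diag}}(\cdot)$, and $\textrm{Loss}^{\text{off}}(\cdot)$ from Equation~(\ref{eq:Taylor_simple}) and Lemma~\ref{le:1} then yields the seven labelled terms plus the remainder stated in Lemma~\ref{le:3}.

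The main subtlety is not computational but notational: the symbols $\textrm{Loss}^{\text{grad}}$, $\textrm{Loss}^{\text{diag}}$, $\textrm{Loss}^{\text{off}}$ were originally introduced as sums over the whole mini-batch, whereas here they must be read as the single-sample summand (with the outer sum over $i$ suppressed), since the left-hand side and the remainder $R_2(\textbf{y}^{(i)}-\tilde{\textbf{y}})$ are per-sample quantities. I would state this convention explicitly at the start of the proof, and then the decomposition becomes a routine rearrangement. The real payoff comes immediately afterwards: once the per-sample loss is written in this form, the Case~1 blindness analysis of Theorems~\ref{th:1}--\ref{th:3} can be re-run verbatim on the shared triple $(\bbar{\textbf{g}},\bbar{\textbf{H}}^{\text{diag}},\bbar{\textbf{H}}^{\text{off}})$, so that the main conclusions transfer from Case~1 to the general Case~2 without any further work on derivatives of the BN map itself.
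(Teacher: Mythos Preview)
Your proposal is correct and matches the paper's own proof essentially step for step: write the per-sample Taylor expansion with sample-specific $\textbf{g}^{(i)}$ and $\textbf{H}^{(i)}$, substitute $\textbf{g}^{(i)}=\bbar{\textbf{g}}+\textbf{g}^{\prime(i)}$ and $\textbf{H}^{(i)}=\bbar{\textbf{H}}+\textbf{H}^{\prime(i)}$, and expand by linearity together with the diagonal/off-diagonal split. Your explicit flagging of the notational overload (batch-sum symbols reused as per-sample summands) is in fact more careful than the paper, which glosses over this point in its appendix proof.
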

	
	Just like Case 1, we prove that $\textrm{Loss}^{\text{grad}}(\bbar{\textbf{g}})$ and $\textrm{Loss}^{\text{diag}}(\bbar{\textbf{H}}^{\text{diag}})$ have no gradients on features before the BN operation, and $\textrm{Loss}^{\text{off}}(\bbar{\textbf{H}}^{\text{off}})$ does not have strong effects on features before the BN operation. Please see supplementary materials for the proof.

	\section{Experiments}\label{sec:exp}
	
	In this section, we conducted experiments to verify theorems above, and analyzed the impact of the blindness problem on feature representations of DNNs.
	
	$\bullet$ \textbf{Experimental verification of Theorems~\ref{th:1} and \ref{th:2}}. We conducted two experiments on synthetic studies and real applications, respectively, to verify that $\textrm{Loss}^{\text{grad}}(\textbf{g})$ and $\textrm{Loss}^{\text{diag}}(\textbf{H}^{\text{diag}})$ had no gradients on features before the BN operation.
	
	\emph{Verifying the blindness of the BN operation on the synthesized loss functions.} In this experiment, we directly designed the loss function as a polynomial with derivatives of different orders, in order to evaluate the effects of the BN operation on derivatives of different orders. To this end, we synthesized a group of five loss functions with derivatives of different orders, $\forall 0\le k\le 4$, $\textrm{Loss}_k( \textbf{y} | \boldsymbol{\lambda}) =\sum_{i=1}^{n} \sum_{k^\prime=k}^4
	\lambda_{k^\prime} (y^{(i)})^{k^\prime}$, where parameters $\boldsymbol{\lambda} = [\lambda_0,\lambda_1,\ldots,\lambda_4]^{\top} \sim N(\boldsymbol{\mu}=\textbf{0},\Sigma=I_{5\times 5})$ were sampled from a Gaussian distribution for 1000 times. Thus, we constructed a dataset\footnotemark[1] containing 1000 groups of loss functions. Then, for each group of losses \emph{w.r.t.} a specific $\boldsymbol{\lambda}$, we put each $k$-th loss, $\textrm{Loss}_k$, on a BN operation, which was on the top of a 5-layer MLP network (each layer containing $M$=100 neurons), to construct a neural network. The input of the MLP was high-dimensional noises sampled from a Gaussian distribution $N(\boldsymbol{\mu}=\textbf{0},\Sigma=I_{M\times M})$, and the output was a scalar after a linear operation. Thus, we obtained a group of five MLPs with \{$\textrm{Loss}_k$\}, $0\le k\le4$, for testing.

	We measured {$\Delta \text{grad}_q = \mathbb{E}_{\boldsymbol{\lambda}} \left[ \lVert \frac{\partial \textrm{Loss}_q( \textbf{y} | \boldsymbol{\lambda})  }{\partial \textbf{x}}-\frac{\partial \textrm{Loss}_{q+1}( \textbf{y}| \boldsymbol{\lambda}) }{\partial \textbf{x}} \rVert  /  \lVert \frac{\partial \textrm{Loss}_q( \textbf{y}| \boldsymbol{\lambda})  }{\partial \textbf{x}} \rVert \right],q=0,1,2,3$}, to examine whether the $q$-th order term of the loss could pass its influence through the BN operation. Table~\ref{tab:analytic} shows that $\Delta \text{grad}_0 = 0, \Delta \text{grad}_1 \approx 0$, and $\Delta \text{grad}_2 \approx 0$, which proved that the zeroth-order, the first-order, and the second-order terms\footnote{In this experiment, the Hessian matrix reduced to the scalar second derivative.} of the loss could not pass their influence through the BN operation. Besides, $\Delta \text{grad}_3=0.51 \pm 0.33$ indicated that the third-order term of the loss in the Taylor series expansion could successfully pass its influence through the BN operation.

	\begin{table}
		\centering
		\caption{Verifying the BN's effects on the back-propagation of derivatives of different orders. $\Delta \text{grad}_q \approx 0$ indicated that derivatives of the $q$-th order did not pass through the BN operation. The small deviation was caused by the accumulation of tiny systematic computational errors in a DNN.}
		\vspace{1pt}
		\label{tab:analytic}
		\resizebox{0.5\linewidth}{!}{
			\begin{tabular}{cccc}
				\toprule
				$\Delta \text{grad}_0$     & $\Delta \text{grad}_1$ & $\Delta \text{grad}_2$ & $\Delta \text{grad}_3$  \\
				\midrule
				0 {\small $\pm$ 0}& 1.79e-7 {\small $\pm$ 2.07e-7} & 2.85e-7
				{\small $\pm$ 3.94e-8} & 0.51 {\small $\pm$ 0.33} \\
				\bottomrule
		\end{tabular}}
		\vspace{-5pt}
	\end{table}

	\emph{Verifying the blindness of the BN operation in image classification}. We trained a VGG-11 network \cite{vgg} for image classification on the CIFAR-10 dataset \cite{krizhevsky2009learning}, in which we added a BN layer before the second top FC layer. Then, we tested the blindness of the BN operation to the gradient $\textbf{g}$ and diagonal elements in the Hessian matrix $\textbf{H}$ of the VGG-11 network. Specifically, in order to determine ground-truth effects of the first-order and the second-order terms in the Taylor series expansion of the classification loss ($\textrm{Loss}^* = \sum_{i=1}^{n}\textrm{Loss}^{\text{cls}}(\textbf{y}^{(i)})$), we manually added noisy first derivatives and noisy second derivatives to construct three additional losses ($\textrm{Loss}_2$, $\textrm{Loss}_3$, and $\textrm{Loss}_4$). \emph{I.e.}, {$\textrm{Loss}_2 = \sum_{i=1}^{n} (\textrm{Loss}^{\text{cls}}(\textbf{y}^{(i)}) + \boldsymbol{\epsilon}^{\top} \textbf{y}^{(i)})$, $\textrm{Loss}_3 = \sum_{i=1}^{n}(\textrm{Loss}^{\text{cls}}(\textbf{y}^{(i)}) + (\textbf{y}^{(i)}-\tilde{\textbf{y}})^{\top} \textit{diag}(\boldsymbol{\epsilon})(\textbf{y}^{(i)}-\tilde{\textbf{y}}))$}, and {$\textrm{Loss}_4 = \sum_{i=1}^{n}(\textrm{Loss}^{\text{cls}}(\textbf{y}^{(i)}) + (\textbf{y}^{(i)}-\tilde{\textbf{y}})^{\top} \textbf{E}^{\text{off}}(\textbf{y}^{(i)}-\tilde{\textbf{y}}))$}. We set $\tilde{\textbf{y}}=\frac{1}{n}\sum_{i=1}^{n}\textbf{y}^{(i)}=\textbf{0}$ in this experiment. Each element in $\boldsymbol{\epsilon}\in\mathbb{R}^{D}$ and $\textbf{E}^{\text{off}}\in\mathbb{R}^{D\times D}$ was sampled from a Gaussian distribution $N(\mu=0,\sigma^2=0.1^2)$. Diagonal elements in $\textbf{E}^{\text{off}}$ were set to 0.

	In this way, if {\small$\frac{\partial \textrm{Loss}^* }{\partial \textbf{X}}=\frac{\partial \textrm{Loss}_2 }{\partial \textbf{X}} =\frac{\partial \textrm{Loss}_3 }{\partial \textbf{X}} $}, then it proved that the first derivatives in $\textbf{g}$ and diagonal elements in $\textbf{H}$ could not pass their influence through the BN operation. To this end, we used metrics {\small$\Delta\text{grad}^{\text{first}} = \lVert \frac{\partial \textrm{Loss}^* }{\partial \textbf{X}}-\frac{\partial \textrm{Loss}_2 }{\partial \textbf{X}} \rVert_F / \lVert \frac{\partial \textrm{Loss}^* }{\partial \textbf{X}} \rVert_F$, $\Delta\text{grad}^{\text{second,diag}} = \lVert \frac{\partial \textrm{Loss}^* }{\partial \textbf{X}}-\frac{\partial \textrm{Loss}_3 }{\partial \textbf{X}} \rVert_F / \lVert \frac{\partial \textrm{Loss}^*  }{\partial \textbf{X}} \rVert_F$}, and {\small$\Delta\text{grad}^{\text{second,off}} =\lVert \frac{\partial \textrm{Loss}^*  }{\partial \textbf{X}}-\frac{\partial \textrm{Loss}_4 }{\partial \textbf{X}} \rVert_F / \lVert \frac{\partial \textrm{Loss}^*}{\partial \textbf{X}} \rVert_F$} to measure the influence of the first derivatives in $\textbf{g}$, diagonal elements in $\textbf{H}$, and off-diagonal elements in $\textbf{H}$ on the gradient $\frac{\partial \textrm{Loss}^*  }{\partial \textbf{X}}$. According to Table~\ref{tab:noise}, $\Delta\text{grad}^{\text{first}} \approx 0$ and $\Delta\text{grad}^{\text{second,diag}}\approx 0$ proved that elements in $\textbf{g}$ and diagonal elements in $\textbf{H}$ could not pass their influence through the BN operation. Besides, the large value of $\Delta\text{grad}^{\text{second,off}}$ indicated that off-diagonal elements in $\textbf{H}$ could pass their influence through the BN operation.
	
	\begin{table}[t]
		\centering
		\caption{Verifying the BN's effects on the back-propagation of the first and second derivatives. $\Delta\text{grad}^{\text{first}}\approx 0$ and $\Delta\text{grad}^{\text{second,diag}} \approx 0$ indicated that the gradient $\textbf{g}$ and diagonal elements in Hessian matrix $\textbf{H}$ could not pass their influence through the BN operation. The small deviation was caused by the accumulation of tiny systematic computational errors in a DNN.}
		\vspace{1pt}
		\label{tab:noise}
		\resizebox{0.5\linewidth}{!}{\begin{tabular}{ccc}
				\toprule
				$\Delta\text{grad}^{\text{first}}$     & $\Delta\text{grad}^{\text{second,diag}} $     & $\Delta\text{grad}^{\text{second,off}} $ \\
				\midrule
				1.37e-8 {\small $\pm$ 2.72e-8} & 2.72e-8 {\small $\pm$ 5.41e-8} & 1019.49 {\small $\pm$ 116.45} \\
				\bottomrule
		\end{tabular}}
		\vspace{-5pt}
	\end{table}
	
	$\bullet$ \textbf{Experimental verification of Theorem~\ref{th:3}}. In this experiment, we aimed to verify that {$L_d^{\text{linear}}(\textbf{H}_{d,:}^{\text{off}})$} had no gradients on features before the BN operation. To this end, we directly computed the norm of the gradient {\small$\lVert  \frac{\partial L_d^{\text{linear}}(\textbf{H}_{d,:}^{\text{off}})}{\partial \textbf{x}_d} \rVert$, where $\textbf{H}_{d,:}^{\text{off}}$} was computed following~\cite{cohen2020gradient}. To comprehensively test the BN on different layers, we revised the AlexNet~\cite{krizhevsky2012imagenet} by adding five additional FC layers before the top FC layer, and revised the LeNet~\cite{lecun1989backpropagation} by adding seven additional FC layers before the top FC layer. Please see supplementary materials about the revised architectures. For each DNN, we added a BN layer before the 1st, 2nd, and 3rd top FC layers, respectively, to construct \textit{AlexNet-1, AlexNet-2, AlexNet-3} and \textit{LeNet-1, LeNet-2, LeNet-3}. Table~\ref{tab:th3} reports {\small$\lVert  \frac{\partial L_d^{\text{linear}}(\textbf{H}_{d,:}^{\text{off}})}{\partial \textbf{x}_d} \rVert$} averaged over different feature dimensions ($d$) and different batches. It proved that {$L_d^{\text{linear}}(\textbf{H}_{d,:}^{\text{off}})$} had no gradients on features before the BN operation.
	
	\begin{table}[t]
		\centering
		\caption{Gradients of {\small$L_d^{\text{linear}}(\textbf{H}_{d,:}^{\text{off}}) $} \emph{w.r.t.} the input feature $\textbf{x}_d$ of the BN operation. The small deviation was caused by the accumulation of tiny systematic computational errors in a DNN.}
		\vspace{1pt}
		\label{tab:th3}
		\resizebox{\linewidth}{!}{\begin{tabular}{lcccccc}
				\toprule
				& AlexNet-1     & AlexNet-2  & AlexNet-3  &  LeNet-1 & LeNet-2 & LeNet-3 \\
				\midrule
				{\small$\lVert  \frac{\partial L_d^{\text{linear}}(\textbf{H}_{d,:}^{\text{off}})}{\partial \textbf{x}_d} \rVert$} &  7.12e-12 {\small $\pm$ 8.64e-12}   &  2.85e-11 {\small $\pm$ 5.11e-11} &  3.86e-11 {\small $\pm$ 7.45e-11} &  1.01e-10 {\small $\pm$ 1.98e-9} &  5.17e-12 {\small $\pm$ 8.75e-12} &  1.45e-11 {\small $\pm$ 2.28e-11} \\
				\bottomrule
		\end{tabular}}
		\vspace{-8pt}
	\end{table}
	
	\begin{table}[t]
		\centering
		\caption{Comparing the relative significance between {\small$\partial L_d^{\text{linear}}(\textbf{H}_{d,:}^{\text{off}}) / \partial \textbf{y}_d $} and {\small$\partial L_d^{\text{non}}(\textbf{H}_{d,:}^{\text{off}}) / \partial \textbf{y}_d $}.}
		\vspace{1pt}
		\label{tab:dominating}
		\resizebox{0.95\linewidth}{!}{\begin{tabular}{lcccccc}
				\toprule
				& AlexNet-1     & AlexNet-2  & AlexNet-3 &  LeNet-1 & LeNet-2 & LeNet-3 \\
				\midrule
				{\small$\lVert \frac{ \partial L_d^{\text{linear}}(\textbf{H}_{d,:}^{\text{off}})}{\partial \textbf{y}_d} \rVert / \lVert \frac{\partial L_d(\textbf{H}_{d,:}^{\text{off}})}{\partial \textbf{y}_d} \rVert$}&  {\bf0.84 {\small $\pm$ 0.19} } &  {\bf0.73 {\small $\pm$ 0.25} } & {\bf 0.85 {\small $\pm$ 0.22} } & {\bf 0.67 {\small $\pm$ 0.29} } &{\bf  0.68 {\small $\pm$ 0.29} } & {\bf 0.77 {\small $\pm$ 0.27}  }\\
				{\small$\lVert \frac{ \partial L_d^{\text{non}}(\textbf{H}_{d,:}^{\text{off}})}{\partial \textbf{y}_d} \rVert / \lVert \frac{\partial L_d(\textbf{H}_{d,:}^{\text{off}})}{\partial \textbf{y}_d} \rVert$} & 0.47 {\small $\pm$ 0.22} &  0.56 {\small $\pm$ 0.29} &  0.40 {\small $\pm$ 0.28} &  0.63 {\small $\pm$ 0.27} &  0.61 {\small $\pm$ 0.29} &  0.51 {\small $\pm$ 0.27}   \\
				\bottomrule
		\end{tabular}}
	\end{table}

	$\bullet$ \textbf{Examining that interactions between linearly-correlated feature components took a main part of the loss {$L_d(\textbf{H}_{d,:}^{\text{off}})$}.} According to Lemma~\ref{le:2}, {$L_d(\textbf{H}_{d,:}^{\text{off}})=L_d^{\text{linear}}(\textbf{H}_{d,:}^{\text{off}})+L_d^{\text{non}}(\textbf{H}_{d,:}^{\text{off}})$}. In this experiment, we aimed to show that the {$L_d^{\text{linear}}(\textbf{H}_{d,:}^{\text{off}})$} term had considerable gradients \emph{w.r.t.} $\textbf{y}_d$, which took a large part of {\small$\frac{\partial L_d(\textbf{H}_{d,:}^{\text{off}})}{\partial \textbf{y}_d}$}. If so, it means that the BN's effects of eliminating all gradient of {$L_d^{\text{linear}}(\textbf{H}_{d,:}^{\text{off}})$} on $\textbf{x}_d$, \emph{i.e.}, {\small$\frac{\partial L_d^{\text{linear}}(\textbf{H}_{d,:}^{\text{off}})}{\partial \textbf{x}_d}=\textbf{0}$}, had significant influence on the training process. To this end, we computed {\small$\lVert \frac{ \partial L_d^{\text{linear}}(\textbf{H}_{d,:}^{\text{off}})}{\partial \textbf{y}_d} \rVert / \lVert \frac{\partial L_d(\textbf{H}_{d,:}^{\text{off}})}{\partial \textbf{y}_d} \rVert$} to measure the relative significance of the compositional influence of {\small$ \frac{ \partial L_d^{\text{linear}}(\textbf{H}_{d,:}^{\text{off}})}{\partial \textbf{y}_d}$} to {\small$ \frac{\partial L_d(\textbf{H}_{d,:}^{\text{off}})}{\partial \textbf{y}_d} $}, and we computed {\small$\lVert \frac{ \partial L_d^{\text{non}}(\textbf{H}_{d,:}^{\text{off}})}{\partial \textbf{y}_d} \rVert / \lVert \frac{\partial L_d(\textbf{H}_{d,:}^{\text{off}})}{\partial \textbf{y}_d} \rVert$} to measure the relative significance of the compositional influence of {\small$\frac{ \partial L_d^{\text{non}}(\textbf{H}_{d,:}^{\text{off}})}{\partial \textbf{y}_d} $} to {\small$ \frac{\partial L_d(\textbf{H}_{d,:}^{\text{off}})}{\partial \textbf{y}_d} $}. We conducted experiments on \textit{AlexNet-1, AlexNet-2, AlexNet-3, LeNet-1, LeNet-2}, and \textit{LeNet-3} as introduced above. Table~\ref{tab:dominating} reports the relative significance averaged over different feature dimensions ($d$) and different batches. It proved that {$L_d^{\text{linear}}(\textbf{H}_{d,:}^{\text{off}})$} had considerable impacts on the overall gradient of {$ L_d(\textbf{H}_{d,:}^{\text{off}})$} \emph{w.r.t.} $\textbf{y}_d$, which demonstrated that the BN's elimination of the gradient {$\frac{ \partial L_d^{\text{linear}}(\textbf{H}_{d,:}^{\text{off}})}{\partial \textbf{y}_d}$} had non-negligible impacts.

	$\bullet$ \textbf{Analyzing the features learned by neural networks with BN operations.}
	We conducted experiments to compare DNNs with BN operations and DNNs without BN operations in different applications, in order to analyze the influence of the BN operation on feature representations.
	
	\emph{Experiment 1:} We measured the BN's effects on feature representations of the invertible generative model RealNVP~\cite{dinh2016density}. The training of RealNVPs belonged to Case 1 that losses of all samples in a mini-batch shared the same analytic formula, \emph{i.e.}, $-\log p(\textit{input})$. The vanilla RealNVP in \cite{dinh2016density} had BN operations, thus being termed \textit{RealNVP-BN}. Besides, we constructed another RealNVP by replacing all BN layers with LN layers\footnote{Please see supplementary materials about how to invert features in RealNVP-LN.}, as a baseline (namely \textit{RealNVP-LN}) for comparison. All RealNVPs were trained on the MNIST dataset \cite{lecun1998gradient}. In addition, the supplementary material shows results on more RealNVP models with various revised architectures, which also yielded similar conclusions.
	
	We tested whether a RealNVP model could successfully distinguish real images and fake images. This was the key capacity for a generative model. Specifically, a well-trained RealNVP was supposed to predict high log-likelihood on real images $I_1^{\text{real}}$ and $I_2^{\text{real}}$, and yield low log-likelihood on fake images. We generated fake images by linear interpolation, $I_{\alpha}^{\text{inter}} = \alpha I_1^{\text{real}} + (1-\alpha) I_2^{\text{real}}, \alpha\in(0,1)$. To sharpen the difference caused by the BN operation, we learned different groups of RealNVP-BN and RealNVP-LN, each being trained to generate a specific pair of categories, as Figure~\ref{fig:flow} shows. Then, for each RealNVP, we computed the average log-likelihood of interpolated images $I_{\alpha}^{\text{inter}}$ \emph{w.r.t.} a specific interpolation rate $\alpha$, \emph{i.e.}, $\mathbb{E}_{\forall I_1^{\text{real}},I_2^{\text{real}}}[ \log p(I_{\alpha}^{\text{inter}}) ]$. Figure~\ref{fig:flow} shows that RealNVP-LN usually assigned much higher log-likelihood with real images (\emph{i.e.}, images at the points of $\alpha=0$ and $\alpha=1$) than interpolated images. In comparison, RealNVP-BN could not significantly distinguish real images and interpolated images. It may be because that the first derivative of the loss usually reflected prototype features for different categories. Thus, the blindness to the first derivative prevented the RealNVP from modeling prototype features of different digits.

	\begin{figure}
		\begin{center}
			\includegraphics[width=\linewidth]{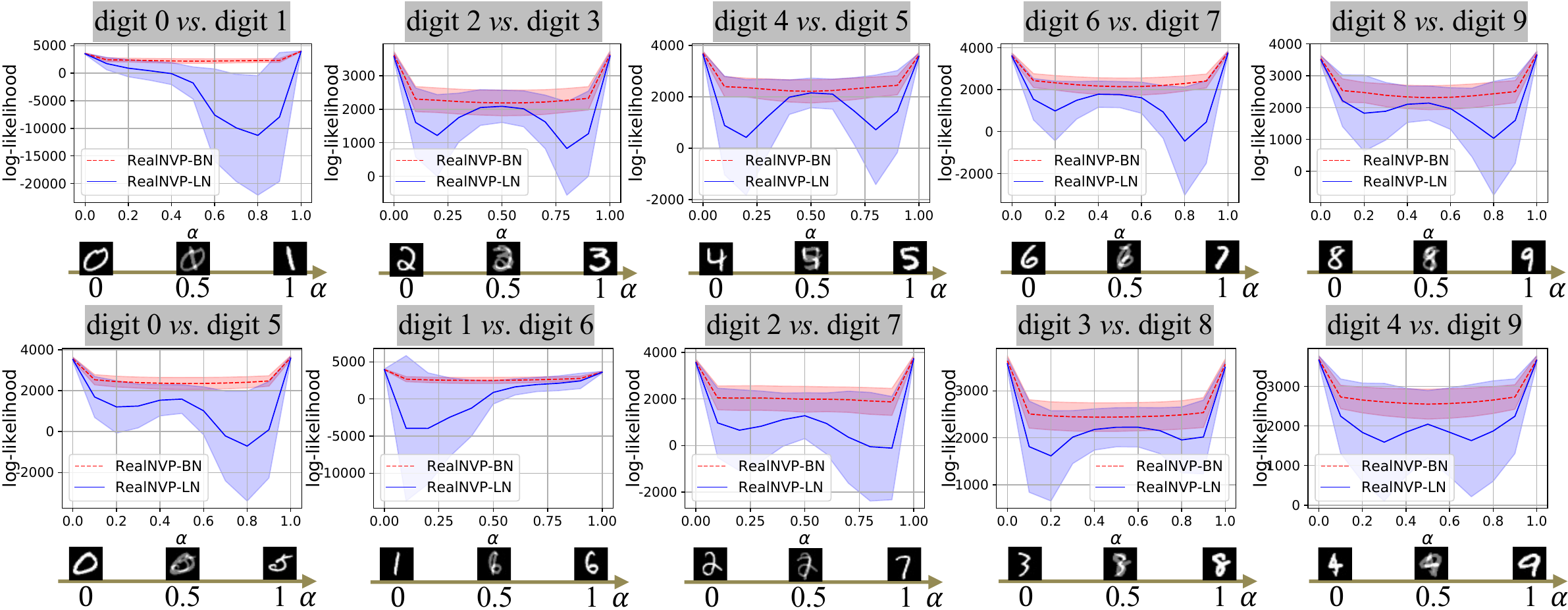}
		\end{center}
		\caption{Log-likelihood of real images (at $\alpha=0$ and $\alpha=1$) and interpolated images generated by the RealNVP-BN and the RealNVP-LN. The shaded area represents the standard deviation.}
		\vspace{-5pt}
		\label{fig:flow}
	\end{figure}

	\emph{Experiment 2:} We measured the BN's effects on feature representations for classification. We compared four groups of DNNs trained on the CIFAR-10 dataset \cite{krizhevsky2009learning}, as follows. The first group of DNNs did not contain any normalization operations, namely \textit{DNN-ori}. The second group of DNNs were obtained by adding BN operations to the DNN-ori, termed \textit{DNN-BN}. The third group of DNNs were obtained by replacing all BN operations in the DNN-BN with LN operations, namely \textit{DNN-LN}. The above three groups of DNNs were trained when each mini-batch only contained samples in a specific category. Such an experimental setting ensured that losses of all samples in the mini-batch shared the same analytic formula. The fourth group of DNNs had the same architecture as the DNN-ori, but they were trained when all samples in a mini-batch had different labels, termed \textit{DNN-ori-base}. We selected classic network architectures for classification, including VGG-11, VGG-16 \cite{vgg}, ResNet-18, ResNet-34 \cite{he2016deep}, DenseNet-121, and DenseNet-169 \cite{huang2017densely}. For VGG-11 and VGG-16, we constructed and learned all four groups of DNNs. Specifically, we added one single BN (or LN) layer before the second top FC layer of the VGG network. For DNNs already containing BN operations, \emph{i.e.}, ResNet-18/34 and DenseNet-121/169, we directly selected these DNNs as DNN-BN networks, and there were no DNN-ori networks for such DNNs. In addition, for such DNNs, we also learned two baselines, namely \textit{DNN-BN-base} and \textit{DNN-LN-base}, which were trained when all samples in a mini-batch had different labels.

	In Figure~\ref{fig:vgg}, we used t-SNE \cite{maaten2008visualizing} to visualize the input feature of the top FC layer of each DNN, in order to compare the BN's effects on the feature representation. When losses of all samples in a mini-batch shared the same analytic formula, features of the DNN-BN on different samples were far less clustered than those of the DNN-ori and the DNN-LN. This experiment indicates that the BN operation prevented the DNN from learning discriminative features of samples in each specific category. It was because the first derivatives on the average sample $\tilde{\textbf{y}}$ in a specific category usually contained information of common discriminative features shared by different samples in this category. In comparison, when losses of all samples in a mini-batch had different analytic formulas, features of the DNN-BN-base on different samples could be well clustered, which verified that the blindness problem could be alleviated in the classification task. Nevertheless, people had the right to know the BN's potential risk, although such a risk did not always damage the system in some applications.
	
	
	\begin{figure}
		\begin{center}
			\includegraphics[width=\linewidth]{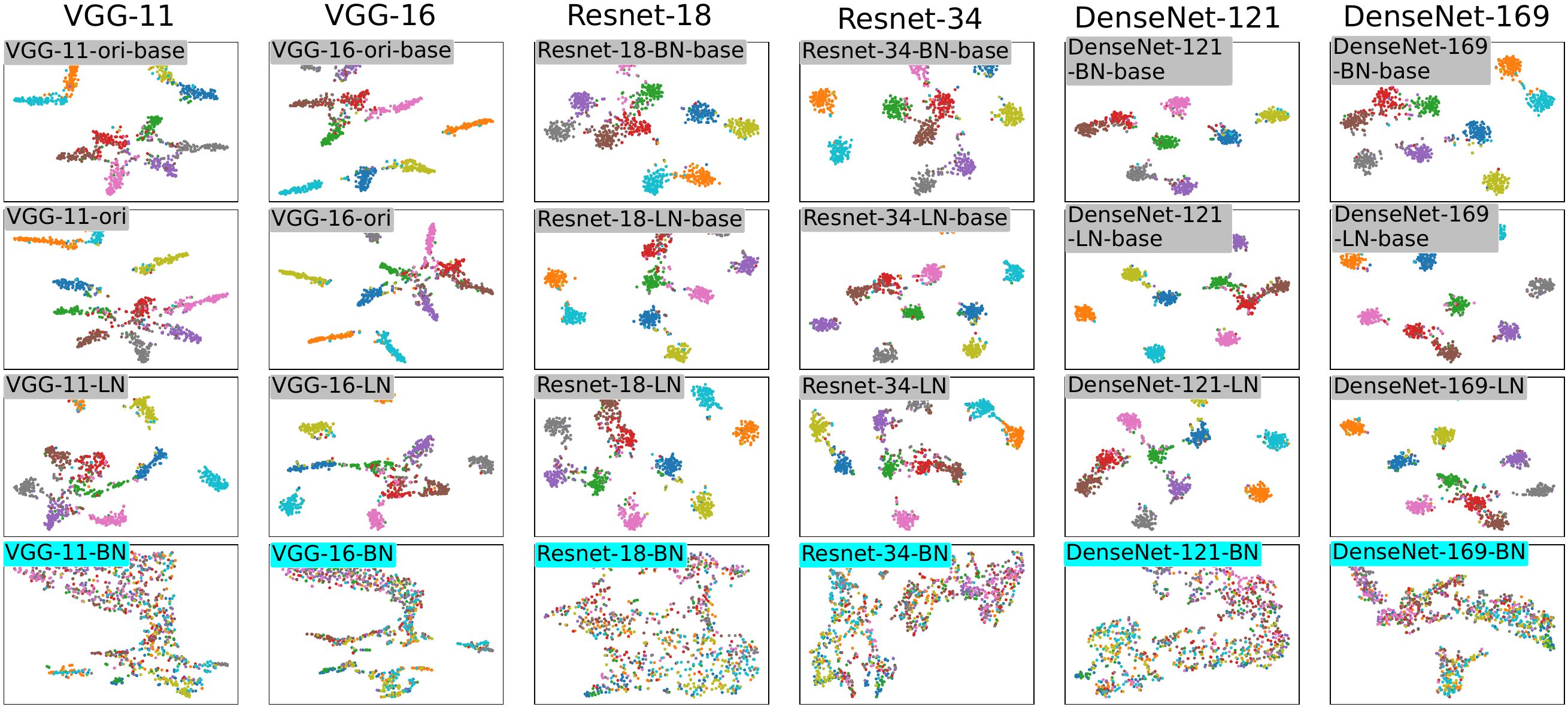}
		\end{center}
		\caption{Comparing features of DNNs with BN layers and DNNs without BN layers. Points of different colors indicate samples of different categories. The DNN-BN usually learned much less clustered features than the DNN-LN, when these two DNNs were learned with the same settings.}
		\vspace{-5pt}
		\label{fig:vgg}
		\vspace{-5pt}
	\end{figure}

	\section{Conclusions and discussion}
	In this paper, we have discovered and theoretically proven the intrinsic blindness problem with the BN operation. Such a problem may bring in an uncommon yet non-ignorable risk in the learning of DNNs. Experiments have verified our theoretical proofs. Besides, experiments have demonstrated that in specific applications, the blindness of the BN operation prevents the DNN from learning discriminative features.
	
	\textit{Advantages and disadvantages of the BN operation.} Moreover, we have also proven that it is the standardization phase of the BN operation causes the above effects of the BN operation. However, it is difficult to obtain a simple conclusion that the standardization phase is harmful. In fact, Xu \emph{et al.}~\cite{xu2019understanding} proved that the standardization phase in the LN operation re-centered gradients of the loss \emph{w.r.t.} features, and reduced the variance of these gradients. Xu \emph{et al.} also found that the standardization phase improved the performance of DNNs in experiment. Besides, Liu \emph{et al.}~\cite{liu2021trap} showed that the standardization phase could effectively alleviate the ``self-enhancement'' phenomenon, and avoided hurting the diversity of features in the DNN. Nevertheless, we have proven that the standardization phase causes the blindness to the first and second derivatives of the loss function in specific applications. To this end, we use the LN operation to replace the BN operation in such applications, which avoids the blindness problem.
		
	\small
	
	\bibliographystyle{unsrtnat}
	\bibliography{ref}

\begin{thebibliography}{41}
\providecommand{\natexlab}[1]{#1}
\providecommand{\url}[1]{\texttt{#1}}
\expandafter\ifx\csname urlstyle\endcsname\relax
  \providecommand{\doi}[1]{doi: #1}\else
  \providecommand{\doi}{doi: \begingroup \urlstyle{rm}\Url}\fi

\bibitem[Ioffe and Szegedy(2015)]{ioffe2015batch}
Sergey Ioffe and Christian Szegedy.
\newblock Batch normalization: Accelerating deep network training by reducing
  internal covariate shift.
\newblock In \emph{International conference on machine learning}, pages
  448--456. PMLR, 2015.

\bibitem[Li et~al.(2019)Li, Chen, Hu, and Yang]{XiangLi2019UnderstandingTD}
Xiang Li, Shuo Chen, Xiaolin Hu, and Jian Yang.
\newblock Understanding the disharmony between dropout and batch normalization
  by variance shift.
\newblock In \emph{Computer Vision and Pattern Recognition}, 2019.

\bibitem[Xie et~al.(2020)Xie, Tan, Gong, Wang, Yuille, and
  Le]{xie2020adversarial}
Cihang Xie, Mingxing Tan, Boqing Gong, Jiang Wang, Alan~L Yuille, and Quoc~V
  Le.
\newblock Adversarial examples improve image recognition.
\newblock In \emph{Proceedings of the IEEE/CVF Conference on Computer Vision
  and Pattern Recognition}, pages 819--828, 2020.

\bibitem[Galloway et~al.(2019)Galloway, Golubeva, Tanay, Moussa, and
  Taylor]{galloway2019batch}
Angus Galloway, Anna Golubeva, Thomas Tanay, Medhat Moussa, and Graham~W
  Taylor.
\newblock Batch normalization is a cause of adversarial vulnerability.
\newblock \emph{arXiv, abs/1905.02161}, 2019.

\bibitem[Salimans et~al.(2016)Salimans, Goodfellow, Zaremba, Cheung, Radford,
  and Chen]{TimSalimans2016ImprovedTF}
Tim Salimans, Ian Goodfellow, Wojciech Zaremba, Vicki Cheung, Alec Radford, and
  Xi~Chen.
\newblock Improved techniques for training gans.
\newblock In \emph{Neural Information Processing Systems}, 2016.

\bibitem[Yang et~al.(2019)Yang, Pennington, Rao, Sohl-Dickstein, and
  Schoenholz]{GregYang2019AMF}
Greg Yang, Jeffrey Pennington, Vinay Rao, Jascha Sohl-Dickstein, and Samuel~S.
  Schoenholz.
\newblock A mean field theory of batch normalization.
\newblock In \emph{International Conference on Learning Representations}, 2019.

\bibitem[Dinh et~al.(2014)Dinh, Krueger, and Bengio]{dinh2014nice}
Laurent Dinh, David Krueger, and Yoshua Bengio.
\newblock Nice: Non-linear independent components estimation.
\newblock \emph{arXiv preprint arXiv:1410.8516}, 2014.

\bibitem[Dinh et~al.(2016)Dinh, Sohl-Dickstein, and Bengio]{dinh2016density}
Laurent Dinh, Jascha Sohl-Dickstein, and Samy Bengio.
\newblock Density estimation using real nvp.
\newblock \emph{arXiv preprint arXiv:1605.08803}, 2016.

\bibitem[Kingma and Dhariwal(2018)]{kingma2018glow}
Durk~P Kingma and Prafulla Dhariwal.
\newblock Glow: Generative flow with invertible 1x1 convolutions.
\newblock \emph{Advances in neural information processing systems}, 31, 2018.

\bibitem[Gonnet and Scholl(2009)]{2009Scientific}
Gaston~H. Gonnet and Ralf Scholl.
\newblock \emph{Scientific Computation}.
\newblock Scientific Computation, 2009.

\bibitem[Zaremba et~al.(2014)Zaremba, Sutskever, and
  Vinyals]{zaremba2014recurrent}
Wojciech Zaremba, Ilya Sutskever, and Oriol Vinyals.
\newblock Recurrent neural network regularization.
\newblock \emph{arXiv preprint arXiv:1409.2329}, 2014.

\bibitem[Wu and He(2018)]{wu2018group}
Yuxin Wu and Kaiming He.
\newblock Group normalization.
\newblock In \emph{Proceedings of the European conference on computer vision
  (ECCV)}, pages 3--19, 2018.

\bibitem[Miyato et~al.(2018)Miyato, Kataoka, Koyama, and
  Yoshida]{miyato2018spectral}
Takeru Miyato, Toshiki Kataoka, Masanori Koyama, and Yuichi Yoshida.
\newblock Spectral normalization for generative adversarial networks.
\newblock \emph{arXiv preprint arXiv:1802.05957}, 2018.

\bibitem[Salimans and Kingma(2016)]{salimans2016weight}
Tim Salimans and Durk~P Kingma.
\newblock Weight normalization: A simple reparameterization to accelerate
  training of deep neural networks.
\newblock \emph{Advances in neural information processing systems}, 29, 2016.

\bibitem[Santurkar et~al.(2018)Santurkar, Tsipras, Ilyas, and
  Madry]{santurkar2018does}
Shibani Santurkar, Dimitris Tsipras, Andrew Ilyas, and Aleksander Madry.
\newblock How does batch normalization help optimization?
\newblock \emph{Advances in Neural Information Processing Systems}, 31, 2018.

\bibitem[Luo et~al.(2018)Luo, Wang, Shao, and Peng]{luo2018towards}
Ping Luo, Xinjiang Wang, Wenqi Shao, and Zhanglin Peng.
\newblock Towards understanding regularization in batch normalization.
\newblock \emph{arXiv preprint arXiv:1809.00846}, 2018.

\bibitem[Morcos et~al.(2018)Morcos, Barrett, Rabinowitz, and
  Botvinick]{morcos2018importance}
Ari~S Morcos, David~GT Barrett, Neil~C Rabinowitz, and Matthew Botvinick.
\newblock On the importance of single directions for generalization.
\newblock \emph{arXiv preprint arXiv:1803.06959}, 2018.

\bibitem[Lin et~al.(2021)Lin, Sekar, and Fanti]{lin2021spectral}
Zinan Lin, Vyas Sekar, and Giulia Fanti.
\newblock Why spectral normalization stabilizes gans: Analysis and
  improvements.
\newblock \emph{Advances in neural information processing systems}, 34, 2021.

\bibitem[LeCun et~al.(2012)LeCun, Bottou, Orr, and
  M{\"u}ller]{lecun2012efficient}
Yann~A LeCun, L{\'e}on Bottou, Genevieve~B Orr, and Klaus-Robert M{\"u}ller.
\newblock Efficient backprop.
\newblock In \emph{Neural networks: Tricks of the trade}, pages 9--50.
  Springer, 2012.

\bibitem[Xu et~al.(2019)Xu, Sun, Zhang, Zhao, and Lin]{xu2019understanding}
Jingjing Xu, Xu~Sun, Zhiyuan Zhang, Guangxiang Zhao, and Junyang Lin.
\newblock Understanding and improving layer normalization.
\newblock \emph{Advances in Neural Information Processing Systems}, 32, 2019.

\bibitem[Wan et~al.(2020)Wan, Zhu, Zhang, and Sun]{wan2020spherical}
Ruosi Wan, Zhanxing Zhu, Xiangyu Zhang, and Jian Sun.
\newblock Spherical motion dynamics of deep neural networks with batch
  normalization and weight decay.
\newblock 2020.

\bibitem[Van~Laarhoven(2017)]{van2017l2}
Twan Van~Laarhoven.
\newblock L2 regularization versus batch and weight normalization.
\newblock \emph{arXiv preprint arXiv:1706.05350}, 2017.

\bibitem[Li et~al.(2020)Li, Chen, and Yang]{li2020understanding}
Xiang Li, Shuo Chen, and Jian Yang.
\newblock Understanding the disharmony between weight normalization family and
  weight decay.
\newblock In \emph{Proceedings of the AAAI Conference on Artificial
  Intelligence}, volume~34, pages 4715--4722, 2020.

\bibitem[Ba et~al.(2016)Ba, Kiros, and Hinton]{ba2016layer}
Jimmy~Lei Ba, Jamie~Ryan Kiros, and Geoffrey~E Hinton.
\newblock Layer normalization.
\newblock \emph{arXiv preprint arXiv:1607.06450}, 2016.

\bibitem[Tishby and Zaslavsky(2015)]{tishby2015deep}
Naftali Tishby and Noga Zaslavsky.
\newblock Deep learning and the information bottleneck principle.
\newblock In \emph{2015 ieee information theory workshop (itw)}, pages 1--5.
  IEEE, 2015.

\bibitem[Shwartz-Ziv and Tishby(2017)]{shwartz2017opening}
Ravid Shwartz-Ziv and Naftali Tishby.
\newblock Opening the black box of deep neural networks via information.
\newblock \emph{arXiv preprint arXiv:1703.00810}, 2017.

\bibitem[Wolchover and Reading(2017)]{wolchover2017new}
Natalie Wolchover and Lucy Reading.
\newblock New theory cracks open the black box of deep learning.
\newblock \emph{Quanta Magazine}, 3, 2017.

\bibitem[Amjad and Geiger(2019)]{amjad2019learning}
Rana~Ali Amjad and Bernhard~C Geiger.
\newblock Learning representations for neural network-based classification
  using the information bottleneck principle.
\newblock \emph{IEEE transactions on pattern analysis and machine
  intelligence}, 42\penalty0 (9):\penalty0 2225--2239, 2019.

\bibitem[Deng et~al.(2022)Deng, Ren, Chen, Zhang, Ren, and
  Zhang]{deng2021discovering}
Huiqi Deng, Qihan Ren, Xu~Chen, Hao Zhang, Jie Ren, and Quanshi Zhang.
\newblock Discovering and explaining the representation bottleneck of dnns.
\newblock In \emph{International Conference on Learning Representations}, 2022.

\bibitem[Dean et~al.(2012)Dean, Corrado, Monga, Chen, Devin, Mao, Ranzato,
  Senior, Tucker, Yang, et~al.]{dean2012large}
Jeffrey Dean, Greg Corrado, Rajat Monga, Kai Chen, Matthieu Devin, Mark Mao,
  Marc'aurelio Ranzato, Andrew Senior, Paul Tucker, Ke~Yang, et~al.
\newblock Large scale distributed deep networks.
\newblock \emph{Advances in neural information processing systems}, 25, 2012.

\bibitem[Zheng et~al.(2020)Zheng, Zhang, Gu, Lee, and
  Prakash]{zheng2020efficient}
Haizhong Zheng, Ziqi Zhang, Juncheng Gu, Honglak Lee, and Atul Prakash.
\newblock Efficient adversarial training with transferable adversarial
  examples.
\newblock In \emph{Proceedings of the IEEE/CVF Conference on Computer Vision
  and Pattern Recognition}, pages 1181--1190, 2020.

\bibitem[Simonyan and Zisserman(2015)]{vgg}
Karen Simonyan and Andrew Zisserman.
\newblock Very deep convolutional networks for large-scale image recognition.
\newblock In \emph{ICLR}, 2015.

\bibitem[Krizhevsky et~al.(2009)Krizhevsky, Hinton,
  et~al.]{krizhevsky2009learning}
Alex Krizhevsky, Geoffrey Hinton, et~al.
\newblock Learning multiple layers of features from tiny images.
\newblock 2009.

\bibitem[Cohen et~al.(2020)Cohen, Kaur, Li, Kolter, and
  Talwalkar]{cohen2020gradient}
Jeremy Cohen, Simran Kaur, Yuanzhi Li, J~Zico Kolter, and Ameet Talwalkar.
\newblock Gradient descent on neural networks typically occurs at the edge of
  stability.
\newblock In \emph{International Conference on Learning Representations}, 2020.

\bibitem[Krizhevsky et~al.(2012)Krizhevsky, Sutskever, and
  Hinton]{krizhevsky2012imagenet}
Alex Krizhevsky, Ilya Sutskever, and Geoffrey~E Hinton.
\newblock Imagenet classification with deep convolutional neural networks.
\newblock \emph{Advances in neural information processing systems}, 25, 2012.

\bibitem[LeCun et~al.(1989)LeCun, Boser, Denker, Henderson, Howard, Hubbard,
  and Jackel]{lecun1989backpropagation}
Yann LeCun, Bernhard Boser, John~S Denker, Donnie Henderson, Richard~E Howard,
  Wayne Hubbard, and Lawrence~D Jackel.
\newblock Backpropagation applied to handwritten zip code recognition.
\newblock \emph{Neural computation}, 1\penalty0 (4):\penalty0 541--551, 1989.

\bibitem[LeCun et~al.(1998)LeCun, Bottou, Bengio, and
  Haffner]{lecun1998gradient}
Yann LeCun, L{\'e}on Bottou, Yoshua Bengio, and Patrick Haffner.
\newblock Gradient-based learning applied to document recognition.
\newblock \emph{Proceedings of the IEEE}, 86\penalty0 (11):\penalty0
  2278--2324, 1998.

\bibitem[He et~al.(2016)He, Zhang, Ren, and Sun]{he2016deep}
Kaiming He, Xiangyu Zhang, Shaoqing Ren, and Jian Sun.
\newblock Deep residual learning for image recognition.
\newblock In \emph{Proceedings of the IEEE conference on computer vision and
  pattern recognition}, pages 770--778, 2016.

\bibitem[Huang et~al.(2017)Huang, Liu, van~der Maaten, and
  Weinberger]{huang2017densely}
Gao Huang, Zhuang Liu, Laurens van~der Maaten, and Kilian~Q. Weinberger.
\newblock Densely connected convolutional networks.
\newblock In \emph{CVPR}, 2017.

\bibitem[van~der Maaten and Hinton(2008)]{maaten2008visualizing}
Laurens van~der Maaten and Geoffrey Hinton.
\newblock Visualizing data using t-sne.
\newblock \emph{Journal of machine learning research}, 9\penalty0
  (Nov):\penalty0 2579--2605, 2008.

\bibitem[Liu et~al.(2021)Liu, Wang, Ren, Wang, Yin, and Zhang]{liu2021trap}
Dongrui Liu, Shaobo Wang, Jie Ren, Kangrui Wang, Sheng Yin, and Quanshi Zhang.
\newblock Trap of feature diversity in the learning of mlps.
\newblock \emph{arXiv preprint arXiv:2112.00980}, 2021.

\end{thebibliography}

	\newpage
	\appendix
	
	\setcounter{equation}{0}
	\setcounter{theorem}{0}
	\setcounter{lemma}{0}
	\setcounter{corollary}{0}
	
	\section{Overview}
	This Appendix provides theoretical proofs of theorems, lemmas, and the corollary in the main paper, and includes more implementation details and results about experiments. In Section~\ref{appendix:pre}, we revisit the BN operation detailly. Section~\ref{appendix:lemma} provides theoretical proofs of Lemmas~\ref{le:1}, \ref{le:2}, and \ref{le:3} in the main paper. Section~\ref{appendix:theorem} provides theoretical proofs of Theorems~\ref{th:1}, \ref{th:2}, \ref{th:3}, and Corollary~\ref{co:1} in the main paper. Section~\ref{appendix:case2} provides conclusions in Case 2 and theoretical proofs of these conclusions. In Section~\ref{appendix:sigmoid}, we prove that the third and higher-order derivatives of the sigmoid function have small strengths when the classification is confident. In Section~\ref{appendix:zero}, we prove that our conclusions can be extented to DNNs have no second derivations or have zero second derivatives. In Section~\ref{appendix:reason}, we prove the reason of the blindness problem is that derivatives of $\boldsymbol{\mu}$ and $\boldsymbol{\sigma}$ in the standardization phase of the BN operation eliminate the influence of the gradient $\textbf{g}$, and eliminate most influence of the Hessian matrix $\textbf{H}$. In Section~\ref{appendix:exp}, we provide more implementation details and more results about experiments.
	
	
	\section{Preliminary: Batch Normalization}\label{appendix:pre}
	
	In this section,  in order to help readers understand the analysis in the paper, we first revisit the details of Batch Normalization (BN)~\cite{ioffe2015batch}.
	
	{\bf Notations.} Given $n$ samples in a mini-batch, let $\textbf{X}=[\textbf{x}^{(1)}, \textbf{x}^{(2)}, \ldots, \textbf{x}^{(n)}] =[ \textbf{x}_{1}^{\top}; \textbf{x}_{2}^{\top}; \ldots; \textbf{x}_{D}^{\top}]\in\mathbb{R}^{D\times n}$ denote features of these samples in an intermediate layer before a BN operation. The $i$-th column vector corresponds to the $i$-th sample with  $D$-dimensional feature, which is written as $\textbf{x}^{(i)}\in \mathbb{R}^{D}$, $i = 1, \cdots, n$; the $d$-th row vector corresponds to the $d$-th feature dimension of all the $n$ samples, the transposition of which is written as $\textbf{x}_d \in \mathbb{R}^{n}$,  $d = 1, \cdots, D$. For the convenience of readers, we represent $\textbf{X}$ as following forms for an intuitive understanding of above description.
	
	\begin{equation}\label{eq:X}
		\textbf{X}=
		\begin{bmatrix}
			\vert & \vert & \cdots & \vert \\
			\textbf{x}^{(1)}& \textbf{x}^{(2)}& \cdots & \textbf{x}^{(n)}  \\
			\vert & \vert & \cdots & \vert
		\end{bmatrix}=
		\brows{\textbf{x}_{1}^{\top}  \\ \textbf{x}_{2}^{\top}  \\ \rowsvdots \\ \textbf{x}_{D}^{\top}}=
		\begin{bmatrix}
			\textbf{X}_{1,1} & \textbf{X}_{1,2} & \cdots  & \textbf{X}_{1,n}\\
			\textbf{X}_{2,1} & \textbf{X}_{2,2} & \cdots  & \textbf{X}_{2,n}\\
			\vdots & \vdots & \ddots & \vdots \\
			\textbf{X}_{D,1} & \textbf{X}_{D,2} & \cdots  & \textbf{X}_{D,n}\\
		\end{bmatrix}. \nonumber
	\end{equation}
	
	Then, the BN operation $\textbf{Z} = \textit{BN}(\textbf{X})$ normalizes $\textbf{X}$ as follows.
	\begin{align}
		\label{eq:bn1}
		\textbf{Z} &=  \textit{diag}(\boldsymbol{\gamma})  \textbf{Y} + \boldsymbol{\beta}\textbf{1}_n^{\top}\qquad \qquad \qquad \qquad\ \ \  \text{(affine transformation phase)} \\
		\label{eq:bn2}
		\textbf{Y} &=  \textit{diag}(\boldsymbol{\sigma}\circ \boldsymbol{\sigma} +\varepsilon \textbf{1}_D)^{-\frac{1}{2}} ( \textbf{X} - \boldsymbol{\mu}\textbf{1}_n^{\top} ) 	\qquad \text{(standardization phase)}
	\end{align}
	where $\textbf{Z}=[\textbf{z}^{(1)},\textbf{z}^{(2)},\ldots,\textbf{z}^{(n)}]\in\mathbb{R}^{D \times n}$ denotes output features of the BN operation; $\textbf{Y}=[\textbf{y}^{(1)},\textbf{y}^{(2)},\ldots,\textbf{y}^{(n)}]=[ \textbf{y}_{1}^{\top}; \textbf{y}_{2}^{\top}; \ldots; \textbf{y}_{D}^{\top}] \in\mathbb{R}^{D \times n}$ denotes standardized features; $\boldsymbol{\gamma},\boldsymbol{\beta}\in\mathbb{R}^D$ are used to scale and shift the standardized features;
	$\boldsymbol{\mu} = \frac{1}{n}\textbf{X}\textbf{1}_n\in\mathbb{R}^D$; $\boldsymbol{\Sigma} = \frac{1}{n} (\textbf{X} - \boldsymbol{\mu}\textbf{1}_n^{\top}) (\textbf{X} - \boldsymbol{\mu}\textbf{1}_n^{\top})^{\top}\in\mathbb{R}^{D\times D}$ denotes the covariance matrix of $\textbf{X}$, and then $\boldsymbol{\sigma}=[\sqrt{\boldsymbol{\Sigma}_{1,1}},\ldots,\sqrt{\boldsymbol{\Sigma}_{D,D}}]^{\top}\in\mathbb{R}^D$ represents a vector of the standard deviations corresponding to diagonal elements in $\boldsymbol{\Sigma}$; $\textbf{1}_D\in\mathbb{R}^D$ and $\textbf{1}_n\in\mathbb{R}^n$ are all-one vectors; $\circ$ denotes the element-wise product; $\varepsilon$ is a tiny positive constant to avoid dividing zero; $\textit{diag}(\cdot)$ transforms a vector to a diagonal matrix. Similarly,

	\begin{equation}\label{eq:Y}
		\textbf{Y}=
		\begin{bmatrix}
			\vert & \vert & \cdots & \vert \\
			\textbf{y}^{(1)}& \textbf{y}^{(2)}& \cdots & \textbf{y}^{(n)}  \\
			\vert & \vert & \cdots & \vert
		\end{bmatrix}=
		\brows{\textbf{y}_{1}^{\top}  \\ \textbf{y}_{2}^{\top}  \\ \rowsvdots \\ \textbf{y}_{D}^{\top}  }=
		\begin{bmatrix}
			\textbf{Y}_{1,1} & \textbf{Y}_{1,2} & \cdots  & \textbf{Y}_{1,n}\\
			\textbf{Y}_{2,1} & \textbf{Y}_{2,2} & \cdots  & \textbf{Y}_{2,n}\\
			\vdots & \vdots & \ddots & \vdots \\
			\textbf{Y}_{D,1} & \textbf{Y}_{D,2} & \cdots  & \textbf{Y}_{D,n}\\
		\end{bmatrix}. \nonumber
	\end{equation}

	
	
	{\bf Since the tiny positive constant $\varepsilon$ is only used to avoid dividing zero,
		we ignore this term}
	to simplify further proofs. Therefore,
	\begin{align}
		\label{eq:bn3}
		\textbf{Y} &=  \textit{diag}(\boldsymbol{\sigma})^{-1} ( \textbf{X} - \boldsymbol{\mu}\textbf{1}_n^{\top} ) 	\qquad \text{(standardization phase)}
	\end{align}
	
	For the $d$-th feature dimension of all the $n$ samples, $d = 1, \cdots, D$,
	\begin{align}
		\label{eq:bn4}
		\textbf{y}_d &= \frac{\textbf{x}_d - \boldsymbol{\mu}_d\textbf{1}_n}{\boldsymbol{\sigma}_d} \in\mathbb{R}^{n}
	\end{align}
	
	The transposition of $\textbf{y}_d$, $i.e.$, $\textbf{y}_d^{\top}$ corresponds to the $d$-th row vector of the standardized feature matrix $\textbf{Y}$.
	
	\section{Proofs of Lemmas 1, 2, and 3}\label{appendix:lemma}
	
	In this section, we prove Lemmas 1, 2, and 3 in the main paper. These three lemmas implement loss decomposition, in order to analyze the BN's effects on different components of each term of the loss function in the Taylor series expansion.
	
	The training loss of the $i$-th sample can be represented as a function of the standardized feature $\textbf{y}^{(i)}$. We use the Taylor series expansion to decompose the loss function \emph{w.r.t.} $\textbf{y}^{(i)}$ into terms of multiple orders, as follows.
	\begin{equation}\label{eq:Taylor}
		\textrm{Loss}(\textbf{y}^{(i)};\tilde{\textbf{y}})=\textrm{Loss}(\tilde{\textbf{y}})+ (\textbf{y}^{(i)}-\tilde{\textbf{y}})^{\top}\textbf{g} +\frac{1}{2!}(\textbf{y}^{(i)}-\tilde{\textbf{y}})^{\top}\textbf{H}(\textbf{y}^{(i)}-\tilde{\textbf{y}})+ R_2(\textbf{y}^{(i)}-\tilde{\textbf{y}})
	\end{equation}
	where $\textbf{g} = \left[\textbf{g}_1, \cdots, \textbf{g}_D\right] \in \mathbb{R}^{D}$ and $\textbf{H} = \left(\textbf{H}_{j,k}\right) \in \mathbb{R}^{D \times D}$ denote the gradient and the Hessian matrix of $\textrm{Loss}(\textbf{y}^{(i)})$ at a fixed point $\tilde{\textbf{y}} = [\tilde{\textrm{y}}_1, \tilde{\textrm{y}}_2, \cdots, \tilde{\textrm{y}}_D] \in \mathbb{R}^{D}$; $R_2(\textbf{y}^{(i)}-\tilde{\textbf{y}})$ denotes the sum of high-order terms.
	
	{\bf Loss decomposition.} Let $\textrm{Loss}^{\text{batch}} \in \mathbb{R}$  denote the overall loss of samples in a mini-batch.We consider the BN operation and decompose $\textrm{Loss}^{\text{batch}}$ in the following two cases.
	
	$\textbf{Case 1}$: Let us first consider the simplest case where all samples in a mini-batch share the same analytic formula of the loss function. Therefore, if we do the Taylor series expansion at the same fixed point $\tilde{\textbf{y}}$ for loss function of each $i$-th sample $i = 1, \cdots, n$,
	the overall loss of samples in a mini-batch $\textrm{Loss}^{\text{batch}}=\sum_{i=1}^{n}\textrm{Loss}(\textbf{y}^{(i)};\tilde{\textbf{y}})$ can be re-written as the sum of four compositional terms, as follows.
	
	\begin{equation}\label{eq:Taylor_simple}
		\textrm{Loss}^{\text{batch}}(\textbf{g},\textbf{H}) \xlongequal{\textbf{decomposed into}}
		\textrm{Loss}^{\text{constant}} + \textrm{Loss}^{\text{grad}}(\textbf{g}) +\textrm{Loss}^{\text{Hessian}}(\textbf{H}) + \sum\nolimits_i R_2(\textbf{y}^{(i)}-\tilde{\textbf{y}})
	\end{equation}
	where $\textrm{Loss}^{\text{constant}}=n\textrm{Loss}(\tilde{\textbf{y}})$ is a constant \emph{w.r.t.} input features $\textbf{X}$ of the BN operation; $\textrm{Loss}^{\text{grad}}(\textbf{g})=\sum_{i=1}^{n}(\textbf{y}^{(i)} - \tilde{\textbf{y}})^{\top}\textbf{g}$ and $\textrm{Loss}^{\text{Hessian}}(\textbf{H})=\sum_{i=1}^{n}\frac{1}{2!}(\textbf{y}^{(i)}-\tilde{\textbf{y}})^{\top}\textbf{H}(\textbf{y}^{(i)}-\tilde{\textbf{y}})$ denote the first-order and second-order terms of the loss function in the Taylor expansion, respectively.
	
	\begin{proof}
		\begin{small}
			\begin{equation}
				\begin{aligned}
					\textrm{Loss}^{\text{batch}}(\textbf{g}, \textbf{H})
					&= \sum_{i = 1}^n \textrm{Loss}(\textbf{y}^{(i)};\tilde{\textbf{y}}) \\
					&= \sum_{i = 1}^n \left(\textrm{Loss}(\tilde{\textbf{y}}) + (\textbf{y}^{(i)} - \tilde{\textbf{y}})^{\top} \textbf{g} + \frac{1}{2!} (\textbf{y}^{(i)} -\tilde{\textbf{y}})^{\top} \textbf{H} (\textbf{y}^{(i)} - \tilde{\textbf{y}}) + R_2(\textbf{y}^{(i)}-\tilde{\textbf{y}})\right)\\
					&= n \textrm{Loss}(\tilde{\textbf{y}}) + \sum_{i = 1}^n (\textbf{y}^{(i)} - \tilde{\textbf{y}})^{\top}\textbf{g} + \sum_{i = 1}^n\frac{1}{2!} (\textbf{y}^{(i)} -\tilde{\textbf{y}})^{\top} \textbf{H} (\textbf{y}^{(i)} - \tilde{\textbf{y}}) + \sum_{i = 1}^n R_2(\textbf{y}^{(i)}-\tilde{\textbf{y}}). \nonumber
				\end{aligned}
			\end{equation}
		\end{small}
	\end{proof}
	
	Then,
	\begin{small}
		\begin{align}
			&\textrm{Loss}^{\text{constant}} = n \textrm{Loss}(\tilde{\textbf{y}}) \\
			&\textrm{Loss}^{\text{grad}}(\textbf{g}) = \sum_{i = 1}^n (\textbf{y}^{(i)} - \tilde{\textbf{y}})^{\top} \textbf{g} \\
			&\textrm{Loss}^{\text{Hessian}}(\textbf{H}) = \sum_{i = 1}^n\frac{1}{2} (\textbf{y}^{(i)} -\tilde{\textbf{y}})^{\top} \textbf{H} (\textbf{y}^{(i)} - \tilde{\textbf{y}})
		\end{align}
	\end{small}
	
	
	\textit{Lemma 1} further decomposes $\textrm{Loss}^{\text{Hessian}}(\textbf{H})$ into two terms, as follows.

	\begin{lemma}\label{le:1}
		Let us set $\varepsilon=0$ (the tiny positive constant $\varepsilon$ is only used to avoid dividing zero, so we can ignore $\varepsilon$ in all following paragraphs to simplify the proof). Then,
		{\rm$\textrm{Loss}^{\text{Hessian}}(\textbf{H})$} can be decomposed into two terms, $i.e.$,
		{\rm\begin{equation*}
				\textrm{Loss}^{\text{Hessian}}(\textbf{H})= \textrm{Loss}^{\text{diag}}(\textbf{H}^{\text{diag}}) + \textrm{Loss}^{\text{off}}(\textbf{H}^{\text{off}}),
		\end{equation*}}
		where {\rm\small$\textrm{Loss}^{\text{diag}}(\textbf{H}^{\text{diag}}) = \sum_{i=1}^{n}\frac{1}{2!} (\textbf{y}^{(i)}-\tilde{\textbf{y}})^{\top} \textbf{H}^{\text{diag}}(\textbf{y}^{(i)}-\tilde{\textbf{y}})$}, and {\rm\small$\textrm{Loss}^{\text{off}} (\textbf{H}^{\text{off}})= \sum_{i=1}^{n}\frac{1}{2!} (\textbf{y}^{(i)}-\tilde{\textbf{y}})^{\top} \textbf{H}^{\text{off}}(\textbf{y}^{(i)}-\tilde{\textbf{y}})$; $\textbf{H}^{\text{diag}}$} and {\rm\small$\textbf{H}^{\text{off}} = \textbf{H} - \textbf{H}^{\text{diag}}$} denote the matrix only containing diagonal elements and the matrix only containing off-diagonal elements in {\rm$\textbf{H}$}, respectively.
	\end{lemma}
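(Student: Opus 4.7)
The plan is to observe that Lemma~\ref{le:1} is essentially a direct consequence of linearity of the quadratic form in its middle matrix, together with the additive split $\textbf{H} = \textbf{H}^{\text{diag}} + \textbf{H}^{\text{off}}$. First I would fix notation precisely: let $\textbf{H}^{\text{diag}}$ denote the matrix whose $(d,d)$ entry equals $\textbf{H}_{d,d}$ and whose off-diagonal entries are zero, and set $\textbf{H}^{\text{off}} := \textbf{H} - \textbf{H}^{\text{diag}}$. Then $\textbf{H}^{\text{off}}$ is precisely $\textbf{H}$ with all diagonal entries zeroed and all off-diagonal entries unchanged, matching the description in the lemma statement.

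Next I would substitute this decomposition into the bilinear form defining $\textrm{Loss}^{\text{Hessian}}(\textbf{H})$. For any vector $\textbf{v} = \textbf{y}^{(i)} - \tilde{\textbf{y}} \in \mathbb{R}^D$, the map $\textbf{M} \mapsto \textbf{v}^{\top} \textbf{M} \textbf{v}$ is linear in $\textbf{M}$, so
\[
  \textbf{v}^{\top} \textbf{H} \textbf{v} \;=\; \textbf{v}^{\top} \textbf{H}^{\text{diag}} \textbf{v} \;+\; \textbf{v}^{\top} \textbf{H}^{\text{off}} \textbf{v}.
\]
Summing over $i = 1,\ldots,n$ and carrying the factor $1/2!$ through gives $\textrm{Loss}^{\text{Hessian}}(\textbf{H}) = \textrm{Loss}^{\text{diag}}(\textbf{H}^{\text{diag}}) + \textrm{Loss}^{\text{off}}(\textbf{H}^{\text{off}})$, exactly as claimed. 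The stipulation $\varepsilon = 0$ plays no role in the decomposition itself; it is flagged here only because subsequent proofs (Theorems~\ref{th:2} and \ref{th:3}) rely on the clean closed form $\textbf{y}^{(i)} = (\textbf{x}^{(i)} - \boldsymbol{\mu})/\boldsymbol{\sigma}$ without the $\varepsilon$-regularizer, and the author wishes to introduce that convention early.

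The hard part is essentially nonexistent for this lemma: the identity is a tautology from bilinearity of $(\textbf{v},\textbf{M},\textbf{v}) \mapsto \textbf{v}^{\top}\textbf{M}\textbf{v}$ in its middle argument. What Lemma~\ref{le:1} is really doing is fixing a clean decomposition so that the diagonal and off-diagonal second-order contributions can be attacked separately in the subsequent theorems. The genuine difficulty lives there: Theorem~\ref{th:2} must show that derivatives of $\boldsymbol{\mu}$ and $\boldsymbol{\sigma}$ in the standardization phase annihilate $\textrm{Loss}^{\text{diag}}(\textbf{H}^{\text{diag}})$ entirely, and Theorem~\ref{th:3} must isolate and kill the linearly-correlated portion of $\textrm{Loss}^{\text{off}}(\textbf{H}^{\text{off}})$ via the further split $L_d = L_d^{\text{linear}} + L_d^{\text{non}}$ from Lemma~\ref{le:2}.
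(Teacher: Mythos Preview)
Your proposal is correct and matches the paper's own proof essentially line for line: the paper also substitutes $\textbf{H} = \textbf{H}^{\text{diag}} + \textbf{H}^{\text{off}}$, distributes the quadratic form by linearity in the middle matrix, and sums over $i$. Your remark that the $\varepsilon = 0$ convention is irrelevant to this lemma and is flagged only for later use is accurate as well.
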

	
	\begin{proof}
		\begin{small}
			\begin{equation}
				\begin{aligned}
					\textrm{Loss}^{\text{Hessian}}(\textbf{H})
					&= \sum_{i = 1}^n\frac{1}{2} (\textbf{y}^{(i)} -\tilde{\textbf{y}})^{\top} \textbf{H} (\textbf{y}^{(i)} - \tilde{\textbf{y}})\\
					&= \sum_{i = 1}^n\frac{1}{2} (\textbf{y}^{(i)} -\tilde{\textbf{y}})^{\top} \left(\textbf{H}^{\text{diag}} + \textbf{H}^{\text{off}}\right)(\textbf{y}^{(i)} - \tilde{\textbf{y}})\\
					&= \sum_{i = 1}^n\frac{1}{2}\left( (\textbf{y}^{(i)} -\tilde{\textbf{y}})^{\top} \textbf{H}^{\text{diag}}(\textbf{y}^{(i)} - \tilde{\textbf{y}}) + (\textbf{y}^{(i)} -\tilde{\textbf{y}})^{\top} \textbf{H}^{\text{off}} (\textbf{y}^{(i)} - \tilde{\textbf{y}})\right)\\
					&= \sum_{i = 1}^n\frac{1}{2} (\textbf{y}^{(i)} -\tilde{\textbf{y}})^{\top} \textbf{H}^{\text{diag}} (\textbf{y}^{(i)} - \tilde{\textbf{y}}) + \sum_{i = 1}^n\frac{1}{2} (\textbf{y}^{(i)} -\tilde{\textbf{y}})^{\top} \textbf{H}^{\text{off}} (\textbf{y}^{(i)} - \tilde{\textbf{y}})\\
					&= \textrm{Loss}^{\text{diag}}(\textbf{H}^{\text{diag}}) + \textrm{Loss}^{\text{off}}(\textbf{H}^{\text{off}})
				\end{aligned}
			\end{equation}
		\end{small}
	\end{proof}

	\textit{Lemma 2} further decomposes $\textrm{Loss}^{\text{off}}(\textbf{H}^{\text{off}})$ into two terms, as follows.
	
	\begin{lemma}\label{le:2}
		Let {\rm\small$\textbf{x}_d=[\textbf{X}_{d,1},\textbf{X}_{d,2},\ldots,\textbf{X}_{d,n}]^{\top}\in\mathbb{R}^n$} denote the $d$-th feature dimension of all the $n$ samples in a mini-batch. Then, {\rm\small$\textrm{Loss}^{\text{off}}(\textbf{H}^{\text{off}})$} can be decomposed into the loss term depending on {\rm$\textbf{x}_d$} (i.e.,
		{\rm\small$L_d(\textbf{H}_{d,:}^{\text{off}})= \textbf{H}^{\text{off}}_{d,:}(\textbf{Y} -\tilde{\textbf{y}} \textbf{1}_n ^{\top})  \textbf{y}_d$}) and the loss term independent with {\rm$\textbf{x}_d$}, thereby {\rm\small$\frac{\partial \textrm{Loss}^{\text{off}}(\textbf{H}^{\text{off}})}{\partial \textbf{x}_d} = \frac{\partial L_d(\textbf{H}_{d,:}^{\text{off}})}{\partial \textbf{x}_d}$}, where {\rm\small$\textbf{H}_{d,:}^{\text{off}}=[\textbf{H}_{d,1}^{\text{off}},\textbf{H}_{d,2}^{\text{off}},\ldots,\textbf{H}_{d,D}^{\text{off}}]\in\mathbb{R}^D$} denotes the $d$-th row of {\rm\small$\textbf{H}^{\text{off}}$}, {\rm\small$\textbf{y}_d=[\textbf{Y}_{d,1},\textbf{Y}_{d,2},\ldots,\textbf{Y}_{d,n}]^{\top}\in\mathbb{R}^n$}.
		Furthermore, {\rm\small$L_d(\textbf{H}_{d,:}^{\text{off}})$} can be decomposed as
		{\rm\begin{equation*}
				L_d(\textbf{H}_{d,:}^{\text{off}})=L_d^{\text{linear}}(\textbf{H}_{d,:}^{\text{off}})+L_d^{\text{non}}(\textbf{H}_{d,:}^{\text{off}}),
		\end{equation*}}
		where {\rm\small$L_d^{\text{linear}}(\textbf{H}_{d,:}^{\text{off}})=\textbf{H}_{d,:}^{\text{off}} \textbf{Y}^{\text{linear}}\textbf{y}_d$} and {\rm\small$L_d^{\text{non}}(\textbf{H}_{d,:}^{\text{off}})= \textbf{H}_{d,:}^{\text{off}} (\textbf{Y}^{\text{non}}-\tilde{\textbf{y}} \textbf{1}_n ^{\top})\textbf{y}_d$}. {\rm\small$\textbf{Y}^{\text{linear}}= [ \textbf{o}_d^{\top} \textbf{y}_1,  \textbf{o}_d^{\top} \textbf{y}_2, \cdots,  \textbf{o}_d^{\top} \textbf{y}_D ]^{\top} \textbf{o}_d^{\top}$} and {\rm\small$\textbf{Y}^{\text{non}}=\textbf{Y}-\textbf{Y}^{\text{linear}}$}, where {\rm\small $\textbf{o}_d=\frac{\textbf{y}_d}{\|\textbf{y}_d\|}$} denotes the unit vector in the direction of {\rm $\textbf{y}_d$}. Then, {\rm\small$L_d^{\text{linear}}(\textbf{H}_{d,:}^{\text{off}})= \lVert \textbf{y}_d \rVert  \cdot  \textbf{H}_{d,:}^{\text{off}} [ \textbf{o}_d^{\top} \textbf{y}_1,  \textbf{o}_d^{\top} \textbf{y}_2, \cdots,  \textbf{o}_d^{\top} \textbf{y}_D ]^{\top} $}.
		Therefore, {\rm\small$\frac{\partial \textrm{Loss}^{\text{off}}(\textbf{H}^{\text{off}})}{\partial \textbf{x}_d} = \frac{\partial L_d(\textbf{H}_{d,:}^{\text{off}})}{\partial \textbf{x}_d} = \frac{\partial L_d^{\text{linear}}(\textbf{H}_{d,:}^{\text{off}})}{\partial \textbf{x}_d} + \frac{\partial L_d^{\text{non}}(\textbf{H}_{d,:}^{\text{off}})}{\partial \textbf{x}_d}$}.
	\end{lemma}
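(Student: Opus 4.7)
The plan is to prove the lemma in two stages: first reduce $\textrm{Loss}^{\text{off}}(\textbf{H}^{\text{off}})$ to an $\textbf{x}_d$-dependent piece plus an $\textbf{x}_d$-independent remainder, then split that $\textbf{x}_d$-dependent piece $L_d$ into its $\textbf{y}_d$-aligned and $\textbf{y}_d$-orthogonal components. The crucial structural fact underpinning both stages is that BN standardizes each feature dimension separately, so $\textbf{y}_k$ is a function of $\textbf{x}_k$ alone; hence $\partial \textbf{y}_k / \partial \textbf{x}_d = \textbf{0}$ for every $k \neq d$. This lets me classify terms of the quadratic form by their index pattern.

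First I would expand
\begin{equation*}
\textrm{Loss}^{\text{off}}(\textbf{H}^{\text{off}}) = \tfrac{1}{2}\sum_{i=1}^{n}\sum_{j,k} \textbf{H}^{\text{off}}_{j,k}(\textbf{Y}_{j,i}-\tilde{\textrm{y}}_j)(\textbf{Y}_{k,i}-\tilde{\textrm{y}}_k),
\end{equation*}
split the double sum by whether $j=d$, $k=d$, or neither, and use both $\textbf{H}^{\text{off}}_{d,d}=0$ and the symmetry $\textbf{H}^{\text{off}}_{j,k}=\textbf{H}^{\text{off}}_{k,j}$ (inherited from the Hessian) to merge the $j=d$ and $k=d$ contributions, which cancels the $\tfrac{1}{2}$. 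The terms with $j\neq d$ and $k\neq d$ depend only on $\textbf{y}_{j\neq d}$ and $\textbf{y}_{k\neq d}$, hence are $\textbf{x}_d$-independent. The merged $d$-indexed sum equals $\sum_{k\neq d}\textbf{H}^{\text{off}}_{d,k}\sum_i(\textbf{Y}_{d,i}-\tilde{\textrm{y}}_d)(\textbf{Y}_{k,i}-\tilde{\textrm{y}}_k)$; I would then pull the constant $\tilde{\textrm{y}}_d$ out, observing that the residual piece $-\tilde{\textrm{y}}_d\sum_{k\neq d}\textbf{H}^{\text{off}}_{d,k}\sum_i(\textbf{Y}_{k,i}-\tilde{\textrm{y}}_k)$ is again a function of $\{\textbf{x}_k\}_{k\neq d}$ only. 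What remains, $\sum_{k\neq d}\textbf{H}^{\text{off}}_{d,k}\sum_i \textbf{Y}_{d,i}(\textbf{Y}_{k,i}-\tilde{\textrm{y}}_k)$, I would recognize as the matrix product $\textbf{H}^{\text{off}}_{d,:}(\textbf{Y}-\tilde{\textbf{y}}\textbf{1}_n^{\top})\textbf{y}_d$ after noting that the $k=d$ column contributes nothing due to the zero diagonal of $\textbf{H}^{\text{off}}$. This yields both the decomposition of $\textrm{Loss}^{\text{off}}$ and the identity $\partial \textrm{Loss}^{\text{off}}/\partial \textbf{x}_d = \partial L_d/\partial \textbf{x}_d$.

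For the second decomposition, I would start from the definition $\textbf{Y}^{\text{non}} = \textbf{Y}-\textbf{Y}^{\text{linear}}$, so that $\textbf{Y}-\tilde{\textbf{y}}\textbf{1}_n^{\top} = \textbf{Y}^{\text{linear}} + (\textbf{Y}^{\text{non}}-\tilde{\textbf{y}}\textbf{1}_n^{\top})$. Left-multiplying by $\textbf{H}^{\text{off}}_{d,:}$ and right-multiplying by $\textbf{y}_d$ gives $L_d = L_d^{\text{linear}} + L_d^{\text{non}}$ immediately by linearity. To obtain the claimed simplified form of $L_d^{\text{linear}}$, I would compute $\textbf{Y}^{\text{linear}}\textbf{y}_d = [\textbf{o}_d^{\top}\textbf{y}_1,\ldots,\textbf{o}_d^{\top}\textbf{y}_D]^{\top}(\textbf{o}_d^{\top}\textbf{y}_d)$ and exploit the unit-vector identity $\textbf{o}_d^{\top}\textbf{y}_d = \|\textbf{y}_d\|$, which pulls the scalar $\|\textbf{y}_d\|$ out front.

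The main obstacle is really just bookkeeping: correctly handling the factor of two that arises from merging symmetric $(d,k)$ and $(k,d)$ contributions, and carefully separating the constant-in-$\textbf{x}_d$ offset $\tilde{\textrm{y}}_d\sum_{k\neq d}\textbf{H}^{\text{off}}_{d,k}\sum_i(\textbf{Y}_{k,i}-\tilde{\textrm{y}}_k)$ that distinguishes $\textrm{Loss}^{\text{off}}$'s $d$-block from $L_d$. There is no analytic difficulty; once the per-dimension independence $\partial \textbf{y}_k/\partial \textbf{x}_d = \textbf{0}$ ($k\neq d$) is invoked and the Hessian symmetry is used, the entire lemma is an exercise in grouping indices and rewriting scalar sums in matrix-vector form.
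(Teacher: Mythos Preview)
Your proposal is correct and follows essentially the same approach as the paper's proof: both expand the quadratic form over index pairs $(j,k)$, use the symmetry and zero diagonal of $\textbf{H}^{\text{off}}$ to merge the $j=d$ and $k=d$ contributions, peel off the $\tilde{\textrm{y}}_d$-piece as another $\textbf{x}_d$-independent remainder, and then split $L_d$ via $\textbf{Y}=\textbf{Y}^{\text{linear}}+\textbf{Y}^{\text{non}}$ together with $\textbf{o}_d^{\top}\textbf{y}_d=\|\textbf{y}_d\|$. If anything, you are slightly more explicit than the paper in naming the structural reason the ``independent'' terms really are $\textbf{x}_d$-independent, namely that BN standardizes coordinates separately so $\partial\textbf{y}_k/\partial\textbf{x}_d=\textbf{0}$ for $k\neq d$.
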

	
	\begin{proof}
		We prove \textit{Lemma 2} by two steps as follows.
		
		\textit{\textbf{Step 1:}} Firstly, we prove that {\rm\small$\textrm{Loss}^{\text{off}}(\textbf{H}^{\text{off}})$} can be decomposed into the loss term depending on {\rm$\textbf{x}_d$} (i.e.,
		{\rm\small$L_d(\textbf{H}_{d,:}^{\text{off}})= \textbf{H}^{\text{off}}_{d,:}(\textbf{Y} -\tilde{\textbf{y}} \textbf{1}_n ^{\top})  \textbf{y}_d$}) and the loss term independent with {\rm$\textbf{x}_d$}, thereby {\rm\small$\frac{\partial \textrm{Loss}^{\text{off}}(\textbf{H}^{\text{off}})}{\partial \textbf{x}_d} = \frac{\partial L_d(\textbf{H}_{d,:}^{\text{off}})}{\partial \textbf{x}_d}$}.
		
		Based on \textit{Lemma}~\ref{le:1},
		\begin{small}
			\begin{align}
				\qquad\textrm{Loss}^{\text{off}}(\textbf{H}^{\text{off}}) &= \sum_{i = 1}^n\frac{1}{2} (\textbf{y}^{(i)} -\tilde{\textbf{y}})^{\top} \textbf{H}^{\text{off}} (\textbf{y}^{(i)} - \tilde{\textbf{y}}) \nonumber\\
				&= \sum_{i = 1}^n \left(\frac{1}{2} \sum_{j = 1}^D\sum_{k=1}^D \textbf{H}^{\text{off}}_{j, k} (\textbf{y}^{(i)}_j- \tilde{\textrm{y}}_j) (\textbf{y}^{(i)}_k - \tilde{\textrm{y}}_k)\right) \nonumber\\
				& = \sum_{i = 1}^n \frac{1}{2} \left(\sum_{j = 1}^D\textbf{H}^{\text{off}}_{d, j} (\textbf{y}^{(i)}_d- \tilde{\textrm{y}}_d)(\textbf{y}^{(i)}_j- \tilde{\textrm{y}}_j) + \sum_{k = 1}^D\textbf{H}^{\text{off}}_{k, d} (\textbf{y}^{(i)}_k- \tilde{\textrm{y}}_k)(\textbf{y}^{(i)}_d- \tilde{\textrm{y}}_d) \right) \nonumber\\
				& \quad + \sum_{i = 1}^n \left(\frac{1}{2} \sum_{j = 1, j\neq d}^D\sum_{k=1, k\neq d}^D \textbf{H}^{\text{off}}_{j, k} (\textbf{y}^{(i)}_j- \tilde{\textrm{y}}_j) (\textbf{y}^{(i)}_k - \tilde{\textrm{y}}_k)\right) \nonumber \\
				& \qquad \slash \slash {\rm \tiny Note \ that \ the \ term\ \textbf{H}^{\text{off}}_{d, d} (\textbf{y}^{(i)}_d- \tilde{\textrm{y}}_d) (\textbf{y}^{(i)}_d - \tilde{\textrm{y}}_d) is computed twice for summation,} \nonumber\\
				& \qquad {\rm \tiny \ \ \ but\ since \ that \ \textbf{H}^{\text{off}}_{d,d}=0 according to \textit{Lemma}~\ref{le:1}, this term is zero.} \nonumber\\
				&= \sum_{i = 1}^n \frac{1}{2} \left(\sum_{j = 1}^D\textbf{H}^{\text{off}}_{d, j} (\textbf{y}^{(i)}_d- \tilde{\textrm{y}}_d)(\textbf{y}^{(i)}_j- \tilde{\textrm{y}}_j) + \sum_{k = 1}^D\textbf{H}^{\text{off}}_{k, d} (\textbf{y}^{(i)}_k- \tilde{\textrm{y}}_k)(\textbf{y}^{(i)}_d- \tilde{\textrm{y}}_d) \right) + \textrm{L}_d^{\text{tmp-independent1}} \nonumber \\
				& \qquad \slash \slash {\rm \tiny Let\ \textrm{L}_d^{\text{tmp-independent1}} denote \sum_{i = 1}^n \left(\frac{1}{2} \sum_{j = 1, j\neq d}^D\sum_{k=1, k\neq d}^D \textbf{H}^{\text{off}}_{j, k} (\textbf{y}^{(i)}_j- \tilde{\textrm{y}}_j) (\textbf{y}^{(i)}_k - \tilde{\textrm{y}}_k)\right) for simplicity.} \nonumber\\
				&=\sum_{i = 1}^n \sum_{j = 1}^D\textbf{H}^{\text{off}}_{d, j} (\textbf{y}^{(i)}_d- \tilde{\textrm{y}}_d)(\textbf{y}^{(i)}_j- \tilde{\textrm{y}}_j) + \textrm{L}_d^{\text{tmp-independent1}} \quad  {\rm  \slash \slash (\textbf{H}^{\text{off}}\ is \ symmetric.)}  \nonumber \\
				&= \sum_{j = 1}^D\textbf{H}^{\text{off}}_{d, j}\left(\sum_{i = 1}^n (\textbf{y}^{(i)}_d- \tilde{\textrm{y}}_d)(\textbf{y}^{(i)}_j- \tilde{\textrm{y}}_j) \right) + \textrm{L}_d^{\text{tmp-independent1}} \nonumber\\
				&= \sum_{j = 1}^D\textbf{H}^{\text{off}}_{d, j} (\textbf{y}_j-\tilde{\textrm{y}}_j \textbf{1}_n)^{\top} (\textbf{y}_d-\tilde{\textrm{y}}_d \textbf{1}_n) + \textrm{L}_d^{\text{tmp-independent1}} \nonumber\\
				&= \sum_{j = 1}^D\textbf{H}^{\text{off}}_{d, j} (\textbf{y}_j-\tilde{\textrm{y}}_j \textbf{1}_n)^{\top} \textbf{y}_d-
				\sum_{j = 1}^D\textbf{H}^{\text{off}}_{d, j} (\textbf{y}_j-\tilde{\textrm{y}}_j \textbf{1}_n)^{\top} \tilde{\textrm{y}}_d \textbf{1}_n + \textrm{L}_d^{\text{tmp-independent1}} \nonumber\\
				&= \textbf{H}^{\text{off}}_{d, :}\left(\textbf{Y}-\tilde{\textbf{y}} \textbf{1}_n ^{\top}\right)  \textbf{y}_d
				- \tilde{\textrm{y}}_d \sum_{j = 1}^D\textbf{H}^{\text{off}}_{d, j} (\textbf{y}_j^{\top} \textbf{1}_n-\tilde{\textrm{y}}_j \textbf{1}_n^{\top} \textbf{1}_n)  + \textrm{L}_d^{\text{tmp-independent1}} \nonumber\\
				&= \textbf{H}^{\text{off}}_{d, :}\left(\textbf{Y}-\tilde{\textbf{y}} \textbf{1}_n ^{\top}\right)  \textbf{y}_d
				- \tilde{\textrm{y}}_d \sum_{j = 1, j \neq d}^D\textbf{H}^{\text{off}}_{d, j} (\textbf{y}_j^{\top} \textbf{1}_n-\tilde{\textrm{y}}_j \textbf{1}_n^{\top} \textbf{1}_n)+ \textrm{L}_d^{\text{tmp-independent1}} \qquad   \slash \slash \textbf{H}^{\text{off}}_{d,d}=0 \nonumber \\
				\label{eq:ldderive}
				&= \textbf{H}^{\text{off}}_{d, :}\left(\textbf{Y}-\tilde{\textbf{y}} \textbf{1}_n ^{\top}\right)  \textbf{y}_d
				- \textrm{L}_d^{\text{tmp-independent2}} + \textrm{L}_d^{\text{tmp-independent1}} \nonumber\\
				& \qquad \slash \slash {\rm \tiny Let\ \textrm{L}_d^{\text{tmp-independent2}} denote \tilde{\textrm{y}}_d \sum_{j = 1, j \neq d}^D\textbf{H}^{\text{off}}_{d, j} (\textbf{y}_j^{\top} \textbf{1}_n-\tilde{\textrm{y}}_j \textbf{1}_n^{\top} \textbf{1}_n) for simplicity.} \nonumber\\
				&= \textrm{L}_d(\textbf{H}_{d,:}^{\text{off}}) + \textrm{L}_d^{\text{independent}}. \nonumber\\
				& \qquad \slash \slash {\rm \tiny Let\ \textrm{L}_d(\textbf{H}_{d,:}^{\text{off}}) denote \textbf{H}^{\text{off}}_{d, :}\left(\textbf{Y}-\tilde{\textbf{y}} \textbf{1}_n ^{\top}\right), and let  \textrm{L}_d^{\text{independent}} denote (-\textrm{L}_d^{\text{tmp-independent2}} + \textrm{L}_d^{\text{tmp-independent1}})} \nonumber
			\end{align}
		\end{small}

		Therefore,
		\begin{small}
			\begin{align*}
				\frac{\partial \textrm{Loss}^{\text{off}}(\textbf{H}^{\text{off}})}{\partial \textbf{x}_d}
				&= \frac{\partial \left(\textrm{L}_d(\textbf{H}_{d,:}^{\text{off}}) + \textrm{L}_d^{\text{independent}}(\textbf{H}_{d,:}^{\text{off}}) \right)}{\partial \textbf{x}_d}
				= \frac{\partial \textrm{L}_d(\textbf{H}_{d,:}^{\text{off}})}{\partial \textbf{x}_d} +\frac{\partial \textrm{L}_d^{\text{independent}}(\textbf{H}_{d,:}^{\text{off}}) }{\partial \textbf{x}_d}\\
				&= \frac{\partial L_d(\textbf{H}_{d,:}^{\text{off}})}{\partial \textbf{x}_d} + \textbf{0}
				= \frac{\partial L_d(\textbf{H}_{d,:}^{\text{off}})}{\partial \textbf{x}_d}.
			\end{align*}
		\end{small}
		
		\textit{\textbf{Step 2:}} $L_d(\textbf{H}_{d,:}^{\text{off}})$ can be futher decomposed into two terms, as follows.
		
		\begin{small}
			\begin{equation}
				\begin{aligned}
					L_d(\textbf{H}_{d,:}^{\text{off}})
					&= \textbf{H}^{\text{off}}_{d, :}\left(\textbf{Y}-\tilde{\textbf{y}} \textbf{1}_n ^{\top}\right)\textbf{y}_d\\
					&= \textbf{H}^{\text{off}}_{d, :}\left( \textbf{Y}^{\text{linear}} + \textbf{Y}^{\text{non}} -\tilde{\textbf{y}} \textbf{1}_n ^{\top} \right)\textbf{y}_d
					\qquad \slash \slash \textbf{Y} = \textbf{Y}^{\text{linear}}+\textbf{Y}^{\text{non}}\\
					&= \textbf{H}^{\text{off}}_{d, :}\textbf{Y}^{\text{linear}}\textbf{y}_d
					+ \textbf{H}^{\text{off}}_{d, :}
					\left(\textbf{Y}^{\text{non}} -\tilde{\textbf{y}} \textbf{1}_n ^{\top} \right)\textbf{y}_d \\
					&= L_d^{\text{linear}}(\textbf{H}_{d,:}^{\text{off}}) + L_d^{\text{non}}(\textbf{H}_{d,:}^{\text{off}}).
				\end{aligned}
			\end{equation}
		\end{small}
		
		where {\rm  $\textbf{Y}^{\text{linear}}= \left[ \textbf{o}_d^{\top} \textbf{y}_1,  \textbf{o}_d^{\top} \textbf{y}_2, \cdots,  \textbf{o}_d^{\top} \textbf{y}_D \right]^{\top} \textbf{o}_d^{\top} \in \mathbb{R}^{D \times n}$}, $\textbf{o}_d=\frac{\textbf{y}_d}{\|\textbf{y}_d\|}$ is the unit vector in the direction of {\rm $\textbf{y}_d$}, and  $\textbf{Y}^{\text{non}} = \left[\textbf{y}_{1}- \| \textbf{y}_{1} \| \cos(\textbf{y}_1, \textbf{y}_d)\textbf{o}_d , \cdots,\textbf{y}_{D} - \|\textbf{y}_{D} \| \cos(\textbf{y}_D, \textbf{y}_d) \textbf{o}_d  \right]^{\top}$.
		
		In particular,
		
		\begin{equation*}
			\begin{aligned}
				L_d^{\text{linear}}(\textbf{H}_{d,:}^{\text{off}})
				&= \textbf{H}_{d,:}^{\text{off}} \textbf{Y}^{\text{linear}}\textbf{y}_d
				= \textbf{H}_{d,:}^{\text{off}} \left[ \textbf{o}_d^{\top} \textbf{y}_1,  \textbf{o}_d^{\top} \textbf{y}_2, \cdots,  \textbf{o}_d^{\top} \textbf{y}_D \right]^{\top} \textbf{o}_d^{\top} \textbf{y}_d\\
				&= \textbf{H}_{d,:}^{\text{off}} \left[ \textbf{o}_d^{\top} \textbf{y}_1,  \textbf{o}_d^{\top} \textbf{y}_2, \cdots,  \textbf{o}_d^{\top} \textbf{y}_D \right]^{\top} (\frac{\textbf{y}_d^{\top}}{\|\textbf{y}_d\|} \textbf{y}_d)\\
				&= \textbf{H}_{d,:}^{\text{off}} \left[ \textbf{o}_d^{\top} \textbf{y}_1,  \textbf{o}_d^{\top} \textbf{y}_2, \cdots,  \textbf{o}_d^{\top} \textbf{y}_D \right]^{\top} (\frac{\|\textbf{y}_d\|^2}{\|\textbf{y}_d\|}) \\
				&=  \lVert \textbf{y}_d \rVert  \cdot  \textbf{H}_{d,:}^{\text{off}} \left[ \textbf{o}_d^{\top} \textbf{y}_1,  \textbf{o}_d^{\top} \textbf{y}_2, \cdots,  \textbf{o}_d^{\top} \textbf{y}_D \right]^{\top}
			\end{aligned}
		\end{equation*}
		
		
		In this way, for all $d$, the gradient of $L_d(\textbf{H}_{d,:}^{\text{off}})$ \emph{w.r.t.} $\textbf{x}_d$ can be decomposed into two terms,
		
		\begin{equation}
			\begin{aligned}
				\frac{\partial \textrm{Loss}^{\text{off}}(\textbf{H}^{\text{off}})}{\partial \textbf{x}_d}
				&=\frac{\partial \left(L_d^{\text{linear}}(\textbf{H}_{d,:}^{\text{off}})+L_d^{\text{non}}(\textbf{H}_{d,:}^{\text{off}})\right)}{\partial \textbf{x}_d}\\
				&= \frac{\partial L_d^{\text{linear}}(\textbf{H}_{d,:}^{\text{off}})}{\partial \textbf{x}_d} + \frac{\partial L_d^{\text{non}}(\textbf{H}_{d,:}^{\text{off}})}{\partial \textbf{x}_d},
			\end{aligned}
		\end{equation}
		
		
		where {$\frac{\partial L_d^{\text{linear}}(\textbf{H}_{d,:}^{\text{off}})}{\partial \textbf{x}_d} $} reflects the interaction utility yielded by feature components in all dimensions that are linear-correlated with $\textbf{y}_d$, \emph{i.e.}, $[ \textbf{o}_d^{\top} \textbf{y}_1,  \textbf{o}_d^{\top} \textbf{y}_2, \cdots,  \textbf{o}_d^{\top} \textbf{y}_D ]^{\top}$, subject to $\textbf{o}_d=\frac{\textbf{y}_d}{\|\textbf{y}_d\|}$. {$\frac{\partial L_d^{\text{non}}(\textbf{H}_{d,:}^{\text{off}})}{\partial \textbf{x}_d}$} reflects the interaction utility between $\textbf{y}_d$ and feature components that are not linearly-correlated with $\textbf{y}_d$, \emph{i.e.}, removing linearly-correlated components, $\forall j, \textbf{y}_{j}- \|\textbf{y}_j\| \cos(\textbf{y}_j, \textbf{y}_d) \textbf{o}_d$.
		

	\end{proof}

	$\textbf{Case 2}$: Then let us consider a more general case where different samples in a mini-batch have different analytic formulas, \emph{e.g.}, losses are computed with different ground-truth labels.  Therefore, different samples may have different gradients $\textbf{g}^{(i)}$ and Hessian matrices $\textbf{H}^{(i)}$ at the fixed point $\tilde{\textbf{y}}$ for Taylor series expansions of losses. In this case, we decompose the loss function for each $i$-th sample as \textit{Lemma 3}.
	
	\begin{lemma}\label{le:3}
		The loss function for each $i$-th sample, $1\le i \le n$, can be written as {\rm $\textrm{Loss}(\textbf{y}^{(i)};\tilde{\textbf{y}}) = \textrm{Loss}(\tilde{\textbf{y}}) + \textrm{Loss}^{\text{grad}}(\bbar{\textbf{g}}) +\textrm{Loss}^{\text{diag}}(\bbar{\textbf{H}}^{\text{diag}}) + \textrm{Loss}^{\text{off}}(\bbar{\textbf{H}}^{\text{off}}) + \textrm{Loss}^{\text{grad}}(\textbf{g}^{\prime (i)}) +\textrm{Loss}^{\text{diag}}(\textbf{H}^{\prime (i) \text{diag}}) + \textrm{Loss}^{\text{off}}(\textbf{H}^{\prime (i) \text{off}}) + R_2(\textbf{y}^{(i)}-\tilde{\textbf{y}})$}, where {\rm$\bbar{\textbf{g}} = \frac{1}{n}\sum_{i=1}^{n}\textbf{g}^{(i)}$} and {\rm$\bbar{\textbf{H}} = \frac{1}{n}\sum_{i=1}^{n}\textbf{H}^{(i)}$} denote the average gradient and the average Hessian matrix, respectively, which are shared by all samples in the mini-batch. {\rm$\textbf{g}^{\prime (i)} = \textbf{g}^{(i)} - \bbar{\textbf{g}}$} and {\rm$\textbf{H}^{\prime (i)} = \textbf{H}^{(i)} - \bbar{\textbf{H}}$} denote the distinctive gradient and the distinctive Hessian matrix of the $i$-th sample, respectively.
	\end{lemma}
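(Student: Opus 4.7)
The plan is to begin from the second-order Taylor expansion of $\textrm{Loss}(\textbf{y}^{(i)};\tilde{\textbf{y}})$ around $\tilde{\textbf{y}}$, but now with sample-specific coefficients $\textbf{g}^{(i)}$ and $\textbf{H}^{(i)}$ since in Case 2 each sample carries its own analytic loss formula. The desired decomposition is then produced by inserting the trivial identities $\textbf{g}^{(i)} = \bbar{\textbf{g}} + \textbf{g}^{\prime (i)}$ and $\textbf{H}^{(i)} = \bbar{\textbf{H}} + \textbf{H}^{\prime (i)}$, which by construction satisfy $\sum_i \textbf{g}^{\prime (i)} = \textbf{0}$ and $\sum_i \textbf{H}^{\prime (i)} = \textbf{0}$. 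Because the first-order term in the Taylor series is linear in $\textbf{g}^{(i)}$, and the quadratic form is linear in $\textbf{H}^{(i)}$, each of these immediately splits into a ``common'' piece determined by the mean and a ``distinctive'' piece determined by the residual.

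After this splitting I would invoke Lemma~\ref{le:1} twice: once on the common quadratic term with matrix $\bbar{\textbf{H}}$, and once on the distinctive quadratic term with matrix $\textbf{H}^{\prime (i)}$. The relevant fact is again linearity in the matrix, so that writing $\bbar{\textbf{H}} = \bbar{\textbf{H}}^{\text{diag}} + \bbar{\textbf{H}}^{\text{off}}$ and $\textbf{H}^{\prime (i)} = \textbf{H}^{\prime (i)\text{diag}} + \textbf{H}^{\prime (i)\text{off}}$ converts the two Hessian contributions into the four named pieces $\textrm{Loss}^{\text{diag}}(\bbar{\textbf{H}}^{\text{diag}})$, $\textrm{Loss}^{\text{off}}(\bbar{\textbf{H}}^{\text{off}})$, $\textrm{Loss}^{\text{diag}}(\textbf{H}^{\prime (i)\text{diag}})$, and $\textrm{Loss}^{\text{off}}(\textbf{H}^{\prime (i)\text{off}})$ asserted in the statement, while the $R_2$ remainder is carried along unchanged.

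The one non-routine point is notational bookkeeping: the symbols $\textrm{Loss}^{\text{grad}}$, $\textrm{Loss}^{\text{diag}}$, and $\textrm{Loss}^{\text{off}}$ were introduced in Case 1 as sums over the entire mini-batch, whereas in Lemma~\ref{le:3} they must be interpreted as the $i$-th summand alone. I would therefore open the proof by explicitly re-declaring them in their per-sample form, \emph{e.g.} setting $\textrm{Loss}^{\text{grad}}(\textbf{g}) := (\textbf{y}^{(i)}-\tilde{\textbf{y}})^{\top}\textbf{g}$ and analogously for the diagonal and off-diagonal quadratic forms, and observing that summing the resulting identity over $i$ recovers a clean batch-level analogue of the Case~1 decomposition because the $\textbf{g}^{\prime (i)}$ and $\textbf{H}^{\prime (i)}$ contributions cancel in the sum. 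I do not expect any substantive mathematical obstacle beyond this convention-setting, since every step reduces to linearity of the Taylor coefficients combined with the already-proved Lemma~\ref{le:1}.
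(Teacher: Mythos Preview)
Your proposal is correct and follows essentially the same route as the paper: start from the per-sample Taylor expansion with coefficients $\textbf{g}^{(i)},\textbf{H}^{(i)}$, substitute $\textbf{g}^{(i)}=\bbar{\textbf{g}}+\textbf{g}^{\prime(i)}$ and $\textbf{H}^{(i)}=\bbar{\textbf{H}}+\textbf{H}^{\prime(i)}$, and use linearity (plus the diagonal/off-diagonal split of Lemma~\ref{le:1}) to obtain the claimed decomposition. Your remark about the notational overloading of $\textrm{Loss}^{\text{grad}}$, $\textrm{Loss}^{\text{diag}}$, $\textrm{Loss}^{\text{off}}$ as per-sample rather than batch quantities is well taken; the paper silently reuses these names in the per-sample sense in Lemma~\ref{le:3}, so your explicit re-declaration is a welcome clarification rather than a deviation.
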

	
	\begin{proof}
		The loss function for each $i$-th sample, $1\le i \le n$, can be written as
		
		\begin{small}
			\begin{equation} \label{eq: gene taylor}
				\begin{aligned}
					\textrm{Loss}(\textbf{y}^{(i)};\tilde{\textbf{y}})=\textrm{Loss}(\tilde{\textbf{y}})+ (\textbf{y}^{(i)}-\tilde{\textbf{y}})^{\top}\textbf{g}^{(i)} +\frac{1}{2!}(\textbf{y}^{(i)}-\tilde{\textbf{y}})^{\top}\textbf{H}^{(i)}(\textbf{y}^{(i)}-\tilde{\textbf{y}})+ R_2(\textbf{y}^{(i)}-\tilde{\textbf{y}}).
				\end{aligned}
			\end{equation}
		\end{small}
		
		Let  {\rm$\bbar{\textbf{g}} = \frac{1}{n}\sum_{i=1}^{n}\textbf{g}^{(i)}$} and {\rm$\bbar{\textbf{H}} = \frac{1}{n}\sum_{i=1}^{n}\textbf{H}^{(i)}$} denote the average gradient and the average Hessian matrix, respectively; and let {\rm$\textbf{g}^{\prime (i)} = \textbf{g}^{(i)} - \bbar{\textbf{g}}$} and {\rm$\textbf{H}^{\prime (i)} = \textbf{H}^{(i)} - \bbar{\textbf{H}}$}.
		Then Equation~(\ref{eq: gene taylor}) can be re-written as
		
		\begin{small}
			\begin{equation} \label{eq: gene decompose taylor}
				\begin{aligned}
					\textrm{Loss}(\textbf{y}^{(i)};\tilde{\textbf{y}})
					&=\textrm{Loss}(\tilde{\textbf{y}})+ (\textbf{y}^{(i)}-\tilde{\textbf{y}})^{\top}\left( \bbar{\textbf{g}} + \textbf{g}^{\prime (i)} \right) +\frac{1}{2!}(\textbf{y}^{(i)}-\tilde{\textbf{y}})^{\top}\left( \bbar{\textbf{H}} + \textbf{H}^{\prime (i)} \right)(\textbf{y}^{(i)}-\tilde{\textbf{y}})+ R_2(\textbf{y}^{(i)}-\tilde{\textbf{y}})'\\
					&= \left( \textrm{Loss}(\tilde{\textbf{y}}) + \textrm{Loss}^{\text{grad}}(\bbar{\textbf{g}}) +\textrm{Loss}^{\text{diag}}(\bbar{\textbf{H}}^{\text{diag}}) + \textrm{Loss}^{\text{off}}(\bbar{\textbf{H}}^{\text{off}}) \right)\\
					& \quad + \left( \textrm{Loss}^{\text{grad}}(\textbf{g}^{\prime (i)}) +\textrm{Loss}^{\text{diag}}(\textbf{H}^{\prime (i) \text{diag}}) + \textrm{Loss}^{\text{off}}(\textbf{H}^{\prime (i) \text{off}}) + R_2(\textbf{y}^{(i)}-\tilde{\textbf{y}}) \right).
				\end{aligned}
			\end{equation}
		\end{small}
		
	\end{proof}

	
	\section{Proofs of Theorems 1, 2, 3, and Corollary 1}\label{appendix:theorem}
	
	In this section, we prove Theorems 1, 2, 3, and Corollary 1 in the main paper, which show the effects of the BN operation on each loss term in Equation~(\ref{eq:Taylor_simple}) during the back-propagation in \textbf{Case1}.
	
	{\bf Back-propagation through the BN layer.}  During the back-propagation, for all $d \in \{1, \cdots, D\}$, the gradient of $\textrm{Loss}^{\text{batch}}(\textbf{g},\textbf{H})$ $w.r.t.$ the $d$-th feature dimension of all the $n$ samples before the BN operation, $i.e.$, $\frac{\partial \textrm{Loss}^{\text{batch}}(\textbf{g},\textbf{H})}{\partial \textbf{x}_{d}}$, and the gradient of $\textrm{Loss}^{\text{batch}}(\textbf{g},\textbf{H})$ $w.r.t.$ the $d$-th feature dimension of all the $n$ samples after the standardization phase of the BN operation, $i.e.$, $\frac{\partial \textrm{Loss}^{\text{batch}}(\textbf{g},\textbf{H})}{\partial \textbf{y}_{d}}$, are connected by chain rule as follows.
	
	\begin{equation}\label{BN chain rule}
		\frac{\partial \textrm{Loss}^{\text{batch}}(\textbf{g},\textbf{H})}{\partial \textbf{x}_{d}}
		=\frac{\partial \textbf{y}_d^{\top}}{\partial \textbf{x}_{d}}\frac{\partial \textrm{Loss}^{\text{batch}}(\textbf{g},\textbf{H}) }{\partial \textbf{y}_{d}}
		=\textbf{J}_d\frac{\partial \textrm{Loss}^{\text{batch}}(\textbf{g},\textbf{H}) }{\partial \textbf{y}_{d}},
	\end{equation}
	
	where we use \(\textbf{J}_d = \frac{\partial \textbf{y}_d^{\top}}{\partial \textbf{x}_{d}}\) to  denote the Jacobian matrix of $\textbf{y}_d$ $w.r.t.$ $\textbf{x}_d$.
	
	

	According to Equation~(\ref{eq:bn4}), $\textbf{y}_d = \frac{\textbf{x}_d - \boldsymbol{\mu}_d\textbf{1}_n}{\boldsymbol{\sigma}_d}$, Therefore,

	\begin{small}\begin{equation}
			\begin{aligned}
				\textbf{J}_d
				&= \frac{\partial \textbf{y}_d^{\top}}{\partial \textbf{x}_{d}}\\
				&= \left( \frac{\partial \boldsymbol{\mu}_d}{\partial \textbf{x}_{d}}\frac{\partial \textbf{y}_d^{\top}}{\partial \boldsymbol{\mu}_d}  + \frac{\partial \boldsymbol{\sigma}_d^2}{\partial \textbf{x}_d}\frac{\partial \textbf{y}_d^{\top}}{\partial \boldsymbol{\sigma}_d^2}  + \frac{\partial \textbf{x}_d^{\top}}{\partial \textbf{x}_d} \frac{\partial \textbf{y}_d^{\top}}{\partial \textbf{x}_d} \right)  \\
				&= \left(\frac{\partial \left(\frac{1}{n} \textbf{x}_d^{\top}\textbf{1}_n\right)}{\partial \textbf{x}_d}\frac{\partial \textbf{y}_d^{\top}}{\partial \boldsymbol{\mu}_d}  + \frac{\partial \left(\frac{1}{n}(\textbf{x}_d - \boldsymbol{\mu}_d\textbf{1}_n)^{\top} \left(\textbf{x}_d - \boldsymbol{\mu}_d\textbf{1}_n\right)\right)}{\partial \textbf{x}_d}\frac{\partial \textbf{y}_d^{\top}}{\partial \boldsymbol{\sigma}_d^2}  + \frac{\partial \textbf{x}_d^{\top}}{\partial \textbf{x}_d} \frac{\partial \textbf{y}_d^{\top}}{\partial \textbf{x}_d} \right)  \\
				&= \left(\frac{1}{n}\textbf{1}_n\frac{\partial \textbf{y}_d^{\top}}{\partial \boldsymbol{\mu}_d}  + \frac{2}{n}\left(\textbf{x}_d - \boldsymbol{\mu}_d\textbf{1}_n\right)\frac{\partial \textbf{y}_d^{\top}}{\partial \boldsymbol{\sigma}_d^2}  + I \frac{\partial \textbf{y}_d^{\top}}{\partial \textbf{x}_d}\right)  \\
				&= \left(\frac{1}{n}\textbf{1}_n\frac{\partial \left( \frac{\textbf{x}_d - \boldsymbol{\mu}_d\textbf{1}_n}{\boldsymbol{\sigma}_d}\right)^{\top}}{\partial \boldsymbol{\mu}_d}  + \frac{2}{n}\left(\textbf{x}_d - \boldsymbol{\mu}_d\textbf{1}_n\right)\frac{\partial \left( \frac{\textbf{x}_d - \boldsymbol{\mu}_d\textbf{1}_n}{\boldsymbol{\sigma}_d}\right)^{\top}}{\partial \boldsymbol{\sigma}_d^2} + \frac{\partial \left( \frac{\textbf{x}_d - \boldsymbol{\mu}_d\textbf{1}_n}{\boldsymbol{\sigma}_d}\right)^{\top}}{\partial \textbf{x}_d} \right)  \\
				&= \left(-\frac{1}{n}\textbf{1}_n\frac{1}{\boldsymbol{\sigma}_d}\textbf{1}_n^{\top}  + \frac{2}{n} \left(\textbf{x}_{1} - \boldsymbol{\mu}_d\textbf{1}_n\right)\left(\textbf{x}_d - \boldsymbol{\mu}_d\textbf{1}_n\right)^{\top} \left(- \frac{1}{2} {\boldsymbol{\sigma}_d}^{-3} \right) + \frac{1}{\boldsymbol{\sigma}_d}I \right) \\
				\label{Jacobian}
				&= \frac{1}{\boldsymbol{\sigma}_d}\left(I-\frac{1}{n}\textbf{1}_n\textbf{1}_n^{\top} - \frac{1}{n\boldsymbol{\sigma}_d^2}\left(\textbf{x}_{d} - \boldsymbol{\mu}_d\textbf{1}_n\right)\left(\textbf{x}_{d} - \boldsymbol{\mu}_d\textbf{1}_n\right)^{\top} \right).
			\end{aligned}
		\end{equation}
	\end{small}

	Based on \textit{Lemma 1} and \textit{Lemma 2}, we obtain the decomposition of $\frac{\partial \textrm{Loss}^{\text{batch}}(\textbf{g}, \textbf{H})}{\partial \textbf{x}_d}$ and $\frac{\partial \textrm{Loss}^{\text{batch}}(\textbf{g}, \textbf{H})}{\partial \textbf{y}_d}$ as follows.
	
	\begin{scriptsize}
		\begin{align}
			\frac{\partial \textrm{Loss}^{\text{batch}}(\textbf{g}, \textbf{H})}{\partial \textbf{x}_d}
			&= \frac{\partial }{\partial \textbf{x}_d} \left(\textrm{Loss}^{\text{constant}} + \textrm{Loss}^{\text{grad}}(\textbf{g}) + \textrm{Loss}^{\text{diag}}(\textbf{H}^{\text{diag}}) + \textrm{Loss}^{\text{off}}(\textbf{H}^{\text{off}}) + \sum_{i = 1}^n R_2(\textbf{y}^{(i)}-\tilde{\textbf{y}})\right)\\
			&= \frac{\partial \textrm{Loss}^{\text{constant}}}{\partial \textbf{x}_d} + \frac{\partial \textrm{Loss}^{\text{grad}}(\textbf{g})}{\partial \textbf{x}_d} + \frac{\partial \textrm{Loss}^{\text{diag}}(\textbf{H}^{\text{diag}})}{\partial \textbf{x}_d} + \frac{\partial \textrm{Loss}^{\text{off}}(\textbf{H}^{\text{off}})}{\partial \textbf{x}_d}+ \frac{\partial (\sum_{i = 1}^n R_2(\textbf{y}^{(i)}-\tilde{\textbf{y}}))}{\partial \textbf{x}_d}. \\
			&= \frac{\partial \textrm{Loss}^{\text{constant}}}{\partial \textbf{x}_d} + \frac{\partial \textrm{Loss}^{\text{grad}}(\textbf{g})}{\partial \textbf{x}_d} + \frac{\partial \textrm{Loss}^{\text{diag}}(\textbf{H}^{\text{diag}})}{\partial \textbf{x}_d} + \frac{\partial L_d^{\text{linear}}(\textbf{H}_{d,:}^{\text{off}})}{\partial \textbf{x}_d} +
			\frac{\partial L_d^{\text{non}}(\textbf{H}_{d,:}^{\text{off}})}{\partial \textbf{x}_d} \\
			& \qquad + \frac{\partial (\sum_{i = 1}^n R_2(\textbf{y}^{(i)}-\tilde{\textbf{y}}))}{\partial \textbf{x}_d}. \nonumber
		\end{align}
	\end{scriptsize}
	
	\begin{scriptsize}
		\begin{align}
			\frac{\partial \textrm{Loss}^{\text{batch}}(\textbf{g}, \textbf{H})}{\partial \textbf{y}_d}
			&= \frac{\partial }{\partial \textbf{y}_d} \left(\textrm{Loss}^{\text{constant}} + \textrm{Loss}^{\text{grad}}(\textbf{g}) + \textrm{Loss}^{\text{diag}}(\textbf{H}^{\text{diag}}) + \textrm{Loss}^{\text{off}}(\textbf{H}^{\text{off}}) + \sum_{i = 1}^n R_2(\textbf{y}^{(i)}-\tilde{\textbf{y}})\right)\\
			&= \frac{\partial \textrm{Loss}^{\text{constant}}}{\partial \textbf{y}_d} + \frac{\partial \textrm{Loss}^{\text{grad}}(\textbf{g})}{\partial \textbf{y}_d} + \frac{\partial \textrm{Loss}^{\text{diag}}(\textbf{H}^{\text{diag}})}{\partial \textbf{y}_d} + \frac{\partial \textrm{Loss}^{\text{off}}(\textbf{H}^{\text{off}})}{\partial \textbf{y}_d}+ \frac{\partial (\sum_{i = 1}^n R_2(\textbf{y}^{(i)}-\tilde{\textbf{y}}))}{\partial \textbf{y}_d}. \\
			&= \frac{\partial \textrm{Loss}^{\text{constant}}}{\partial \textbf{y}_d} + \frac{\partial \textrm{Loss}^{\text{grad}}(\textbf{g})}{\partial \textbf{y}_d} + \frac{\partial \textrm{Loss}^{\text{diag}}(\textbf{H}^{\text{diag}})}{\partial \textbf{y}_d} + \frac{\partial L_d^{\text{linear}}(\textbf{H}_{d,:}^{\text{off}})}{\partial \textbf{y}_d} +
			\frac{\partial L_d^{\text{non}}(\textbf{H}_{d,:}^{\text{off}})}{\partial \textbf{y}_d} \\
			& \qquad + \frac{\partial (\sum_{i = 1}^n R_2(\textbf{y}^{(i)}-\tilde{\textbf{y}}))}{\partial \textbf{y}_d}. \nonumber
		\end{align}
	\end{scriptsize}

	Similarly by chain rule, we have
	\begin{small}
		\begin{align}
			\label{eq:chain1}
			\frac{\partial \textrm{Loss}^{\text{grad}}(\textbf{g})}{\partial \textbf{x}_{d}} &=\frac{\partial \textbf{y}_d^{\top}}{\partial \textbf{x}_{d}}\frac{\partial \textrm{Loss}^{\text{grad}}(\textbf{g})}{\partial \textbf{y}_{d}} =\textbf{J}_d\frac{\partial \textrm{Loss}^{\text{grad}}(\textbf{g})}{\partial \textbf{y}_{d}}. \\
			\label{eq:chain2}
			\frac{\partial \textrm{Loss}^{\text{diag}}(\textbf{H}^{\text{diag}})}{\partial \textbf{x}_{d}} &=\frac{\partial \textbf{y}_d^{\top}}{\partial \textbf{x}_{d}}\frac{\partial \textrm{Loss}^{\text{diag}}(\textbf{H}^{\text{diag}}) }{\partial \textbf{y}_{d}}=\textbf{J}_d\frac{\partial \textrm{Loss}^{\text{diag}}(\textbf{H}^{\text{diag}}) }{\partial \textbf{y}_{d}}. \\
			\label{eq:chain3}
			\frac{\partial \textrm{Loss}^{\text{off}}(\textbf{H}^{\text{off}})}{\partial \textbf{x}_{d}} &=\frac{\partial \textbf{y}_d^{\top}}{\partial \textbf{x}_{d}}\frac{\partial \textrm{Loss}^{\text{off}}(\textbf{H}^{\text{off}}) }{\partial \textbf{y}_{d}}=\textbf{J}_d\frac{\partial \textrm{Loss}^{\text{off}}(\textbf{H}^{\text{off}}) }{\partial \textbf{y}_{d}}. \\
			\label{eq:chain4}
			\frac{\partial L_d^{\text{linear}}(\textbf{H}_{d,:}^{\text{off}})}{\partial \textbf{x}_d} &=\frac{\partial \textbf{y}_d^{\top}}{\partial \textbf{x}_{d}}\frac{\partial L_d^{\text{linear}}(\textbf{H}_{d,:}^{\text{off}})}{\partial \textbf{y}_d}=\textbf{J}_d \frac{\partial L_d^{\text{linear}}(\textbf{H}_{d,:}^{\text{off}})}{\partial \textbf{y}_d}\\
			\label{eq:chain5}
			\frac{\partial L_d^{\text{non}}(\textbf{H}_{d,:}^{\text{off}})}{\partial \textbf{x}_d}&=\frac{\partial \textbf{y}_d^{\top}}{\partial \textbf{x}_{d}}\frac{\partial L_d^{\text{non}}(\textbf{H}_{d,:}^{\text{off}})}{\partial \textbf{y}_d}=\textbf{J}_d \frac{\partial L_d^{\text{non}}(\textbf{H}_{d,:}^{\text{off}})}{\partial \textbf{y}_d}.
		\end{align}
	\end{small}

	\begin{table}[h]
		\caption{Several notations are summarized for reference.}
		\label{notation}
		\centering
		\begin{tabular}{c|c}
			\hline
			$n$                                      &   the number of samples in a mini-batch \\
			$\textbf{x}^{(i)} \in \mathbb{R}^{D}$ & the $D$-dimensional feature of the $i$-th sample \textbf{before} the BN operation \\
			$\textbf{y}^{(i)} \in \mathbb{R}^{D}$ & the $D$-dimensional feature of the $i$-th sample \textbf{after}  the standardization phase \\
			$\textbf{x}_d\in \mathbb{R}^{n}$  & the $d$-th feature dimension of all the $n$ samples \textbf{before} the BN operation\\
			$\textbf{y}_d\in \mathbb{R}^{n}$  & the $d$-th feature dimension of all the $n$ samples \textbf{after} the standardization phase\\
			$\textbf{J}_d\in \mathbb{R}^{n \times n}$              & the Jacobian matrix of $\textbf{y}_d$  $w.r.t.$  $\textbf{x}_d$, $i.e.$ $\textbf{J}_d=\frac{\partial \textbf{y}_d^{\top}}{\partial \textbf{x}_{d}}$\\
			$\tilde{\textbf{y}} \in \mathbb{R}^{D}$          & the point to conduct Taylor expansion of the loss function\\
			$\textbf{o}_d \in \mathbb{R}^{D}$             & the unit vector in the direction of {\rm $\textbf{y}_d$} $i.e.$ $\textbf{o}_d=\frac{\textbf{y}_d}{\|\textbf{y}_{d}\|}$ \\
			\hline
		\end{tabular}
	\end{table}
	
	
	\subsection{Proof of Theorem 1: the effect of $\textrm{Loss}^{\text{constant}}$ and $\textrm{Loss}^{\text{grad}}(\textbf{g})$}
	
	In this section, we prove \textit{Theorem 1} in Section 3 of the main paper, which shows the effect of the zeroth order term and the first order term of $\textrm{Loss}^{\text{batch}}(\textbf{g}, \textbf{H})$.
	
	\begin{theorem}\label{th:1}
		{\rm$\textrm{Loss}^{\text{grad}}(\textbf{g})$} and {\rm$\textrm{Loss}^{\text{constant}}$} have no gradients on input features {\rm$\textbf{X}$} of the BN operation, i.e., {\rm$\frac{\partial \textrm{Loss}^{\text{grad}}(\textbf{g})}{\partial \textbf{X}} = \textbf{0}$} and {\rm$\frac{\partial \textrm{Loss}^{\text{constant}}}{\partial \textbf{X}} = \textbf{0}$}.
	\end{theorem}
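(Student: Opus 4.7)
My plan is to handle the two statements separately, exploiting the fact that the standardization phase of the BN operation forces each standardized feature dimension to have zero sample mean, i.e., $\textbf{1}_n^\top \textbf{y}_d = 0$ for every $d$.

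The claim $\partial \textrm{Loss}^{\text{constant}}/\partial \textbf{X} = \textbf{0}$ is immediate: by construction $\textrm{Loss}^{\text{constant}} = n\,\textrm{Loss}(\tilde{\textbf{y}})$, where $\tilde{\textbf{y}}$ is a fixed reference point independent of the mini-batch features $\textbf{X}$, so the expression does not depend on $\textbf{X}$ and its gradient vanishes. I would simply state this and move on.

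For $\textrm{Loss}^{\text{grad}}(\textbf{g})$, I plan to give a short algebraic argument first and then confirm it via the chain rule using the Jacobian $\textbf{J}_d$ derived in Equation~(\ref{Jacobian}). Algebraically, expanding $\textrm{Loss}^{\text{grad}}(\textbf{g}) = \sum_{i=1}^n (\textbf{y}^{(i)} - \tilde{\textbf{y}})^\top \textbf{g}$ coordinate-wise gives $\sum_{d=1}^D \textbf{g}_d\,\bigl(\textbf{1}_n^\top \textbf{y}_d - n\tilde{\textrm{y}}_d\bigr)$. Since the standardization phase enforces $\textbf{1}_n^\top \textbf{y}_d = 0$, this collapses to $-n\sum_d \textbf{g}_d \tilde{\textrm{y}}_d$, a constant in $\textbf{X}$, hence the gradient is $\textbf{0}$.

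As a backup (and to match the chain-rule framework of the subsequent proofs), I would alternatively compute $\partial \textrm{Loss}^{\text{grad}}(\textbf{g})/\partial \textbf{y}_d = \textbf{g}_d \textbf{1}_n$ and show that $\textbf{J}_d \textbf{1}_n = \textbf{0}$. This uses the three identities $\textbf{1}_n^\top \textbf{1}_n = n$, $(\textbf{x}_d - \boldsymbol{\mu}_d \textbf{1}_n)^\top \textbf{1}_n = 0$, and the explicit form of $\textbf{J}_d$ from Equation~(\ref{Jacobian}), giving $\textbf{J}_d \textbf{1}_n = \tfrac{1}{\boldsymbol{\sigma}_d}(\textbf{1}_n - \textbf{1}_n - \textbf{0}) = \textbf{0}$, so that $\partial \textrm{Loss}^{\text{grad}}(\textbf{g})/\partial \textbf{x}_d = \textbf{J}_d (\textbf{g}_d \textbf{1}_n) = \textbf{0}$ for every $d$. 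I expect no real obstacle here; the only subtle point is keeping track of which quantities the derivative acts on (in particular, remembering that both $\boldsymbol{\mu}_d$ and $\boldsymbol{\sigma}_d$ depend on $\textbf{x}_d$, which is already absorbed into the expression for $\textbf{J}_d$). The essential content is simply that the centering in the standardization phase annihilates any linear functional of $\textbf{y}$ when viewed as a function of $\textbf{X}$.
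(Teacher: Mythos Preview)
Your proposal is correct. The backup chain-rule argument you give (computing $\partial \textrm{Loss}^{\text{grad}}(\textbf{g})/\partial \textbf{y}_d = \textbf{g}_d \textbf{1}_n$ and then showing $\textbf{J}_d \textbf{1}_n = \textbf{0}$) is exactly the paper's proof. Your primary algebraic argument---observing that $\textbf{1}_n^\top \textbf{y}_d = 0$ makes $\textrm{Loss}^{\text{grad}}(\textbf{g})$ literally equal to the constant $-n\,\tilde{\textbf{y}}^\top\textbf{g}$---is a cleaner shortcut that the paper does not state explicitly; it sidesteps the Jacobian computation entirely by showing the function is constant in $\textbf{X}$, not merely that its gradient vanishes. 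Either route is fine, and together they cover the paper's approach.
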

	
	\begin{proof}
		(1) $\textrm{Loss}^{\text{constant}}=n\textrm{Loss}(\tilde{\textbf{y}})$ is a constant, thus $\frac{\partial \textrm{Loss}^{\text{constant}}}{\partial \textbf{x}_{d}} = \textbf{0}$, $ d = 1, \cdots, D$.\\
		(2) In consideration of the connection that $\frac{\partial \textrm{Loss}^{\text{grad}}(\textbf{g})}{\partial \textbf{x}_{d}} =\textbf{J}_d\frac{\partial \textrm{Loss}^{\text{grad}}(\textbf{g})}{\partial \textbf{y}_{d}}$, which is shown in Equation~(\ref{eq:chain1}), we first derive $\frac{\partial \textrm{Loss}^{\text{grad}}(\textbf{g})}{\partial \textbf{y}_d}$ as follows.
		
		\begin{small}
			\begin{equation}
				\begin{aligned}
					\frac{\partial \textrm{Loss}^{\text{grad}}(\textbf{g})}{\partial \textbf{y}_d}
					&=\frac{\partial \left(\sum_{i = 1}^n (\textbf{y}^{(i)} - \tilde{\textbf{y}})^{\top} \textbf{g}\right) }{\partial \textbf{y}_d}= \frac{\partial \left(\textbf{1}_n^{\top}\left(\textbf{Y}^{\top} - \textbf{1}_n\tilde{\textbf{y}}^{\top}\right) \textbf{g}\right)}{\partial \textbf{y}_d}\\
					&= \frac{\partial \left(\textbf{1}_n^{\top}\sum_{j = 1}^D \textbf{g}_j (\textbf{y}_j - \tilde{\textrm{y}}_j \textbf{1}_n )\right) }{\partial \textbf{y}_d}= \frac{\partial \left(\textbf{g}_d \textbf{1}_n^{\top}(\textbf{y}_d - \tilde{\textrm{y}}_d \textbf{1}_n)\right) }{\partial \textbf{y}_d}= \textbf{g}_d \textbf{1}_n \in \mathbb{R}^{n}, 
				\end{aligned}
			\end{equation}
		\end{small}
		
		where $\textbf{g}_d \in \mathbb{R}$ is the $d$-th element of the gradient $\textbf{g} \in \mathbb{R}^{D}$. Then,
		
		\begin{small}
			\begin{align*}
				\frac{\partial \textrm{Loss}^{\text{grad}}(\textbf{g})}{\partial \textbf{x}_d}
				&= \textbf{J}_d \frac{\partial \textrm{Loss}^{\text{grad}}(\textbf{g})}{\partial \textbf{y}_d}\\
				&= \frac{1}{\boldsymbol{\sigma}_d}\left(I-\frac{1}{n}\textbf{1}_n\textbf{1}_n^{\top} - \frac{1}{n\boldsymbol{\sigma}_d^2}\left(\textbf{x}_{d} - \boldsymbol{\mu}_d\textbf{1}_n\right)\left(\textbf{x}_{d} - \boldsymbol{\mu}_d\textbf{1}_n\right)^{\top} \right)\frac{\partial \textrm{Loss}^{\text{grad}}(\textbf{g})}{\partial \textbf{y}_d}\\
				&= \frac{1}{\boldsymbol{\sigma}_d}\left(I-\frac{1}{n}\textbf{1}_n\textbf{1}_n^{\top} - \frac{1}{n\boldsymbol{\sigma}_d^2}\left(\textbf{x}_{d} - \boldsymbol{\mu}_d\textbf{1}_n\right)\left(\textbf{x}_{d} - \boldsymbol{\mu}_d\textbf{1}_n\right)^{\top} \right) \left(\textbf{g}_d \textbf{1}_n\right)\\
				&= \frac{\textbf{g}_d}{\boldsymbol{\sigma}_d}\left(\textbf{1}_n-\frac{1}{n}\textbf{1}_n\textbf{1}_n^{\top}\textbf{1}_n - \frac{1}{n\boldsymbol{\sigma}_d^2}\left(\textbf{x}_{d} - \boldsymbol{\mu}_d\textbf{1}_n\right)\left(\textbf{x}_{d} - \boldsymbol{\mu}_d\textbf{1}_n\right)^{\top}\textbf{1}_n \right)\\
				&= \frac{\textbf{g}_d}{\boldsymbol{\sigma}_d}\left(\textbf{1}_n-\textbf{1}_n - \frac{1}{n\boldsymbol{\sigma}_d^2}\left(\textbf{x}_{d} - \boldsymbol{\mu}_d\textbf{1}_n\right)\left(\textbf{x}_{d}^{\top}\textbf{1}_n - \boldsymbol{\mu}_d\textbf{1}_n^{\top}\textbf{1}_n\right) \right)\\
				&= \frac{\textbf{g}_d}{\boldsymbol{\sigma}_d}\left(\textbf{0} - \frac{1}{n\boldsymbol{\sigma}_d^2}\left(\textbf{x}_{1} - \boldsymbol{\mu}_d\textbf{1}_n\right)\left(n\boldsymbol{\mu}_d - n\boldsymbol{\mu}_d\right) \right)\\
				&= \textbf{0} \in \mathbb{R}^{n} 
			\end{align*}
		\end{small}
		
	\end{proof}

	\subsection{Proof of Theorem 2: the effect of  $\textrm{Loss}^{\text{diag}}(\textbf{H}^{\text{diag}})$.}
	
	In this section, we prove \textit{Theorem 2} in Section 3 of the main paper, which shows the effects of diagonal elements in the Hessian matrix at the fixed point $\tilde{\textbf{y}}$.

	\begin{theorem}\label{th:2}
		{\rm$\textrm{Loss}^{\text{diag}}(\textbf{H}^{\text{diag}}) $} has no gradients on input features {\rm$\textbf{X}$} of the BN operation, i.e., {\rm$\frac{\partial\textrm{Loss}^{\text{diag}}(\textbf{H}^{\text{diag}}) }{\partial \textbf{X}} = 0$}.
	\end{theorem}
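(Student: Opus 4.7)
The plan is to mimic the structure of the proof of Theorem~\ref{th:1}: reduce to the $d$-th coordinate via the chain rule $\frac{\partial}{\partial \textbf{x}_d} = \textbf{J}_d \frac{\partial}{\partial \textbf{y}_d}$ using the Jacobian in Equation~(\ref{Jacobian}), and show that $\textbf{J}_d$ annihilates the gradient arising from $\textrm{Loss}^{\text{diag}}(\textbf{H}^{\text{diag}})$.

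First I would exploit the diagonal structure of $\textbf{H}^{\text{diag}}$ to rewrite
\begin{equation*}
\textrm{Loss}^{\text{diag}}(\textbf{H}^{\text{diag}}) = \frac{1}{2}\sum_{d'=1}^{D} \textbf{H}^{\text{diag}}_{d',d'} \sum_{i=1}^{n}(\textbf{y}^{(i)}_{d'}-\tilde{\textrm{y}}_{d'})^2 = \frac{1}{2}\sum_{d'=1}^{D} \textbf{H}^{\text{diag}}_{d',d'}\,\|\textbf{y}_{d'} - \tilde{\textrm{y}}_{d'}\textbf{1}_n\|^2.
\end{equation*}
Only the $d'=d$ term depends on $\textbf{y}_d$, so $\frac{\partial \textrm{Loss}^{\text{diag}}(\textbf{H}^{\text{diag}})}{\partial \textbf{y}_d} = \textbf{H}^{\text{diag}}_{d,d}(\textbf{y}_d - \tilde{\textrm{y}}_d\textbf{1}_n) \in \mathbb{R}^n$.

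Next I would substitute this into $\textbf{J}_d\,\frac{\partial \textrm{Loss}^{\text{diag}}(\textbf{H}^{\text{diag}})}{\partial \textbf{y}_d}$ and simplify. Using the identity $\textbf{x}_d - \boldsymbol{\mu}_d\textbf{1}_n = \boldsymbol{\sigma}_d \textbf{y}_d$, the Jacobian in Equation~(\ref{Jacobian}) becomes $\textbf{J}_d = \frac{1}{\boldsymbol{\sigma}_d}\bigl(I - \tfrac{1}{n}\textbf{1}_n\textbf{1}_n^{\top} - \tfrac{1}{n}\textbf{y}_d\textbf{y}_d^{\top}\bigr)$. The decisive facts are the two standardization identities $\textbf{1}_n^{\top}\textbf{y}_d = 0$ (zero batch mean) and $\textbf{y}_d^{\top}\textbf{y}_d = n$ (unit batch variance). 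Applying these to the three summands of $\textbf{J}_d(\textbf{y}_d - \tilde{\textrm{y}}_d\textbf{1}_n)$ should yield respectively $(\textbf{y}_d - \tilde{\textrm{y}}_d\textbf{1}_n)$, $-\tilde{\textrm{y}}_d\textbf{1}_n$, and $\textbf{y}_d$, which cancel to $\textbf{0}$. Multiplying by the scalar $\textbf{H}^{\text{diag}}_{d,d}/\boldsymbol{\sigma}_d$ preserves the vanishing, giving $\frac{\partial \textrm{Loss}^{\text{diag}}(\textbf{H}^{\text{diag}})}{\partial \textbf{x}_d} = \textbf{0}$ for every $d$, hence $\frac{\partial \textrm{Loss}^{\text{diag}}(\textbf{H}^{\text{diag}})}{\partial \textbf{X}} = \textbf{0}$.

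There is no single hard step; the calculation is routine once the two standardization identities are in hand. The only mild bookkeeping subtlety will be recognising that the third projector $\tfrac{1}{n}\textbf{y}_d\textbf{y}_d^{\top}$ is precisely the one that removes the $\textbf{y}_d$ component (because $\|\textbf{y}_d\|^2 = n$ makes it an idempotent with unit action on $\textbf{y}_d$), which is what matches and cancels the identity term; the mean projector $\tfrac{1}{n}\textbf{1}_n\textbf{1}_n^{\top}$ simultaneously produces the $\tilde{\textrm{y}}_d\textbf{1}_n$ contribution that cancels the residual constant shift. This is the same mechanism seen in the proof of Theorem~\ref{th:1}: the BN standardization subtracts exactly the mean and divides by exactly the standard deviation of the batch, so any gradient signal that is affinely pulled back through $\textbf{J}_d$ from a direction lying in the span of $\{\textbf{1}_n, \textbf{y}_d\}$ is eliminated, and $\textbf{H}^{\text{diag}}_{d,d}(\textbf{y}_d - \tilde{\textrm{y}}_d \textbf{1}_n)$ lies exactly in that span.
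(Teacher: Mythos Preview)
Your proposal is correct and follows essentially the same approach as the paper: compute $\frac{\partial \textrm{Loss}^{\text{diag}}}{\partial \textbf{y}_d} = \textbf{H}^{\text{diag}}_{d,d}(\textbf{y}_d - \tilde{\textrm{y}}_d\textbf{1}_n)$, then push through $\textbf{J}_d$ and watch the terms cancel. Your substitution $\textbf{x}_d - \boldsymbol{\mu}_d\textbf{1}_n = \boldsymbol{\sigma}_d\textbf{y}_d$ to rewrite $\textbf{J}_d = \frac{1}{\boldsymbol{\sigma}_d}\bigl(I - \tfrac{1}{n}\textbf{1}_n\textbf{1}_n^{\top} - \tfrac{1}{n}\textbf{y}_d\textbf{y}_d^{\top}\bigr)$ is a mild streamlining of the paper's version, which keeps everything in $\textbf{x}_d$ coordinates, but the mechanism is identical.

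One small bookkeeping slip to fix when you write it out: the three summands of $\textbf{J}_d(\textbf{y}_d - \tilde{\textrm{y}}_d\textbf{1}_n)$ are $(\textbf{y}_d - \tilde{\textrm{y}}_d\textbf{1}_n)$, $+\tilde{\textrm{y}}_d\textbf{1}_n$, and $-\textbf{y}_d$ (not $-\tilde{\textrm{y}}_d\textbf{1}_n$ and $+\textbf{y}_d$ as you wrote); with the correct signs they indeed sum to $\textbf{0}$.
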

	
	\begin{proof}
		Since that $\frac{\partial \textrm{Loss}^{\text{diag}}(\textbf{H}^{\text{diag}})}{\partial \textbf{X}} = \left[
		\frac{\partial \textrm{Loss}^{\text{diag}}(\textbf{H}^{\text{diag}})}{\partial \textbf{x}_1},
		\frac{\partial \textrm{Loss}^{\text{diag}}(\textbf{H}^{\text{diag}})}{\partial \textbf{x}_2},
		\ldots;
		\frac{\partial \textrm{Loss}^{\text{diag}}(\textbf{H}^{\text{diag}})}{\partial \textbf{x}_D}
		\right]^{\top}$,
		it is equivalent to prove that $\frac{\partial \textrm{Loss}^{\text{diag}}(\textbf{H}^{\text{diag}})}{\partial \textbf{x}_{d}} = \textbf{0}$, ${\forall} d \in \{1, \cdots, D\}$.
		According to Equation~(\ref{eq:chain2}), $\frac{\partial \textrm{Loss}^{\text{diag}}(\textbf{H}^{\text{diag}})}{\partial \textbf{x}_{d}} =\textbf{J}_d\frac{\partial \textrm{Loss}^{\text{diag}}(\textbf{H}^{\text{diag}}) }{\partial \textbf{y}_{d}}$, thus we first derive $\frac{\partial \textrm{Loss}^{\text{diag}}(\textbf{H}^{\text{diag}})}{\partial \textbf{y}_d}$ as follows.
		
		\begin{small}
			\begin{align*}
				\frac{\partial \textrm{Loss}^{\text{diag}}(\textbf{H}^{\text{diag}})}{\partial \textbf{y}_d}
				&= \frac{\partial\left(\sum_{i = 1}^n\frac{1}{2} (\textbf{y}^{(i)} -\tilde{\textbf{y}})^{\top} \textbf{H}^{\text{diag}} (\textbf{y}^{(i)} - \tilde{\textbf{y}})\right)}{\partial \textbf{y}_d}\\
				&= \frac{1}{2}\frac{\partial\left(\sum_{i = 1}^n \sum_{j = 1}^{D} \textbf{H}_{j, j}^{\text{diag}} (\textbf{y}^{(i)}_j - \tilde{\textrm{y}}_j)^2\right)}{\partial \textbf{y}_d}
				\qquad \slash \slash \tilde{\textbf{y}}_j \in \mathbb{R}\ \textrm{is the }j\textrm{-th element of point}\ \tilde{\textbf{y}} \in \mathbb{R}^{D}\\
				&= \frac{1}{2}\frac{\partial\left(\sum_{j = 1}^{D} \textbf{H}_{j, j}^{\text{diag}}\sum_{i = 1}^n ((\textbf{y}^{(i)}_j - \tilde{\textrm{y}}_j)^2\right)}{\partial \textbf{y}_d}\\
				&= \frac{1}{2}\frac{\partial\left(\sum_{j = 1}^{D} \textbf{H}_{j, j}^{\text{diag}} \left(\textbf{y}_j - \tilde{\textrm{y}}_j \textbf{1}_n\right)^{\top}\left(\textbf{y}_j - \tilde{\textrm{y}}_j \textbf{1}_n\right) \right)}{\partial \textbf{y}_d}\\
				&= \frac{1}{2}\frac{\partial\left(\textbf{H}_{d, d}^{\text{diag}} \left(\textbf{y}_d - \tilde{\textrm{y}}_d \textbf{1}_n\right)^{\top}\left(\textbf{y}_d - \tilde{\textrm{y}}_d \textbf{1}_n\right) \right)}{\partial \textbf{y}_d}
				\qquad \slash \slash \tilde{\textbf{y}}_d \in \mathbb{R}\ \textrm{is the }d\textrm{-th element of point}\ \tilde{\textbf{y}} \in \mathbb{R}^{D}\\
				&= \textbf{H}_{d, d}^{\text{diag}} \left(\textbf{y}_d - \tilde{\textrm{y}}_d \textbf{1}_n\right),  \nonumber
			\end{align*}
		\end{small}
		
		Then,
		
		\begin{small}
			\begin{align*}
				\frac{\partial \textrm{Loss}^{\text{diag}}(\textbf{H}^{\text{diag}})}{\partial \textbf{x}_d}
				&=\textbf{J}_d\frac{\partial \textrm{Loss}^{\text{diag}}(\textbf{H}^{\text{diag}}) }{\partial \textbf{y}_{d}}
				\qquad \slash \slash {\rm \ Equation~(\ref{eq:chain2})}\\
				&= \frac{1}{\boldsymbol{\sigma}_d}\left(I-\frac{1}{n}\textbf{1}_n\textbf{1}_n^{\top} - \frac{1}{n\boldsymbol{\sigma}_d^2}\left(\textbf{x}_{d} - \boldsymbol{\mu}_d\textbf{1}_n\right)\left(\textbf{x}_{d} - \boldsymbol{\mu}_d\textbf{1}_n\right)^{\top} \right)\frac{\partial \textrm{Loss}^{\text{diag}}(\textbf{H}^{\text{diag}})}{\partial \textbf{y}_d}\\
				&= \frac{1}{\boldsymbol{\sigma}_d}\left(I-\frac{1}{n}\textbf{1}_n\textbf{1}_n^{\top} - \frac{1}{n\boldsymbol{\sigma}_d^2}\left(\textbf{x}_{d} - \boldsymbol{\mu}_d\textbf{1}_n\right)\left(\textbf{x}_{d} - \boldsymbol{\mu}_d\textbf{1}_n\right)^{\top} \right)\textbf{H}_{d, d}^{\text{diag}} \left(\textbf{y}_d - \tilde{\textrm{y}}_d \textbf{1}_n\right)\\
				&= \frac{1}{\boldsymbol{\sigma}_d}\left(I-\frac{1}{n}\textbf{1}_n\textbf{1}_n^{\top} - \frac{1}{n\boldsymbol{\sigma}_d^2}\left(\textbf{x}_{d} - \boldsymbol{\mu}_d\textbf{1}_n\right)\left(\textbf{x}_{d} - \boldsymbol{\mu}_d\textbf{1}_n\right)^{\top} \right)\textbf{H}_{d, d}^{\text{diag}} \frac{\textbf{x}_d - \boldsymbol{\mu}_d\textbf{1}_n}{\boldsymbol{\sigma}_d}\\
				& \quad - \frac{1}{\boldsymbol{\sigma}_d}\left(I-\frac{1}{n}\textbf{1}_n\textbf{1}_n^{\top} - \frac{1}{n\boldsymbol{\sigma}_d^2}\left(\textbf{x}_{d} - \boldsymbol{\mu}_d\textbf{1}_n\right)\left(\textbf{x}_{d} - \boldsymbol{\mu}_d\textbf{1}_n\right)^{\top} \right)\textbf{H}_{d, d}^{\text{diag}} \tilde{\textrm{y}}_d \textbf{1}_n\\
				&= \frac{\textbf{H}_{d, d}^{\text{diag}}}{\boldsymbol{\sigma}_d^2}\left(\left(\textbf{x}_d - \boldsymbol{\mu}_d\textbf{1}_n\right)-\frac{1}{n}\textbf{1}_n\textbf{1}_n^{\top}\left(\textbf{x}_d - \boldsymbol{\mu}_d\textbf{1}_n\right) - \frac{1}{n\boldsymbol{\sigma}_d^2}\left(\textbf{x}_{d} - \boldsymbol{\mu}_d\textbf{1}_n\right)\left(\textbf{x}_{d} - \boldsymbol{\mu}_d\textbf{1}_n\right)^{\top}\left(\textbf{x}_d - \boldsymbol{\mu}_d\textbf{1}_n\right)\right)\\
				& \quad - \frac{\textbf{H}_{d, d}^{\text{diag}} \tilde{\textrm{y}}_d}{\boldsymbol{\sigma}_d}\left(\textbf{1}_n-\frac{1}{n}\textbf{1}_n\textbf{1}_n^{\top}\textbf{1}_n - \frac{1}{n\boldsymbol{\sigma}_d^2}\left(\textbf{x}_{d} - \boldsymbol{\mu}_d\textbf{1}_n\right)\left(\textbf{x}_{d} - \boldsymbol{\mu}_d\textbf{1}_n\right)^{\top}\textbf{1}_n \right) \\
				&= \frac{\textbf{H}_{d, d}^{\text{diag}}}{\boldsymbol{\sigma}_d^2}\left(\left(\textbf{x}_d - \boldsymbol{\mu}_d\textbf{1}_n\right)-\frac{1}{n}\textbf{1}_n\left(\textbf{1}_n^{\top}\textbf{x}_d - \boldsymbol{\mu}_d\textbf{1}_n^{\top}\textbf{1}_n\right) - \frac{1}{n\boldsymbol{\sigma}_d^2}\left(\textbf{x}_{d} - \boldsymbol{\mu}_d\textbf{1}_n\right)n \boldsymbol{\sigma}_d^2\right)\\
				& \quad - \frac{\textbf{H}_{d, d}^{\text{diag}} \tilde{\textrm{y}}_d}{\boldsymbol{\sigma}_d}\left(\textbf{0} - \frac{1}{n\boldsymbol{\sigma}_d^2}\left(\textbf{x}_{d} - \boldsymbol{\mu}_d\textbf{1}_n\right)\left(\textbf{x}_{d}^{\top}\textbf{1}_n - \boldsymbol{\mu}_d\textbf{1}_n^{\top}\textbf{1}_n\right) \right) \\
				&= \frac{\textbf{H}_{d, d}^{\text{diag}}}{\boldsymbol{\sigma}_d^2}\left(\left(\textbf{x}_d - \boldsymbol{\mu}_d\textbf{1}_n\right)-\frac{1}{n}\textbf{1}_n\left(n\boldsymbol{\mu}_d - n\boldsymbol{\mu}_d\right) - \frac{n\boldsymbol{\sigma}_d^2}{n\boldsymbol{\sigma}_d^2 }\left(\textbf{x}_{d} - \boldsymbol{\mu}_d\textbf{1}_n\right)\right)\\
				& \quad - \frac{\textbf{H}_{d, d}^{\text{diag}} \tilde{\textrm{y}}_d}{\boldsymbol{\sigma}_d}\left(\textbf{0} - \frac{1}{n\boldsymbol{\sigma}_d^2}\left(\textbf{x}_{d} - \boldsymbol{\mu}_d\textbf{1}_n\right)\left(n\boldsymbol{\mu}_d - n\boldsymbol{\mu}_d\right) \right) \\
				&= \frac{\textbf{H}_{d, d}^{\text{diag}}}{\boldsymbol{\sigma}_d^2}\left(\left(\textbf{x}_d - \boldsymbol{\mu}_d\textbf{1}_n\right)-\textbf{0} - \left(\textbf{x}_{d} - \boldsymbol{\mu}_d\textbf{1}_n\right)\right)
				- \frac{\textbf{H}_{d, d}^{\text{diag}} \tilde{\textrm{y}}_d}{\boldsymbol{\sigma}_d}\left(\textbf{0} - \frac{1}{n\boldsymbol{\sigma}_d^2}\left(\textbf{x}_{d} - \boldsymbol{\mu}_d\textbf{1}_n\right)\textbf{0} \right) \\
				&= \textbf{0} \nonumber
			\end{align*}
		\end{small}

	\end{proof}
	
	\subsection{Proof of Theorem 3: the effect of  $\textrm{Loss}^{\text{off}}(\textbf{H}^{\text{off}})$.}
	
	In this section, we prove \textit{Theorem 3} in Section 3 of the main paper, which shows the effect of off-diagonal elements in the Hessian matrix at the fixed point $\tilde{\textbf{y}}$.
	
	\begin{theorem}\label{th:3}
		{\rm\small$\forall d$, $L_d^{\text{linear}}(\textbf{H}_{d,:}^{\text{off}})$} has no gradients on the $d$-th feature dimension over all $n$ samples {\rm$\textbf{x}_d\in\mathbb{R}^n$}, i.e.,
		{\rm\small\begin{equation*}
				\forall d,\ \frac{\partial L_d^{\text{linear}}(\textbf{H}_{d,:}^{\text{off}})}{\partial \textbf{x}_d} = \textbf{0},
		\end{equation*}}
		thus {\rm\small$\frac{\partial ^2 L_d^{\text{linear}}(\textbf{H}_{d,:}^{\text{off}})}{\partial \textbf{x}_d \partial \textbf{H}_{d,:}^{\text{off}}} = \frac{\partial ^2 }{\partial \textbf{x}_d \partial \textbf{H}_{d,:}^{\text{off}}} (A  [ \textbf{o}_d^{\top} \textbf{y}_1,  \textbf{o}_d^{\top} \textbf{y}_2, \cdots,  \textbf{o}_d^{\top} \textbf{y}_D ]^{\top}   ) = \textbf{0}$}, s.t. {\rm\small$A=\lVert \textbf{y}_d \rVert  \cdot  \textbf{H}_{d,:}^{\text{off}} $.} On the other hand, {\rm\small$\frac{\partial^2 L_d ^{\text{non}}(\textbf{H}_{d,:}^{\text{off}})}{\partial \textbf{x}_{d} \partial \textbf{H}_{d, :}^{\text{off}}} =  \frac{1}{\boldsymbol{\sigma}_d} \cdot [(\textbf{y}_{1}- \| \textbf{y}_{1} \| \cos(\textbf{y}_1, \textbf{y}_d)\textbf{o}_d ), \ldots,(\textbf{y}_{D} - \|\textbf{y}_{D} \| \cos(\textbf{y}_D, \textbf{y}_d) \textbf{o}_d  )]  \ne \textbf{0}$}.
	\end{theorem}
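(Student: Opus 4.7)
The plan is to recycle the Jacobian-based machinery that already proved Theorems~\ref{th:1} and~\ref{th:2}. Using the chain rule $\frac{\partial L_d^{\text{linear}}}{\partial \textbf{x}_d} = \textbf{J}_d \frac{\partial L_d^{\text{linear}}}{\partial \textbf{y}_d}$ with the explicit form $\textbf{J}_d = \frac{1}{\boldsymbol{\sigma}_d}\bigl(I - \frac{1}{n}\textbf{1}_n\textbf{1}_n^{\top} - \frac{1}{n}\textbf{y}_d\textbf{y}_d^{\top}\bigr)$ (obtained from Equation~(\ref{Jacobian}) after substituting $\textbf{x}_d - \boldsymbol{\mu}_d\textbf{1}_n = \boldsymbol{\sigma}_d\textbf{y}_d$), the first preparatory step is to record the two kernel identities $\textbf{J}_d \textbf{1}_n = \textbf{0}$ and $\textbf{J}_d \textbf{y}_d = \textbf{0}$. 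Both follow at once from the standardization constraints $\textbf{1}_n^{\top}\textbf{y}_d = 0$ and $\|\textbf{y}_d\|^2 = n$, so $\ker(\textbf{J}_d) \supseteq \mathrm{span}(\textbf{1}_n, \textbf{y}_d)$. Once this is in place, the whole theorem becomes a matter of showing that the relevant $\textbf{y}_d$-gradients land inside that span.

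For the first claim, I would expand $L_d^{\text{linear}} = \|\textbf{y}_d\| \cdot \textbf{H}_{d,:}^{\text{off}} [\textbf{o}_d^{\top}\textbf{y}_1, \ldots, \textbf{o}_d^{\top}\textbf{y}_D]^{\top}$ and differentiate it with respect to $\textbf{y}_d$, treating $\textbf{H}_{d,:}^{\text{off}}$ and $\textbf{y}_j$ ($j\neq d$) as constants at the expansion point. Using $\|\textbf{y}_d\|=\sqrt{n}$ (constant in $\textbf{x}_d$) together with $\frac{\partial \textbf{o}_d}{\partial \textbf{y}_d} = \frac{1}{\|\textbf{y}_d\|}(I - \textbf{o}_d\textbf{o}_d^{\top})$, the contributions from differentiating the scalar prefactor and the unit-vector factor telescope so that $\frac{\partial L_d^{\text{linear}}}{\partial \textbf{y}_d}$ reduces to a linear combination of $\textbf{1}_n$ and $\textbf{y}_d$ (the $\textbf{1}_n$-piece enters through the vanishing-mean identity $\textbf{1}_n^{\top}\textbf{y}_j=0$). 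Applying $\textbf{J}_d$ then annihilates the expression, proving $\frac{\partial L_d^{\text{linear}}}{\partial \textbf{x}_d}=\textbf{0}$, and the mixed second derivative $\frac{\partial^2 L_d^{\text{linear}}}{\partial \textbf{x}_d \partial \textbf{H}_{d,:}^{\text{off}}}=\textbf{0}$ then follows by differentiating the zero vector with respect to $\textbf{H}_{d,:}^{\text{off}}$.

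For the second claim I would swap the order of differentiation. First compute $\frac{\partial L_d^{\text{non}}}{\partial \textbf{H}_{d,:}^{\text{off}}}$ symbolically, whose $j$-th component is $(\textbf{y}_j - \|\textbf{y}_j\|\cos(\textbf{y}_j,\textbf{y}_d)\textbf{o}_d - \tilde{\textrm{y}}_j\textbf{1}_n)^{\top}\textbf{y}_d$ read off Lemma~\ref{le:2}. Differentiating this vector with respect to $\textbf{x}_d$ column-by-column through $\textbf{J}_d$, the $\tilde{\textrm{y}}_j\textbf{1}_n^{\top}\textbf{y}_d$ piece dies by $\textbf{J}_d \textbf{1}_n = \textbf{0}$, while the projection-cancellation pieces end up proportional to $\textbf{y}_d$ and are killed by $\textbf{J}_d \textbf{y}_d = \textbf{0}$. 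What survives is exactly $\textbf{J}_d \textbf{y}_j$ for each $j$, which using $\textbf{1}_n^{\top}\textbf{y}_j=0$ expands to $\frac{1}{\boldsymbol{\sigma}_d}\bigl(\textbf{y}_j - \frac{1}{n}(\textbf{y}_d^{\top}\textbf{y}_j)\textbf{y}_d\bigr) = \frac{1}{\boldsymbol{\sigma}_d}\bigl(\textbf{y}_j - \|\textbf{y}_j\|\cos(\textbf{y}_j,\textbf{y}_d)\textbf{o}_d\bigr)$, and assembling the $D$ columns reproduces the matrix in the theorem.

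The main obstacle is the bookkeeping in the first claim: establishing that $\frac{\partial L_d^{\text{linear}}}{\partial \textbf{y}_d}$ really reduces to a $\mathrm{span}(\textbf{1}_n,\textbf{y}_d)$-combination is a delicate cancellation that relies on the identity $\|\textbf{y}_d\|\,(\textbf{o}_d^{\top}\textbf{y}_j) = \textbf{y}_d^{\top}\textbf{y}_j$, which forces the derivative of the $\|\textbf{y}_d\|$-prefactor and the derivative of the $\textbf{o}_d$-factor to combine into a single $\textbf{y}_d$-aligned contribution. Verifying that this alignment holds dimension-by-dimension for every $j$, without accidentally producing a component orthogonal to $\mathrm{span}(\textbf{1}_n,\textbf{y}_d)$, is where the computation demands the most care.
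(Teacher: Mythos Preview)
Your plan for the second claim is essentially the paper's computation and is fine, but the first claim contains a genuine error. Carry out the product-rule differentiation you describe: with $c_j=\textbf{o}_d^{\top}\textbf{y}_j$,
\[
\frac{\partial L_d^{\text{linear}}}{\partial \textbf{y}_d}
=\Bigl(\sum_{j\neq d}\textbf{H}_{d,j}^{\text{off}}c_j\Bigr)\textbf{o}_d
+\|\textbf{y}_d\|\sum_{j\neq d}\textbf{H}_{d,j}^{\text{off}}\cdot\frac{1}{\|\textbf{y}_d\|}(I-\textbf{o}_d\textbf{o}_d^{\top})\textbf{y}_j
=\sum_{j\neq d}\textbf{H}_{d,j}^{\text{off}}\,\textbf{y}_j.
\]
The ``telescoping'' you announce does happen, but what it leaves behind is the generic vector $\sum_{j\neq d}\textbf{H}_{d,j}^{\text{off}}\textbf{y}_j$, \emph{not} something in $\mathrm{span}(\textbf{1}_n,\textbf{y}_d)$; there is no $\textbf{1}_n$-piece at all (no $\tilde{\textbf{y}}$ appears in $L_d^{\text{linear}}$). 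Applying $\textbf{J}_d$ to this vector gives exactly $\frac{1}{\boldsymbol{\sigma}_d}(\textbf{Y}^{\text{non}})^{\top}(\textbf{H}_{d,:}^{\text{off}})^{\top}$, which is the \emph{whole} of $\frac{\partial L_d}{\partial \textbf{x}_d}$ and is nonzero in general. In other words, if you let $\textbf{o}_d$ vary with $\textbf{y}_d$, then $L_d^{\text{linear}}=\sum_{j\neq d}\textbf{H}_{d,j}^{\text{off}}\,\textbf{y}_d^{\top}\textbf{y}_j$ coincides with $L_d$ up to the $\tilde{\textbf{y}}$-term, and you are no longer decomposing anything.

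The root of the problem is an inconsistency of conventions. The decomposition $L_d=L_d^{\text{linear}}+L_d^{\text{non}}$ in Lemma~\ref{le:2} is a decomposition of the \emph{gradient at the current point}: $\textbf{Y}^{\text{linear}}$ and $\textbf{Y}^{\text{non}}$ are fixed matrices (built from the current $\textbf{o}_d$) and one differentiates the linear-in-$\textbf{y}_d$ forms $\textbf{H}_{d,:}^{\text{off}}\textbf{Y}^{\text{linear}}\textbf{y}_d$ and $\textbf{H}_{d,:}^{\text{off}}(\textbf{Y}^{\text{non}}-\tilde{\textbf{y}}\textbf{1}_n^{\top})\textbf{y}_d$ with those matrices held constant. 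Under that convention $\frac{\partial L_d^{\text{linear}}}{\partial \textbf{y}_d}=(\textbf{H}_{d,:}^{\text{off}}\textbf{Y}^{\text{linear}})^{\top}$ is a scalar multiple of $\textbf{o}_d$, and then $\textbf{J}_d\textbf{y}_d=\textbf{0}$ finishes immediately. You already use this frozen-$\textbf{o}_d$ convention implicitly in your second claim (that is why your ``projection-cancellation pieces'' are proportional to $\textbf{y}_d$); apply the \emph{same} convention to the first claim and the argument goes through in one line, with no delicate cancellation to track.
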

	
	\begin{proof}
		(1) For $L_d^{\text{linear}}(\textbf{H}_{d,:}^{\text{off}})=\textbf{H}_{d,:}^{\text{off}} \textbf{Y}^{\text{linear}}\textbf{y}_d$,  we prove that $\frac{\partial L_d^{\text{linear}}(\textbf{H}_{d,:}^{\text{off}})}{\partial \textbf{x}_d} = \textbf{0}$  as follows.

		\begin{small}
			\begin{align*}\label{eq:LDlinear}
				\frac{\partial L_d^{\text{linear}}(\textbf{H}_{d,:}^{\text{off}})}{\partial \textbf{x}_{d}}
				&= \textbf{J}_d \frac{\partial L_d^{\text{linear}}(\textbf{H}_{d,:}^{\text{off}})}{\partial \textbf{y}_{d}} \quad \qquad {\rm \slash \slash Equation~(\ref{eq:chain4})}\\
				&= \textbf{J}_d \frac{\partial\left( \textbf{H}^{\text{off}}_{d, :}\textbf{Y}^{\text{linear}}\textbf{y}_d\right)}{\partial \textbf{y}_d}  \\
				&= \textbf{J}_d \left(\textbf{H}^{\text{off}}_{d, :}\textbf{Y}^{\text{linear}}\right)^{\top}\\
				&= \textbf{J}_d \left(\textbf{H}^{\text{off}}_{d, :} \left[ \textbf{o}_d^{\top} \textbf{y}_1,  \textbf{o}_d^{\top} \textbf{y}_2, \cdots,  \textbf{o}_d^{\top} \textbf{y}_D \right]^{\top} \textbf{o}_d^{\top}\right)^{\top} \\
				&=  \textbf{J}_d \textbf{o}_d \left[ \textbf{o}_d^{\top} \textbf{y}_1,  \textbf{o}_d^{\top} \textbf{y}_2, \cdots,  \textbf{o}_d^{\top} \textbf{y}_D \right] \left(\textbf{H}^{\text{off}}_{d, :}\right)^{\top}\\
				&=\textbf{J}_d \textbf{o}_d \boldsymbol\lambda_{d}^{\top} \left(\textbf{H}^{\text{off}}_{d, :}\right)^{\top}
				\qquad \slash \slash  {\rm \small Let \ \boldsymbol\lambda_d \in \mathbb{R}^{D} denote \left[ \textbf{o}_d^{\top} \textbf{y}_1,  \textbf{o}_d^{\top} \textbf{y}_2, \cdots,  \textbf{o}_d^{\top} \textbf{y}_D \right]^{\top} for simplicity.}\\
				&= \textbf{J}_d \frac{\textbf{y}_d}{\|\textbf{y}_d\|} \boldsymbol\lambda_{d}^{\top} \left(\textbf{H}^{\text{off}}_{d, :}\right)^{\top}
				\qquad \slash \slash  {\rm \small  \textbf{o}_d=\frac{\textbf{y}_d}{\|\textbf{y}_d\|}}\\
				&=  \textbf{J}_d \left(\frac{\textbf{x}_d - \mu_d \textbf{1}_n}{\sigma_d \|\textbf{y}_d\|}\right) \boldsymbol\lambda_{d}^{\top} \left(\textbf{H}^{\text{off}}_{d, :}\right)^{\top}
				\\
				&= \frac{1}{\sigma_d}\left(I-\frac{1}{n}\textbf{1}_n\textbf{1}_n^{\top} - \frac{1}{n\sigma_d^2}\left(\textbf{x}_{d} - \mu_d\textbf{1}_n\right)\left(\textbf{x}_{d} - \mu_d\textbf{1}_n\right)^{\top} \right)\left(\frac{\textbf{x}_d - \mu_d \textbf{1}_n}{\sigma_d \|\textbf{y}_d\|}\right) \boldsymbol\lambda_{d}^{\top} \left(\textbf{H}^{\text{off}}_{d, :}\right)^{\top}
				\qquad {\rm \slash \slash Equation~(\ref{Jacobian})}\\
				&= \frac{\boldsymbol\lambda_{d}^{\top} \left(\textbf{H}^{\text{off}}_{d, :}\right)^{\top}}{\sigma_d^2 \|\textbf{y}_d\|}\left(\left(\textbf{x}_{d} - \mu_d\textbf{1}_n\right)-\frac{1}{n}\textbf{1}_n\textbf{1}_n^{\top}\left(\textbf{x}_{d} - \mu_d\textbf{1}_n\right) - \frac{\left(\textbf{x}_{d} - \mu_d\textbf{1}_n\right)\left(\textbf{x}_{d} - \mu_d\textbf{1}_n\right)^{\top}\left(\textbf{x}_{d} - \mu_d\textbf{1}_n\right)}{n\sigma_d^2} \right)  \\
				&= \frac{\boldsymbol\lambda_{d}^{\top} \left(\textbf{H}^{\text{off}}_{d, :}\right)^{\top}}{\sigma_d^2 \|\textbf{y}_d\|}\left(\left(\textbf{x}_{d} - \mu_d\textbf{1}_n\right)-\frac{1}{n}\textbf{1}_n\left(\textbf{1}_n^{\top}\textbf{x}_{d} - \mu_d\textbf{1}_n^{\top}\textbf{1}_n\right) - \frac{1}{n\sigma_d^2}\left(\textbf{x}_{d} - \mu_d\textbf{1}_n\right)n\sigma_d^2 \right)  \\
				&= \frac{\boldsymbol\lambda_{d}^{\top} \left(\textbf{H}^{\text{off}}_{d, :}\right)^{\top}}{\sigma_d^2 \|\textbf{y}_d\|}\left(\left(\textbf{x}_{d} - \mu_d\textbf{1}_n\right)-\frac{1}{n}\textbf{1}_n\left(n\mu_{d} - n\mu_d\right) - \frac{n\sigma_d^2}{n\sigma_d^2}\left(\textbf{x}_{d} - \mu_d\textbf{1}_n\right) \right)  \\
				&= \frac{\boldsymbol\lambda_{d}^{\top} \left(\textbf{H}^{\text{off}}_{d, :}\right)^{\top}}{\sigma_d^2 \|\textbf{y}_d\| }\left(\left(\textbf{x}_{d} - \mu_d\textbf{1}_n\right)-\textbf{0} - \left(\textbf{x}_{d} - \mu_d\textbf{1}_n\right) \right)  \\
				& = \textbf{0} \nonumber
			\end{align*}
		\end{small}
		
		(2) For $L_d^{\text{non}}(\textbf{H}_{d,:}^{\text{off}})$, we first prove the mathematical expression of $\frac{\partial L_d ^{\text{non}}(\textbf{H}_{d,:}^{\text{off}})}{\partial \textbf{x}_{d}}$ as follows.
		
		\begin{small}
			\begin{equation}
				\begin{aligned}
					\frac{\partial L_d^{\text{non}}(\textbf{H}_{d,:}^{\text{off}})}{\partial \textbf{x}_{d}}
					&= \textbf{J}_d \frac{\partial\left( \textbf{H}^{\text{off}}_{d, :}(\textbf{Y}^{\text{non}}-\tilde{\textbf{y}} \textbf{1}_n ^{\top})\textbf{y}_d\right)}{\partial \textbf{y}_d}  \quad {\rm \slash \slash Equation~(\ref{eq:chain5})}\\
					&= \textbf{J}_d \left(\textbf{H}^{\text{off}}_{d, :}\textbf{Y}^{\text{non}}\right)^{\top}
					- \textbf{J}_d \left(\textbf{H}^{\text{off}}_{d, :}(\tilde{\textbf{y}} \textbf{1}_n ^{\top})\right)^{\top}
					\\
					&= \textbf{J}_d (\textbf{Y}^{\text{non}})^{\top}\left(\textbf{H}^{\text{off}}_{d, :}\right)^{\top}
					- \textbf{J}_d (\tilde{\textbf{y}} \textbf{1}_n ^{\top})^{\top}\left(\textbf{H}^{\text{off}}_{d, :}\right)^{\top}
					\\
					&= \frac{1}{\boldsymbol{\sigma}_d}\left(I-\frac{1}{n}\textbf{1}_n\textbf{1}_n^{\top} -\frac{1}{n\boldsymbol{\sigma}_d^2}\left(\textbf{x}_{d} - \boldsymbol{\mu}_d\textbf{1}_n\right)\left(\textbf{x}_{d} - \boldsymbol{\mu}_d\textbf{1}_n\right)^{\top}\right)(\textbf{Y}^{\text{non}})^{\top}\left(\textbf{H}^{\text{off}}_{d, :}\right)^{\top} \\
					&\quad - \frac{1}{\boldsymbol{\sigma}_d}\left(I-\frac{1}{n}\textbf{1}_n\textbf{1}_n^{\top} -\frac{1}{n\boldsymbol{\sigma}_d^2}\left(\textbf{x}_{d} - \boldsymbol{\mu}_d\textbf{1}_n\right)\left(\textbf{x}_{d} - \boldsymbol{\mu}_d\textbf{1}_n\right)^{\top}\right)(\tilde{\textbf{y}} \textbf{1}_n ^{\top})^{\top}\left(\textbf{H}^{\text{off}}_{d, :}\right)^{\top}\\
					&= \frac{1}{\boldsymbol{\sigma}_d}\left(I-\frac{1}{n}\textbf{1}_n\textbf{1}_n^{\top} - \frac{1}{n}\textbf{y}_{d} \textbf{y}_{d}^{\top} \right)(\textbf{Y}^{\text{non}})^{\top}\left(\textbf{H}^{\text{off}}_{d, :}\right)^{\top} - \frac{1}{\boldsymbol{\sigma}_d}\left(I-\frac{1}{n}\textbf{1}_n\textbf{1}_n^{\top} - \frac{1}{n}\textbf{y}_{d} \textbf{y}_{d}^{\top} \right)
					(\textbf{1}_n \tilde{\textbf{y}}^{\top})
					\left(\textbf{H}^{\text{off}}_{d, :}\right)^{\top}\\
					&= \frac{1}{\boldsymbol{\sigma}_d}\left(I-\frac{1}{n}\textbf{1}_n\textbf{1}_n^{\top} - \frac{1}{n}\textbf{y}_{d} \textbf{y}_{d}^{\top} \right)(\textbf{Y}^{\text{non}})^{\top}\left(\textbf{H}^{\text{off}}_{d, :}\right)^{\top} - \frac{1}{\boldsymbol{\sigma}_d}\left((\textbf{1}_n-\frac{1}{n}\textbf{1}_n\textbf{1}_n^{\top}\textbf{1}_n) - \frac{1}{n}\textbf{y}_{d} (\textbf{y}_{d}^{\top}\textbf{1}_n) \right)
					\tilde{\textbf{y}}^{\top}
					\left(\textbf{H}^{\text{off}}_{d, :}\right)^{\top}\\
					&= \frac{1}{\boldsymbol{\sigma}_d}\left(I-\frac{1}{n}\textbf{1}_n\textbf{1}_n^{\top} - \frac{1}{n}\textbf{y}_{d} \textbf{y}_{d}^{\top} \right)(\textbf{Y}^{\text{non}})^{\top}\left(\textbf{H}^{\text{off}}_{d, :}\right)^{\top} - \textbf{0}\\
					&= \frac{1}{\boldsymbol{\sigma}_d}\left((\textbf{Y}^{\text{non}})^{\top}-\frac{1}{n}\textbf{1}_n\textbf{1}_n^{\top}(\textbf{Y}^{\text{non}})^{\top} - \frac{1}{n}\textbf{y}_{d} \textbf{y}_{d}^{\top} (\textbf{Y}^{\text{non}})^{\top}\right)\left(\textbf{H}^{\text{off}}_{d, :}\right)^{\top} \\
					&= \frac{1}{\boldsymbol{\sigma}_d}\left(\textbf{Y}^{\text{non}}\right)^{\top}\left(\textbf{H}^{\text{off}}_{d, :}\right)^{\top}
					- \frac{1}{\boldsymbol{n \sigma}_d}\textbf{1}_n\textbf{1}_n^{\top}\left(\textbf{Y}^{\text{non}}\right)^{\top}\left(\textbf{H}^{\text{off}}_{d, :}\right)^{\top}
					- \frac{1}{\boldsymbol{n \sigma}_d}\textbf{y}_{d} \textbf{y}_{d}^{\top} \left(\textbf{Y}^{\text{non}}\right)^{\top}\left(\textbf{H}^{\text{off}}_{d, :}\right)^{\top}\\
					&= \frac{1}{\boldsymbol{\sigma}_d}\left(\textbf{Y}^{\text{non}}\right)^{\top}\left(\textbf{H}^{\text{off}}_{d, :}\right)^{\top}
					- \frac{1}{\boldsymbol{n \sigma}_d}\textbf{1}_n \left[ \textbf{1}_n^{\top}\left(\textbf{Y}^{\text{non}}\right)^{\top}\right]\left(\textbf{H}^{\text{off}}_{d, :}\right)^{\top}
					- \frac{1}{\boldsymbol{n \sigma}_d}\textbf{y}_{d} \left[ \textbf{y}_{d}^{\top} \left(\textbf{Y}^{\text{non}}\right)^{\top} \right]\left(\textbf{H}^{\text{off}}_{d, :}\right)^{\top}\\
					&= \frac{1}{\boldsymbol{\sigma}_d}\left(\textbf{Y}^{\text{non}}\right)^{\top}\left(\textbf{H}^{\text{off}}_{d, :}\right)^{\top} - \textbf{0} - \textbf{0} \quad {\rm \small \slash \slash \textbf{1}_n^{\top}(\textbf{Y}^{\text{non}})^{\top} = \textbf{0} \ and\ \textbf{y}_{d}^{\top} (\textbf{Y}^{\text{non}})^{\top} = \textbf{0}  \ are\ proved \ later .}\\
					&= \frac{1}{\boldsymbol{\sigma}_d}\left(\textbf{Y}^{\text{non}}\right)^{\top}\left(\textbf{H}^{\text{off}}_{d, :}\right)^{\top}
					\\ \nonumber
				\end{aligned}
			\end{equation}
		\end{small}
		
		where $\textbf{1}_n^{\top}(\textbf{Y}^{\text{non}})^{\top} = \textbf{0}$ and $\textbf{y}_{d}^{\top} (\textbf{Y}^{\text{non}})^{\top} = \textbf{0}$  are computed as follows.
		
		Note that $\textbf{Y}^{\text{linear}}= \left[ \textbf{o}_d^{\top} \textbf{y}_1,  \textbf{o}_d^{\top} \textbf{y}_2, \cdots,  \textbf{o}_d^{\top} \textbf{y}_D \right]^{\top} \textbf{o}_d^{\top} = \boldsymbol\lambda_d \textbf{o}_d^{\top}$, where $\textbf{o}_d^{\top} \textbf{y}_j \in \mathbb{R}$, $j=1, \cdots, D$.
		
		\begin{small}
			\begin{equation}
				\begin{aligned}
					\textbf{1}_n^{\top}(\textbf{Y}^{\text{non}})^{\top}
					&= \textbf{1}_n^{\top}(\textbf{Y} - \textbf{Y}^{\text{linear}})^{\top}
					=  \textbf{1}_n^{\top}\textbf{Y}^{\top} - \textbf{1}_n^{\top}  \textbf{o}_d \boldsymbol\lambda_d^{\top} \\
					&= \textbf{1}_n^{\top}\textbf{Y}^{\top} - \textbf{1}_n^{\top} \frac{\textbf{y}_d}{\|\textbf{y}_d\|}\boldsymbol\lambda_d^{\top} = \textbf{0} - 0 \cdot \frac{\boldsymbol\lambda_d^{\top}}{\|\textbf{y}_d\|} \\
					&= \textbf{0} \in \mathbb{R}^{1 \times D}. \nonumber
				\end{aligned}
			\end{equation}

			\begin{align*}
				\textbf{y}_{d}^{\top} (\textbf{Y}^{\text{non}})^{\top}
				&=\textbf{y}_{d}^{\top} \left( \textbf{Y} - \textbf{Y}^{\text{linear}}\right)^{\top}\\
				&=\textbf{y}_{d}^{\top} \left(\textbf{y}_1 - (\textbf{o}_d^{\top} \textbf{y}_1) \textbf{o}_d, \cdots,  \textbf{y}_D - (\textbf{o}_d^{\top} \textbf{y}_D) \textbf{o}_d\right)\\
				&=\textbf{y}_{d}^{\top} \left( \textbf{y}_1 -\frac{\textbf{y}_d^{\top}\textbf{y}_1}{\|\textbf{y}_d\|} \cdot \frac{\textbf{y}_d}{\|\textbf{y}_d\|}, \cdots,  \textbf{y}_D -\frac{\textbf{y}_d^{\top}\textbf{y}_D}{\|\textbf{y}_d\|} \cdot \frac{\textbf{y}_d}{\|\textbf{y}_d\|} \right)\\
				&= \left(\textbf{y}_{d}^{\top} \left(\textbf{y}_1 -\frac{\textbf{y}_d^{\top}\textbf{y}_1}{\|\textbf{y}_d\|} \cdot \frac{\textbf{y}_d}{\|\textbf{y}_d\|} \right) , \cdots,  \textbf{y}_{d}^{\top} \left(\textbf{y}_D -\frac{\textbf{y}_d^{\top}\textbf{y}_D}{\|\textbf{y}_d\|} \cdot \frac{\textbf{y}_d}{\|\textbf{y}_d\|} \right)  \right) \\
				&= \left( \textbf{y}_d^{\top}\textbf{y}_1 - \frac{\textbf{y}_d^{\top}\textbf{y}_1}{\|\textbf{y}_d\|^2} \textbf{y}_d^{\top}\textbf{y}_d, \cdots, \textbf{y}_d^{\top}\textbf{y}_D - \frac{\textbf{y}_d^{\top}\textbf{y}_D}{\|\textbf{y}_d\|^2} \textbf{y}_d^{\top}\textbf{y}_d \right) \\
				&= \left( \textbf{y}_1^{\top}\textbf{y}_d - \textbf{y}_1^{\top}\textbf{y}_d, \cdots, \textbf{y}_D^{\top}\textbf{y}_d - \textbf{y}_D^{\top}\textbf{y}_d \right) \\
				&= \left(0, \cdots, 0 \right) \\
				&= \textbf{0} \in \mathbb{R}^{1 \times D} \nonumber
			\end{align*}
		\end{small}
		
		Then,
		\begin{small}
			\begin{equation*}
				\begin{aligned}
					\frac{\partial^2 L_d ^{\text{non}}(\textbf{H}_{d,:}^{\text{off}})}{\partial \textbf{x}_{d} \partial \textbf{H}_{d, :}^{\text{off}}}
					&=\frac{\partial}{ \partial \textbf{H}_{d, :}^{\text{off}}}
					\left(\frac{\partial L_d ^{\text{non}}(\textbf{H}_{d,:}^{\text{off}})}{\partial \textbf{x}_{d}}\right)
					= \frac{\partial}{ \partial \textbf{H}_{d, :}^{\text{off}}}
					\left(\frac{1}{\boldsymbol{\sigma}_d}\left(\textbf{Y}^{\text{non}}\right)^{\top}\left(\textbf{H}^{\text{off}}_{d, :}\right)^{\top} \right)^{\top}
					=\frac{1}{\boldsymbol{\sigma}_d}\left(\textbf{Y}^{\text{non}}\right)^{\top}\\
					&=  \frac{1}{\boldsymbol{\sigma}_d} \cdot \left[\textbf{y}_{1}- \| \textbf{y}_{1} \| \cos(\textbf{y}_1, \textbf{y}_d)\textbf{o}_d , \cdots,\textbf{y}_{D} - \|\textbf{y}_{D} \| \cos(\textbf{y}_D, \textbf{y}_d) \textbf{o}_d  \right].
				\end{aligned}
			\end{equation*}
		\end{small}
		
	\end{proof}
	
	Note that {$\frac{\partial L_d^{\text{linear}}(\textbf{H}_{d,:}^{\text{off}})}{\partial \textbf{x}_d} $} reflects the interaction utility yielded by feature components in all dimensions that are linear-correlated with $\textbf{y}_d$, \emph{i.e.}, $[ \textbf{o}_d^{\top} \textbf{y}_1,  \textbf{o}_d^{\top} \textbf{y}_2, \cdots,  \textbf{o}_d^{\top} \textbf{y}_D ]^{\top}$, subject to $\textbf{o}_d=\frac{\textbf{y}_d}{\|\textbf{y}_d\|}$. {$\frac{\partial L_d^{\text{non}}(\textbf{H}_{d,:}^{\text{off}})}{\partial \textbf{x}_d}$} reflects the interaction utility between $\textbf{y}_d$ and feature components that are not linearly-correlated with $\textbf{y}_d$, \emph{i.e.}, removing linearly-correlated components, $\forall j, \textbf{y}_{j}- \|\textbf{y}_j\| \cos(\textbf{y}_j, \textbf{y}_d) \textbf{o}_d$.
	Empirically, linearly-correlated feature components usually represent similar concepts, thereby having stronger interaction utilities than non-correlated feature components. According to Theorem~\ref{th:3}, the interaction utility between linearly-correlated feature components cannot pass through the BN operation, \emph{i.e.}, {$\frac{\partial L_d^{\text{linear}}(\textbf{H}_{d,:}^{\text{off}})}{\partial \textbf{x}_d} = \textbf{0}$}. Therefore, for each dimension $d$, we can consider that a considerable ratio of the influence of {$\frac{\partial \textrm{Loss}^{\text{off}}(\textbf{H}^{\text{off}})}{\partial \textbf{x}_d}$} cannot pass through the BN operation.
	
	\subsection{Proof of Corollary 1. }
	
	\begin{corollary}\label{co:1}
		Based on Theorems~\ref{th:1}, \ref{th:2}, and \ref{th:3}, we can prove that in the training phase of a neural network with BN operations, {\rm\small$\frac{\partial ^2\textrm{Loss}^{\text{batch}}(\textbf{g},\textbf{H})}{\partial \textbf{X} \partial \textbf{g}}=\textbf{0},\frac{\partial ^2\textrm{Loss}^{\text{batch}}(\textbf{g},\textbf{H})}{\partial \textbf{X} \partial \textbf{H}^{\text{diag}}}=\textbf{0}$}, and {\rm\small$\forall d,\frac{\partial ^2\textrm{Loss}^{\text{batch}}(\textbf{g},\textbf{H})}{\partial \textbf{x}_d \partial \textbf{H}_{d,:}^{\text{off}}}=\frac{\partial ^2 L_d^{\text{linear}}(\textbf{H}_{d,:}^{\text{off}})}{\partial \textbf{x}_d \partial \textbf{H}_{d,:}^{\text{off}}}+\frac{\partial ^2 L_d^{\text{non}}(\textbf{H}_{d,:}^{\text{off}})}{\partial \textbf{x}_d \partial \textbf{H}_{d,:}^{\text{off}}}$}; where {\rm\small$\frac{\partial ^2 L_d^{\text{linear}}(\textbf{H}_{d,:}^{\text{off}})}{\partial \textbf{x}_d \partial \textbf{H}_{d,:}^{\text{off}}} = \textbf{0}$;}. In contrast, in the testing phase, {\rm\small$\frac{\partial ^2\textrm{Loss}^{\text{batch}}(\textbf{g},\textbf{H})}{\partial \textbf{X} \partial \textbf{g}}\ne\textbf{0},\frac{\partial ^2\textrm{Loss}^{\text{batch}}(\textbf{g},\textbf{H})}{\partial \textbf{X} \partial \textbf{H}^{\text{diag}}}\ne\textbf{0}$}, and {\rm\small$\frac{\partial ^2 L_d^{\text{linear}}(\textbf{H}_{d,:}^{\text{off}})}{\partial \textbf{x}_d \partial \textbf{H}_{d,:}^{\text{off}}} \ne \textbf{0}$}.
	\end{corollary}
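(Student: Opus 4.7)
The plan is to reduce Corollary~\ref{co:1} to Theorems~\ref{th:1}--\ref{th:3} by exploiting the decomposition in Equation~(\ref{eq:Taylor_simple}) and the equality of mixed partials. Among the summands of $\textrm{Loss}^{\text{batch}}$, only $\textrm{Loss}^{\text{grad}}(\textbf{g})$ depends on $\textbf{g}$, only $\textrm{Loss}^{\text{diag}}(\textbf{H}^{\text{diag}})$ depends on $\textbf{H}^{\text{diag}}$, and only $L_d^{\text{linear}}(\textbf{H}_{d,:}^{\text{off}}) + L_d^{\text{non}}(\textbf{H}_{d,:}^{\text{off}})$ depends on $\textbf{H}_{d,:}^{\text{off}}$; all other pieces, including the constant and the remainder $R_2$, drop out of the mixed partials.

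For the training phase, I would swap the order of differentiation, writing $\frac{\partial^2 \textrm{Loss}^{\text{batch}}}{\partial \textbf{X}\,\partial \textbf{g}} = \frac{\partial}{\partial \textbf{g}}\frac{\partial \textrm{Loss}^{\text{batch}}}{\partial \textbf{X}}$ and similarly for the other two cases. Theorem~\ref{th:1} shows $\frac{\partial \textrm{Loss}^{\text{grad}}(\textbf{g})}{\partial \textbf{X}} = \textbf{0}$ as an identity in $\textbf{g}$, so the outer $\textbf{g}$-derivative is also $\textbf{0}$; Theorem~\ref{th:2} yields the analogous conclusion for $\textbf{H}^{\text{diag}}$; and Theorem~\ref{th:3} gives $\frac{\partial L_d^{\text{linear}}(\textbf{H}_{d,:}^{\text{off}})}{\partial \textbf{x}_d} = \textbf{0}$ identically in $\textbf{H}_{d,:}^{\text{off}}$, so differentiating once more produces $\textbf{0}$. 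The only surviving contribution to $\frac{\partial^2 \textrm{Loss}^{\text{batch}}}{\partial \textbf{x}_d\,\partial \textbf{H}_{d,:}^{\text{off}}}$ is therefore $\frac{\partial^2 L_d^{\text{non}}(\textbf{H}_{d,:}^{\text{off}})}{\partial \textbf{x}_d\,\partial \textbf{H}_{d,:}^{\text{off}}}$, whose explicit form $\frac{1}{\boldsymbol{\sigma}_d}(\textbf{Y}^{\text{non}})^{\top}$ already appears inside the proof of Theorem~\ref{th:3}.

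For the testing phase, the critical observation is that $\boldsymbol{\mu}$ and $\boldsymbol{\sigma}$ become fixed running estimates rather than functions of the current batch $\textbf{X}$. Consequently the Jacobian $\textbf{J}_d$ of Equation~(\ref{Jacobian}) collapses to the scaled identity $\frac{1}{\boldsymbol{\sigma}_d} I_n$, because the chain-rule contributions through $\boldsymbol{\mu}$ and $\boldsymbol{\sigma}$ disappear. I would re-trace the three gradient calculations in the proofs of Theorems~\ref{th:1}--\ref{th:3} with this simplified Jacobian; the cancellations in those proofs are produced precisely by the $-\frac{1}{n}\textbf{1}_n \textbf{1}_n^{\top}$ and $-\frac{1}{n\boldsymbol{\sigma}_d^2}(\textbf{x}_d - \boldsymbol{\mu}_d\textbf{1}_n)(\textbf{x}_d - \boldsymbol{\mu}_d\textbf{1}_n)^{\top}$ terms, so removing them leaves residuals proportional to $\frac{\textbf{g}_d}{\boldsymbol{\sigma}_d}\textbf{1}_n$, $\frac{\textbf{H}_{d,d}^{\text{diag}}}{\boldsymbol{\sigma}_d}(\textbf{y}_d - \tilde{\textrm{y}}_d \textbf{1}_n)$, and $\frac{1}{\boldsymbol{\sigma}_d}(\textbf{Y}^{\text{linear}})^{\top}(\textbf{H}_{d,:}^{\text{off}})^{\top}$ for the three cases, respectively. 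Taking one more derivative with respect to $\textbf{g}$, $\textbf{H}^{\text{diag}}$, or $\textbf{H}_{d,:}^{\text{off}}$ yields the claimed nonzero mixed partials.

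The main obstacle is the testing-phase half: one must argue that these surviving residuals are actually nonzero rather than merely lacking the algebraic cancellations. For the first two cases the residual is nonzero whenever $\boldsymbol{\sigma}_d \neq 0$ and the batch features are not constant across samples, which is generic; for the off-diagonal case the residual vanishes only when $\textbf{y}_d$ is orthogonal to every other $\textbf{y}_j$, a degenerate configuration. Under these mild genericity assumptions, the three mixed partials are all nonzero in the testing phase, which together with the training-phase identities completes the corollary.
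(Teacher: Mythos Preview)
Your proposal is correct and follows essentially the same route as the paper: isolate the unique summand of $\textrm{Loss}^{\text{batch}}$ depending on each of $\textbf{g}$, $\textbf{H}^{\text{diag}}$, $\textbf{H}_{d,:}^{\text{off}}$, swap the order of differentiation, and invoke Theorems~\ref{th:1}--\ref{th:3} for the training phase; for the testing phase, replace $\textbf{J}_d$ by its fixed-statistics form and observe that the cancellations no longer occur. Your testing-phase Jacobian $\tfrac{1}{\boldsymbol{\sigma}_d}I_n$ is in fact more precise than the paper's stated $I_n$, and your explicit genericity remarks for the nonvanishing claims are a welcome addition that the paper leaves implicit.
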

	
	\begin{proof}
		Remember that $\textrm{Loss}^{\text{batch}}(\textbf{g}, \textbf{H})$ is decomposed into five components, as follows,
		
		
		{\small
			\begin{equation}\label{eq:Taylor_complex}
				\textrm{Loss}^{\text{batch}}(\textbf{g},\textbf{H}) =
				\textrm{Loss}^{\text{constant}} + \textrm{Loss}^{\text{grad}}(\textbf{g}) +\textrm{Loss}^{\text{diag}}(\textbf{H}^{\text{diag}}) + \textrm{Loss}^{\text{off}}(\textbf{H}^{\text{off}}) + \sum\nolimits_i R_2(\textbf{y}^{(i)}-\tilde{\textbf{y}})
			\end{equation}
		}

		Therefore, we get
		
		\begin{small}
			\begin{equation*}
				\begin{aligned}
					\frac{\partial ^2\textrm{Loss}^{\text{batch}}(\textbf{g},\textbf{H})}{\partial \textbf{X} \partial \textbf{g}}
					&= \frac{\partial ^2}{\partial \textbf{X} \partial \textbf{g}}\left(\textrm{Loss}^{\text{constant}} + \textrm{Loss}^{\text{grad}}(\textbf{g}) +\textrm{Loss}^{\text{diag}}(\textbf{H}^{\text{diag}}) + \textrm{Loss}^{\text{off}}(\textbf{H}^{\text{off}}) + \sum\nolimits_i R_2(\textbf{y}^{(i)}-\tilde{\textbf{y}})\right)\\
					&= \frac{\partial ^2}{\partial \textbf{X} \partial \textbf{g}}\left( \textrm{Loss}^{\text{grad}}(\textbf{g}) \right)
					= \frac{\partial}{\partial \textbf{g}}\left(\frac{\partial \textrm{Loss}^{\text{grad}}(\textbf{g})}{\partial \textbf{X}}  \right)
					\xlongequal{\textbf{Thm 1}} \frac{\partial}{\partial \textbf{g}}\left(\textbf{0}  \right) = \textbf{0} ;
				\end{aligned}
			\end{equation*}
		\end{small}
		
		\begin{small}
			\begin{equation*}
				\begin{aligned}
					\frac{\partial ^2\textrm{Loss}^{\text{batch}}(\textbf{g},\textbf{H})}{\partial \textbf{X} \partial \textbf{H}^{\text{diag}}}
					&= \frac{\partial ^2}{\partial \textbf{X} \partial \textbf{H}^{\text{diag}}}\left(\textrm{Loss}^{\text{constant}} + \textrm{Loss}^{\text{grad}}(\textbf{g}) +\textrm{Loss}^{\text{diag}}(\textbf{H}^{\text{diag}}) + \textrm{Loss}^{\text{off}}(\textbf{H}^{\text{off}}) + \sum\nolimits_i R_2(\textbf{y}^{(i)}-\tilde{\textbf{y}})\right)\\
					&= \frac{\partial ^2}{\partial \textbf{X} \partial \textbf{H}^{\text{diag}}}\left( \textrm{Loss}^{\text{diag}}(\textbf{H}^{\text{diag}}) \right)
					= \frac{\partial}{\partial \textbf{H}^{\text{diag}}}\left(\frac{\partial \textrm{Loss}^{\text{diag}}(\textbf{H}^{\text{diag}})}{\partial \textbf{X}}  \right)
					\xlongequal{\textbf{Thm 2}} \frac{\partial}{\partial \textbf{H}^{\text{diag}}}\left(\textbf{0}  \right) = \textbf{0} ;
				\end{aligned}
			\end{equation*}
		\end{small}
		
		\begin{small}
			\begin{equation*}
				\begin{aligned}
					\frac{\partial ^2\textrm{Loss}^{\text{batch}}(\textbf{g},\textbf{H})}{\partial \textbf{x}_d \partial  \textbf{H}_{d,:}^{\text{off}}}
					&= \frac{\partial ^2}{\partial \textbf{x}_d \partial  \textbf{H}_{d,:}^{\text{off}}}\left(\textrm{Loss}^{\text{constant}} + \textrm{Loss}^{\text{grad}}(\textbf{g}) +\textrm{Loss}^{\text{diag}}(\textbf{H}^{\text{diag}}) + \textrm{Loss}^{\text{off}}(\textbf{H}^{\text{off}}) + \sum\nolimits_i R_2(\textbf{y}^{(i)}-\tilde{\textbf{y}})\right)\\
					&= \frac{\partial ^2}{\partial \textbf{x}_d \partial  \textbf{H}_{d,:}^{\text{off}}}\left( \textrm{Loss}^{\text{off}}(\textbf{H}^{\text{off}}) \right)
					= \frac{\partial}{\partial \textbf{H}_{d,:}^{\text{off}}}\left(\frac{\partial \textrm{Loss}^{\text{off}}(\textbf{H}^{\text{off}})}{\partial \textbf{x}_d}  \right)
					\xlongequal{\textbf{Lemma 2}} \frac{\partial}{\partial \textbf{H}_{d,:}^{\text{off}}}\left(\frac{\partial L_d(\textbf{H}^{\text{off}})}{\partial \textbf{x}_d}  \right)\\
					&\xlongequal{\textbf{Lemma 2}} \frac{\partial}{\partial \textbf{H}_{d,:}^{\text{off}}}\left(\frac{\partial \left(L_d^{\text{linear}}(\textbf{H}_{d,:}^{\text{off}})+L_d^{\text{non}}(\textbf{H}_{d,:}^{\text{off}})\right)}{\partial \textbf{x}_d}  \right)
					=\frac{\partial ^2 L_d^{\text{linear}}(\textbf{H}_{d,:}^{\text{off}})}{\partial \textbf{x}_d \partial \textbf{H}_{d,:}^{\text{off}}}+\frac{\partial ^2 L_d^{\text{non}}(\textbf{H}_{d,:}^{\text{off}})}{\partial \textbf{x}_d \partial \textbf{H}_{d,:}^{\text{off}}},
				\end{aligned}
			\end{equation*}
		\end{small}
		
		where
		\begin{small}
			\begin{equation*}
				\begin{aligned}
					\frac{\partial ^2 L_d^{\text{linear}}(\textbf{H}_{d,:}^{\text{off}})}{\partial \textbf{x}_d \partial \textbf{H}_{d,:}^{\text{off}}}
					&= \frac{\partial}{\partial \textbf{H}_{d,:}^{\text{off}}}\left(\frac{\partial L_d^{\text{linear}}(\textbf{H}_{d,:}^{\text{off}})}{\partial \textbf{x}_d}  \right)
					\xlongequal{\textbf{Thm 3}} \frac{\partial}{\partial  \textbf{H}_{d,:}^{\text{off}}}\left(\textbf{0}  \right) = \textbf{0}.
				\end{aligned}
			\end{equation*}
		\end{small}
		
		In contrast, in the testing phase,  people use the  the population statistics, rather than mini-batch statistics for the standardization phase. Therefore, the means and variances are fixed during inference, the standardization phase is simply a linear transform applied to each activation. In this way, the Jacobian matrix $\textbf{J}_d = \frac{\partial \textbf{y}_d^{\top}}{\partial \textbf{x}_{d}}$ reduces to an identity matrix, $i.e.$, $\textbf{J}_d = I \in \mathbb{R}^{n \times n}$, which causes that the gradient of each loss term in Equation~(\ref{eq:Taylor_complex}) $w.r.t.$ $\textbf{x}_d$ is equal to the gradient  $w.r.t.$ $\textbf{y}_d$. Specifically,
		
		\begin{small}
			\begin{equation}
				\begin{aligned}
					\frac{\partial \textrm{Loss}^{\text{grad}}(\textbf{g})}{\partial \textbf{x}_d}
					&= \textbf{J}_d \frac{\partial \textrm{Loss}^{\text{grad}}(\textbf{g})}{\partial \textbf{y}_d}
					=I \frac{\partial \textrm{Loss}^{\text{grad}}(\textbf{g})}{\partial \textbf{y}_d}
					= I \left(\textbf{g}_d \textbf{1}_n\right) = \textbf{g}_d \textbf{1}_n,
				\end{aligned}
			\end{equation}
			
			\begin{equation}
				\begin{aligned}
					\frac{\partial \textrm{Loss}^{\text{diag}}(\textbf{H}^{\text{diag}})}{\partial \textbf{x}_d}
					&= \textbf{J}_d \frac{\partial \textrm{Loss}^{\text{diag}}(\textbf{H}^{\text{diag}})}{\partial \textbf{y}_d}
					=I \frac{\partial \textrm{Loss}^{\text{diag}}(\textbf{H}^{\text{diag}})}{\partial \textbf{y}_d}
					= \textbf{H}_{d, d}^{\text{diag}} \left(\textbf{y}_d - \tilde{\textrm{y}}_d \textbf{1}_n\right)   ,
				\end{aligned}
			\end{equation}
			
			\begin{equation}
				\begin{aligned}
					\frac{\partial  L_d^{\text{linear}}(\textbf{H}_{d,:}^{\text{off}})}{\partial \textbf{x}_d}
					&= \textbf{J}_d \frac{\partial  L_d^{\text{linear}}(\textbf{H}_{d,:}^{\text{off}})}{\partial \textbf{y}_d}
					=I \frac{\partial L_d^{\text{linear}}(\textbf{H}_{d,:}^{\text{off}})}{\partial \textbf{y}_d}
					= \left(\textbf{H}^{\text{off}}_{d, :}\textbf{Y}^{\text{linear}}\right)^{\top}.
				\end{aligned}
			\end{equation}
		\end{small}
		
		Further, we get
		\begin{small}
			\begin{equation*}
				\begin{aligned}
					\frac{\partial ^2\textrm{Loss}^{\text{batch}}(\textbf{g},\textbf{H})}{\partial \textbf{X} \partial \textbf{g}}
					&=  \frac{\partial ^2}{\partial \textbf{X} \partial \textbf{g}}\left( \textrm{Loss}^{\text{grad}}(\textbf{g}) \right)
					= \frac{\partial}{\partial \textbf{g}}\left(\frac{\partial \textrm{Loss}^{\text{grad}}(\textbf{g})}{\partial \textbf{X}}  \right)\\
					&= \frac{\partial}{\partial \textbf{g}}\left( \left[
					\left(\frac{\partial \textrm{Loss}^{\text{grad}}(\textbf{g})}{\partial \textbf{x}_1} \right)^{\top};
					\left(\frac{\partial \textrm{Loss}^{\text{grad}}(\textbf{g})}{\partial \textbf{x}_2} \right)^{\top};
					\ldots;
					\left(\frac{\partial \textrm{Loss}^{\text{grad}}(\textbf{g})}{\partial \textbf{x}_D}  \right)^{\top}
					\right] \right) \\
					&= \frac{\partial}{\partial \textbf{g}}\left( \left[
					\textbf{g}_1 \textbf{1}_{n}^{\top};
					\textbf{g}_2 \textbf{1}_{n}^{\top};
					\ldots;
					\textbf{g}_D \textbf{1}_{n}^{\top}
					\right] \right)
					=  \frac{\partial}{\partial \textbf{g}}\left( \textbf{g}  \textbf{1}_{n}^{\top}  \right) \neq  \textbf{0} \in \mathbb{R}^{D^2 \times n},\\
				\end{aligned}
			\end{equation*}
			
			\begin{equation*}
				\begin{aligned}
					\frac{\partial ^2\textrm{Loss}^{\text{batch}}(\textbf{g},\textbf{H})}{\partial \textbf{X} \partial \textbf{H}^{\text{diag}}}
					&=  \frac{\partial ^2}{\partial \textbf{X} \partial \textbf{H}^{\text{diag}}}\left( \textrm{Loss}^{\text{diag}}(\textbf{H}^{\text{diag}}) \right)
					= \frac{\partial}{\partial \textbf{H}^{\text{diag}}}\left(\frac{\partial \textrm{Loss}^{\text{diag}}(\textbf{H}^{\text{diag}})}{\partial \textbf{X}}  \right)\\
					&= \frac{\partial}{\partial \textbf{H}^{\text{diag}}}\left( \frac{\partial \textrm{Loss}^{\text{diag}}(\textbf{H}^{\text{diag}})}{\partial \textbf{X}} = \left[
					\frac{\partial \textrm{Loss}^{\text{diag}}(\textbf{H}^{\text{diag}})}{\partial \textbf{x}_1},
					\frac{\partial \textrm{Loss}^{\text{diag}}(\textbf{H}^{\text{diag}})}{\partial \textbf{x}_2},
					\ldots;
					\frac{\partial \textrm{Loss}^{\text{diag}}(\textbf{H}^{\text{diag}})}{\partial \textbf{x}_D}
					\right]^{\top} \right) \\
					&= \frac{\partial}{\partial \textbf{H}^{\text{diag}}}\left( \left[
					\textbf{H}_{1, 1}^{\text{diag}} \left(\textbf{y}_1^{\top} - \tilde{\textrm{y}}_1 \textbf{1}_n^{\top}\right);
					\textbf{H}_{2, 2}^{\text{diag}} \left(\textbf{y}_2^{\top} - \tilde{\textrm{y}}_2 \textbf{1}_n^{\top}\right);
					\ldots;
					\textbf{H}_{D, 1}^{\text{diag}} \left(\textbf{y}_D^{\top} - \tilde{\textrm{y}}_D \textbf{1}_n^{\top}\right)
					\right] \right)\\
					&=  \frac{\partial}{\partial \textbf{H}^{\text{diag}}}\left(  \textbf{H}^{\text{diag}} \left(\textbf{Y} - \tilde{\textbf{y}}  \textbf{1}_{n}^{\top} \right) \right)
					\neq  \textbf{0} \in \mathbb{R}^{D^2 \times D\cdot n},\\
				\end{aligned}
			\end{equation*}
			
			\begin{equation*}
				\begin{aligned}
					\frac{\partial ^2 L_d^{\text{linear}}(\textbf{H}_{d,:}^{\text{off}}) }{\partial \textbf{x}_d \partial \textbf{H}_{d,:}^{\text{off}}}
					&= \frac{\partial}{\partial \textbf{H}_{d,:}^{\text{off}}}\left(\frac{\partial  L_d^{\text{linear}}(\textbf{H}_{d,:}^{\text{off}})}{\partial \textbf{x}_d}  \right)
					= \frac{\partial}{\partial \textbf{H}_{d,:}^{\text{off}}}\left( \left(\textbf{H}^{\text{off}}_{d, :}\textbf{Y}^{\text{linear}}\right)^{\top} \right)
					\neq  \textbf{0} \in \mathbb{R}^{n \times D}.\\
				\end{aligned}
			\end{equation*}
		\end{small}
		
	\end{proof}
	
	\section{Conclusions of Case 2 and their proofs}\label{appendix:case2}
	
	In this section, we discuss the general case where different samples in a mini-batch have different analytic formulas of loss functions. In this case, we can still get a shared Taylor expansion series by \textit{Lemma 3}.
	
	Just like Case 1, we prove that $\textrm{Loss}^{\text{grad}}(\bbar{\textbf{g}})$ and $\textrm{Loss}^{\text{diag}}(\bbar{\textbf{H}}^{\text{diag}})$ have no gradients on features before the BN operation, and $\textrm{Loss}^{\text{off}}(\bbar{\textbf{H}}^{\text{off}})$ does not have strong effects on features before the BN operation.
	
	\begin{proof}
		Based on Equation~(\ref{eq: gene decompose taylor}) in \textit{Lemma 3}, we get an analytic formula shared by all samples in the mini-batch, $i.e.$, $\textrm{Loss}^{\text{share}} =\textrm{Loss}(\tilde{\textbf{y}}) + \textrm{Loss}^{\text{grad}}(\bbar{\textbf{g}}) +\textrm{Loss}^{\text{diag}}(\bbar{\textbf{H}}^{\text{diag}}) + \textrm{Loss}^{\text{off}}(\bbar{\textbf{H}}^{\text{off}})$, which reduces to Case1. Theorems~\ref{th:1}, \ref{th:2}, and \ref{th:3} tells that $\frac{\partial \textrm{Loss}^{\text{grad}}(\bbar{\textbf{g}})}{\partial \textbf{x}_d} = \textbf{0}$,  $\frac{\partial \textrm{Loss}^{\text{diag}}(\bbar{\textbf{H}}^{\text{diag}})}{\partial \textbf{x}_{d}} = \textbf{0}$, $\frac{\partial L_d^{\rm linear}(\bbar{\textbf{H}}_{d,:}^{\text{off}})}{\partial \textbf{x}_{d}}= \textbf{0}$.
	\end{proof}
	
	\section{Proof of Discussion: the third and higher-order derivatives of the sigmoid function have small strengths when the classification is confident}\label{appendix:sigmoid}
	
	In this section, we prove that the third and higher-order derivatives of the sigmoid function have small strengths when the classification is confident, which is mentioned in the discussion in the main paper on applying the same Taylor series expansion to all samples.
	
	
	\begin{proof}
		Let us take the loss function as $L= -\log \textrm{Sigmoid}(z) = - \log \frac{e^z}{1+e^z} = -z + \log (1+e^z)$ for example, where $z \in \mathbb{R}$ is the output of the network. Let $L^{(m)}(z) = \frac{d^m L}{dz}(z)$ denote the $m$-th derivative of $L$ $w.r.t.$ $z$.
		We use the mathematical induction to prove that the highest order term of $L^{(m)}(z)$ is $\frac{{(-1)}^{m} e^{(m-1)z}}{(1+e^z)^m}$, thus we can write $L^{(m)}(z)$ as $L^{(m)}(z) = \frac{\sum_{k=1}^{m-2} a_{k}^{(m)} e^{k\cdot z} + {(-1)}^{m} e^{(m-1)z}}{(1+e^z)^m}$, where $a_{k}^{(m)}$ is the coefficient of $e^{k\cdot z}$ in  $L^{(m)}(z)$.

		\textit{Base case}: When $m = 1$,
		
		\begin{align}
			&L^{\prime}(z) =  \frac{-1}{1+e^z} = \frac{(-1)^{1}e^{(1-1)z}}{(1+e^z)^1}, \\
		\end{align}
		
		Actually, cases for $m = 2,3,4$ can also be verified directly,
		
		\begin{align}
			&L^{\prime \prime}(z) = \frac{e^z}{(1+e^z)^2},\\
			&L^{(3)}(z) = \frac{e^z-e^{2z}}{(1+e^z)^3},\\
			&L^{(4)}(z) = \frac{e^z-4e^{2z}+e^{3z}}{(1+e^z)^4}.
		\end{align}
		
		\textit{Inductive step}: Assuming that the highest order term of $L^{(m)}(z)$ is $\frac{{(-1)}^{m} e^{(m-1)z}}{(1+e^z)^m}$, $L^{(m)}(z) = \frac{\sum_{k=1}^{m-2} a_{k}^{(m)} e^{k\cdot z} + {(-1)}^{m} e^{(m-1)z}}{(1+e^z)^m}$. Then
		the derivative of $ \frac{{(-1)}^{m}e^{(m-1)z}}{(1+e^z)^m}$ gives the highest order term of $L^{(m+1)}(z)$,
		
		\begin{equation*}
			\begin{aligned}
				& \qquad   \frac{d}{dz}\left({(-1)}^{m} \cdot \frac{ e^{(m-1)z}}{(1+e^z)^m}\right) \\
				&=  {(-1)}^{m} \cdot \frac{ \left(e^{(m-1)z}\cdot(m-1)\cdot(1+e^z)^m - e^{(m-1)z}\cdot m\cdot (1+e^z)^{m-1}e^z\right)}{(1+e^z)^{2m}}\\
				&= {(-1)}^{m} \cdot \frac{e^{(m-1)z}\cdot(m-1)\cdot(1+e^z) - e^{(m-1)z}\cdot m\cdot e^z}{(1+e^z)^{m+1}}\\
				&= {(-1)}^{m} \cdot \frac{(m-1)\cdot e^{(m-1)z} + (m-1)\cdot e^{mz} -  m\cdot e^{mz}}{(1+e^z)^{m+1}}\\
				&= {(-1)}^{m} \cdot \frac{(m-1)\cdot e^{(m-1)z}  - e^{mz}}{(1+e^z)^{m+1}}\\
				&= \frac{{(-1)}^{m} \cdot (m-1)\cdot e^{(m-1)z}}{(1+e^z)^{m+1}} + \frac{{(-1)}^{m+1} \cdot  e^{mz}}{(1+e^z)^{m+1}}.
			\end{aligned}
		\end{equation*}
		
		\textit{Conclusion}: Since both the base case and the inductive step have been proven to be true, we have proved that the highest order term of $L^{(m)}(z)$ is $\frac{{(-1)}^{m} e^{(m-1)z}}{(1+e^z)^m}$,  $L^{(m)}(z)$ can be written as $L^{(m)}(z) = \frac{\sum_{k=1}^{m-2} a_{k}^{(m)} e^{k\cdot z} + {(-1)}^{m} e^{(m-1)z}}{(1+e^z)^m}$.
		
		Further, $\lim\limits_{z \rightarrow + \infty} \frac{\sum_{k=1}^{m-2} a_{k}^{(m)} e^{k\cdot z} + {(-1)}^{m} e^{(m-1)z}}{(1+e^z)^m} =0 $ and $\lim\limits_{z \rightarrow  -\infty} \frac{\sum_{k=1}^{m-2} a_{k}^{(m)} e^{k\cdot z} + {(-1)}^{m} e^{(m-1)z}}{(1+e^z)^m} =0 $ mean that, when $z \in \mathbb{R}$ is far away from zero, each $m$-th derivative  $L^{(m)}(z)$ has a small absolute value.
		
		In this way,
		when the classification is confident, the output of the network $z \in \mathbb{R}$ would be far away from zero, thereby the third and higher-order derivatives of the sigmoid function have small strengths according to above analysis.
	\end{proof}

	\section{Proof of Discussion: our conclusions can be extended to DNNs having no second derivatives or zero second derivatives}\label{appendix:zero}
	
	In this section, we prove that if the neural network has no second derivations or has zero second derivatives, all our conclusions are still valid for the equivalent Hessian matrix.
	
	\begin{proof}
		(1) Firstly, the finite difference method~\citep{2009Scientific} it a classic method to compute derivatives numerically, even for functions which are not differentiable. Thus, we can compute second derivatives by the finite difference method to get an equivalent Hessian matrix.
		
		(2) Secondly, all proofs of theorems in this paper are not involved with how we compute the Hessian matrix. Thus all our conclusions are valid for the equivalent Hessian matrix. In fact, the proof is the same as Section~\ref{appendix:lemma} and ~\ref{appendix:theorem}.
	\end{proof}

	\section{Proof of the reason of blindness}\label{appendix:reason}

	In this section, we discuss the reason for the blindness of the BN operation. According to Equations~(\ref{eq:bn1}) and (\ref{eq:bn2}), the BN operation contains two phases, the affine transformation phase and the standardization phase. We prove that derivatives of $\boldsymbol{\mu}$ and $\boldsymbol{\sigma}$ in the standardization phase eliminate the influence of $\textrm{Loss}^{\text{grad}}(\textbf{g}),\textrm{Loss}^{\text{diag}}(\textbf{H}^{\text{diag}})$, and $L_d^{\text{linear}}(\textbf{H}_{d,:}^{\text{off}})$, as follows.
	
	\begin{proof}
		As we can see in the proofs for Theorems~\ref{th:1}, \ref{th:2}, \ref{th:3},  and Corollary~\ref{co:1},  $\textrm{Loss}^{\text{grad}}(\textbf{g}),\textrm{Loss}^{\text{diag}}(\textbf{H}^{\text{diag}})$, and $L_d^{\text{linear}}(\textbf{H}_{d,:}^{\text{off}})$ still have influence on each $\textbf{y}_d$, their gradients $w.r.t.$ $\textbf{y}_d$ are not zero. However, the gradient of each loss term $w.r.t.$ $\textbf{x}_d$ become zero, which is resulted by the elimination effect of the Jacobian matrix $\textbf{J}_d  = \frac{1}{\boldsymbol{\sigma}_d}\left(I-\frac{1}{n}\textbf{1}_n\textbf{1}_n^{\top} - \frac{1}{n \boldsymbol{\sigma}_d^2 }\left(\textbf{x}_{d} - \boldsymbol{\mu}_d\textbf{1}_n\right)\left(\textbf{x}_{d} - \boldsymbol{\mu}_d\textbf{1}_n\right)^{\top} \right) \in \mathbb{R}^{n \times n}$. Specifically, let \textit{loss} denote any one of the above loss terms, we have
		
		\begin{equation*}
			\begin{aligned}
				\frac{\partial \textit{loss}}{\partial \textbf{x}_{d}} = = \frac{\partial \textbf{y}_d^{\top}}{\partial \textbf{x}_{d}}\frac{\partial \textit{loss} }{\partial \textbf{y}_{d}}= \textbf{J}_d\frac{\partial \textit{loss} }{\partial \textbf{y}_{d}},
			\end{aligned}
		\end{equation*}
		
		For $\textrm{Loss}^{\text{grad}}(\textbf{g})$, $\frac{\partial \textrm{Loss}^{\text{grad}}(\textbf{g})}{\partial \textbf{y}_d}
		= \textbf{g}_d \textbf{1}_n \in \mathbb{R}^{n}$. $\textbf{J}_d \cdot \textbf{1}_n = \textbf{0}$, thereby  $\frac{\partial \textrm{Loss}^{\text{grad}}(\textbf{g})}{\partial \textbf{x}_d}
		= \textbf{0}$;
		for $\textrm{Loss}^{\text{diag}}(\textbf{H}^{\text{diag}})$, $ \frac{\partial \textrm{Loss}^{\text{diag}}(\textbf{H}^{\text{diag}})}{\partial \textbf{y}_d} = \textbf{H}_{d, d}^{\text{diag}} \left(\textbf{y}_d - \tilde{\textrm{y}}_d \textbf{1}_n\right)$. $\textbf{J}_d \cdot \textbf{1}_n = \textbf{0}$ and $\textbf{J}_d \cdot \textbf{y}_d = \textbf{0}$, thereby $ \frac{\partial \textrm{Loss}^{\text{diag}}(\textbf{H}^{\text{diag}})}{\partial \textbf{x}_d} = \textbf{0}$;
		for $L_d^{\text{linear}}(\textbf{H}_{d,:}^{\text{off}})$,  the gradient  $w.r.t.$ $\textbf{x}_d$ becoming zero is also caused by the effect that $\textbf{J}_d \cdot \textbf{y}_d = \textbf{0}$.
		
		However, as we have shown in the proof of Corollary~\ref{co:1},  when the means and variances are fixed,  the Jacobian matrix $\textbf{J}_d = \frac{\partial \textbf{y}_d^{\top}}{\partial \textbf{x}_{d}}$ reduces to an identity matrix, which would not eliminate each $\frac{\partial \textit{loss} }{\partial \textbf{y}_{d}}$ to zero. Therefore,  we can attribute the blindnesss problem  to derivatives of $\boldsymbol{\mu}$ and $\boldsymbol{\sigma}$ in the standardization phase.
		
	\end{proof}

	\section{More experimental settings and results}\label{appendix:exp}
	
	\begin{table}[t]
		\centering
		\caption{Architectures of the revised AlexNets for experimental verification of Theorem~\ref{th:3}.}
		\vspace{1pt}
		\label{tab:alexnet}
		\resizebox{0.55\linewidth}{!}{\begin{tabular}{cccc}
				\toprule
				Traditional AlexNet & AlexNet-1     &  AlexNet-2 &  AlexNet-3 \\
				\midrule
				Conv & Conv & Conv & Conv\\
				ReLU & ReLU &  ReLU & ReLU \\
				MaxPool & MaxPool & MaxPool & MaxPool \\
				Conv & Conv & Conv & Conv\\
				ReLU & ReLU &  ReLU & ReLU \\
				MaxPool & MaxPool & MaxPool & MaxPool \\
				Conv& Conv & Conv & Conv\\
				ReLU & ReLU &  ReLU & ReLU \\
				Conv& Conv & Conv & Conv\\
				ReLU & ReLU &  ReLU & ReLU \\
				Conv& Conv & Conv & Conv\\
				ReLU & ReLU &  ReLU & ReLU \\
				MaxPool & MaxPool & MaxPool & MaxPool \\
				FC & FC & FC & FC \\
				ReLU, dropout & ReLU &  ReLU & ReLU \\
				FC & FC & FC & FC \\
				ReLU, dropout & ReLU &  ReLU & ReLU \\
				FC & FC & FC & FC \\
				& ReLU &  ReLU & ReLU \\
				& FC & FC & FC \\
				& ReLU &  ReLU & ReLU \\
				& FC & FC & FC \\
				& ReLU &  ReLU & ReLU \\
				& FC & FC & \textbf{BN} \\
				& ReLU &  ReLU & FC \\
				& FC & \textbf{BN} &  ReLU\\
				& ReLU & FC  &  FC\\
				& \textbf{BN} & ReLU & ReLU \\
				& FC & FC & FC \\
				\bottomrule
		\end{tabular}}
		\vspace{-5pt}
	\end{table}
	
	\subsection{Architectures of the revised AlexNets and the revised LeNets for experimental verification of Theorem~\ref{th:3}}
	
	This subsection provides more experimental details about the verification of Theorem~\ref{th:3} in Section 4 in the paper. In order to comprehensively test the BN on different layers, we revised the AlexNet~\cite{krizhevsky2012imagenet} by adding five additional FC layers before the top FC layer, and revised the LeNet~\cite{lecun1989backpropagation} by adding seven additional FC layers before the top FC layer. Each added FC layer contained 20 neurons and followed by a ReLU layer. For each DNN, we added a BN layer (where $\varepsilon=1e-20$) before the 1st, 2nd, and 3rd top FC layers, respectively, to construct \textit{AlexNet-1, AlexNet-2, AlexNet-3} and \textit{LeNet-1, LeNet-2, LeNet-3}, as shown in Tables~\ref{tab:alexnet} and~\ref{tab:lenet}. All DNNs were trained on the MNIST dataset~\cite{lecun1998gradient}.
	
	Note that accept for adding additional FC layers on the top of the traditional AlexNet and the traditional LeNet, we also made the following revisions to improve the training efficiency of the above DNNs. For the revised architectures of AlexNe, we removed the dropout operation to facilitate the calculation of the Hessian matrix~\cite{cohen2020gradient}. For the revised architectures of LeNet, we replaced all Sigmoid functions with currently-widely-used ReLU functions, and replaced all AveragePool operations with currently-widely-used MaxPool operations.

	\begin{table}[t]
		\centering
		\caption{Architectures of the revised LeNets for experimental verification of Theorem~\ref{th:3}.}
		\vspace{1pt}
		\label{tab:lenet}
		\resizebox{0.5\linewidth}{!}{\begin{tabular}{cccc}
				\toprule
				Traditional LeNet & LeNet-1     &  LeNet-2 &  LeNet-3 \\
				\midrule
				Conv & Conv & Conv & Conv\\
				Sigmoid & ReLU &  ReLU & ReLU \\
				AveragePool & MaxPool & MaxPool & MaxPool \\
				Conv& Conv & Conv & Conv\\
				Sigmoid & ReLU &  ReLU & ReLU \\
				AveragePool & MaxPool & MaxPool & MaxPool \\
				FC & FC & FC & FC \\
				Sigmoid & ReLU &  ReLU & ReLU \\
				FC & FC & FC & FC \\
				Sigmoid & ReLU &  ReLU & ReLU \\
				FC & FC & FC & FC \\
				& ReLU &  ReLU & ReLU \\
				& FC & FC & FC \\
				& ReLU &  ReLU & ReLU \\
				& FC & FC & FC \\
				& ReLU &  ReLU & ReLU \\
				& FC & FC & FC \\
				& ReLU &  ReLU & ReLU \\
				& FC & FC & FC \\
				& ReLU &  ReLU & ReLU \\
				& FC & FC & \textbf{BN} \\
				& ReLU &  ReLU & FC \\
				& FC & \textbf{BN} &  ReLU\\
				& ReLU & FC  &  FC\\
				& \textbf{BN} & ReLU & ReLU \\
				& FC & FC & FC \\
				\bottomrule
		\end{tabular}}
		\vspace{-5pt}
	\end{table}

	\subsection{Details about how to inverse features in RealNVP-LN}
	
	This subsection provides details about how to inverse features in RealNVP-LN, which was constructed by replacing all BN operations with LN operations in the traditional architecture of RealNVP. Given an input image, the traditional RealNVP (\emph{i.e.}, the RealNVP-BN in the paper) contained two learning phases, the forward phase that outputted a latent variable and predicted the log-likelihood of the input image estimated by the model, and the inverse phase that inversed the input image from a output variable of the forward phase. In the inverse phase of the RealNVP-BN, the BN operation depended on a population mean and a population variance to inverse the normalized features to features before the BN operation. The population mean and the population variance were computed based on paremeters $\mathbf{\mu}$ and $\mathbf{\sigma}$ of all mini-batches, thereby completely representing all mini-batches. In order to inverse features in RealNVP-LN, each time we performed the LN operation, we performed the BN operation in parallel, and updated a population mean and a population variance after the conduction of the BN operation. Then, in the inverse phase of the RealNVP-LN, we made the LN operation depend on such population mean and such population variance of the BN operaiton.
	

	\subsection{More RealNVP models with various revised architectures}

	This subsection provides results on more RealNVP models with various revised architectures, which also yielded the similar conclusion as the conclusion in the paper, \emph{i.e.}, the RealNVP-BN could not significantly distinguish real images and interpolated images. Specifically, we revised the traditional RealNVP architecture to obtain two different versions of RealNVP, \emph{i.e.}, a shallow one (termed RealNVP-shallow) and a narrow one (termed RealNVP-narrow). The RealNVP-shallow was constructed by reducing the number of residual blocks in each residual module to 1/2 of that of the traditional RealNVP, and the RealNVP-narrow was constructed by reducing the number of feature maps of each residual block to 1/2 of that of the traditional RealNVP. For the RealNVP-shallow and the RealNVP-narrow, we also constructed the DNN-BN and the DNN-LN. According to Figure~\ref{fig:app_flow}, we could obtain the same conclusion that the RealNVP-shallow-LN and the RealNVP-narrow-LN usually assigned much higher log-likelihood with real images (\emph{i.e.}, images at the points of $\alpha=0$ and $\alpha=1$) than interpolated images. In comparison, the RealNVP-shallow-BN and the RealNVP-narrow-BN could not significantly distinguish real images and interpolated images.

	\begin{figure}
		\begin{center}
			\includegraphics[width=\linewidth]{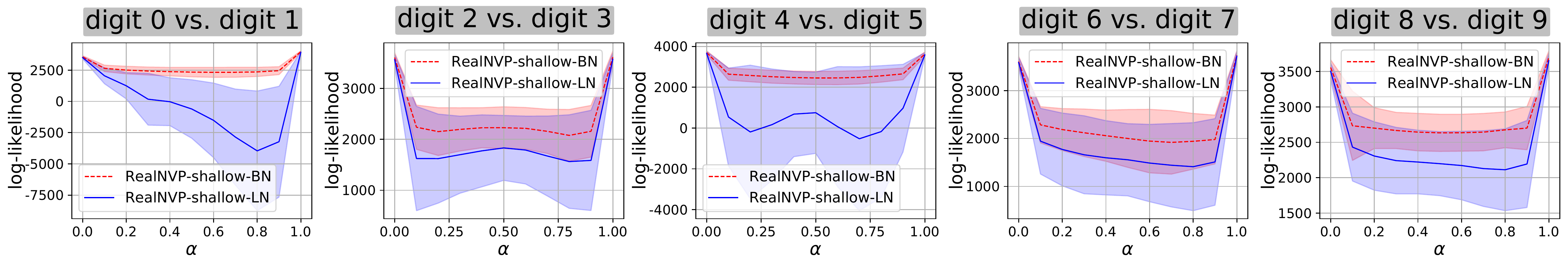}
			\includegraphics[width=\linewidth]{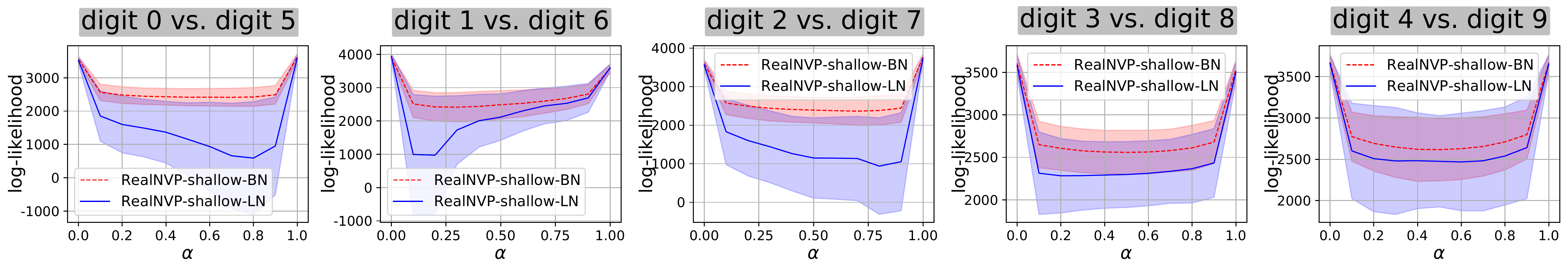}
			\includegraphics[width=\linewidth]{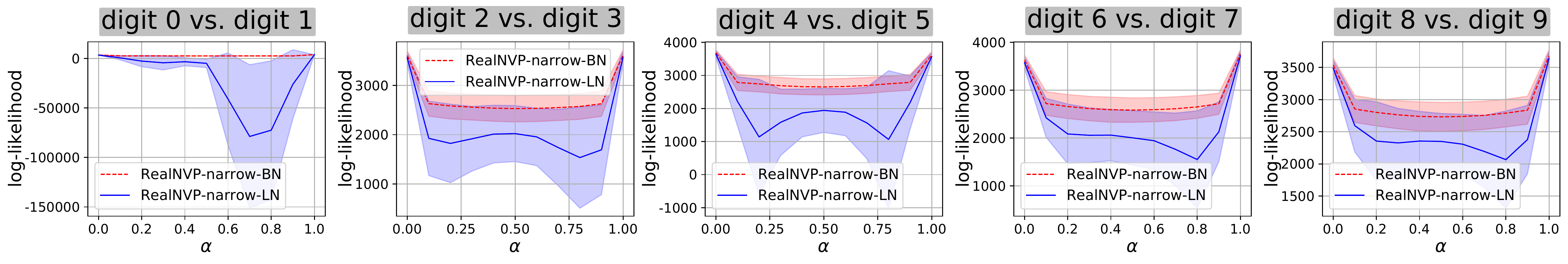}
			\includegraphics[width=\linewidth]{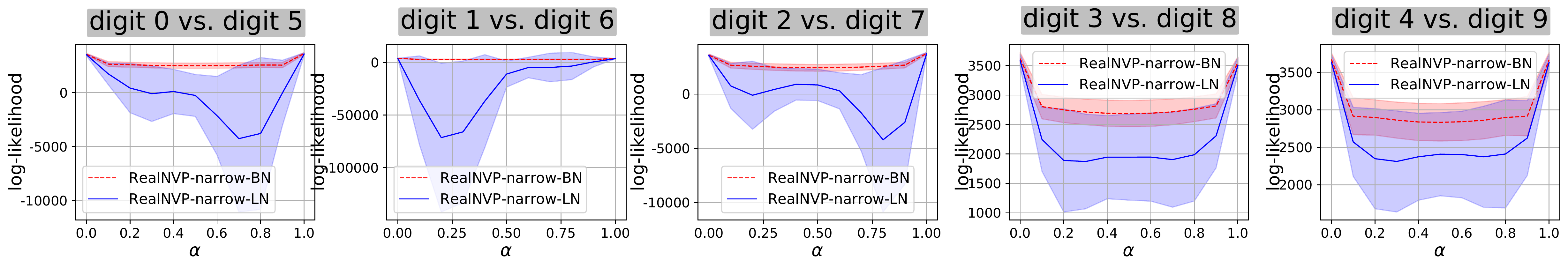}
		\end{center}
		\caption{Log-likelihood of real images (at $\alpha=0$ and $\alpha=1$) and interpolated images generated by the RealNVP-shallow-BN, RealNVP-shallow-LN, RealNVP-narrow-BN, and RealNVP-narrow-LN. The shaded area represents the standard deviation.}
		\vspace{-5pt}
		\label{fig:app_flow}
	\end{figure}

\end{document}